\documentclass{article}

\usepackage{mathpazo}
\usepackage[round,authoryear]{natbib}
\usepackage[a4paper,left=0.83in,right=0.83in,top=0.94in,bottom=0.84in]{geometry}

\usepackage[utf8]{inputenc}
\usepackage[T1]{fontenc}
\usepackage{microtype}
\usepackage{graphicx}
\usepackage{subcaption}
\usepackage{booktabs}
\usepackage{multirow}
\usepackage{array}
\usepackage{tabularx}
\usepackage{arydshln}
\usepackage[table]{xcolor}
\usepackage{nicefrac}
\usepackage{xspace}
\usepackage{textcomp}

\usepackage{amsmath}
\usepackage{amssymb}
\usepackage{mathtools}
\usepackage{amsthm}
\usepackage{bm}

\usepackage[most]{tcolorbox}
\usepackage{xurl}
\usepackage{float}
\usepackage{wrapfig}
\usepackage{needspace}
\newcommand{\keepcompacttable}{\Needspace{22\baselineskip}}
\usepackage{algorithm}
\usepackage{algorithmic}
\usepackage{kdmlab-manuscript}

\renewcommand{\KDMlabmark}{%
  \includegraphics[height=0.75in]{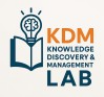}%
}
\renewcommand{\LCSmark}{%
  \includegraphics[width=1.39in]{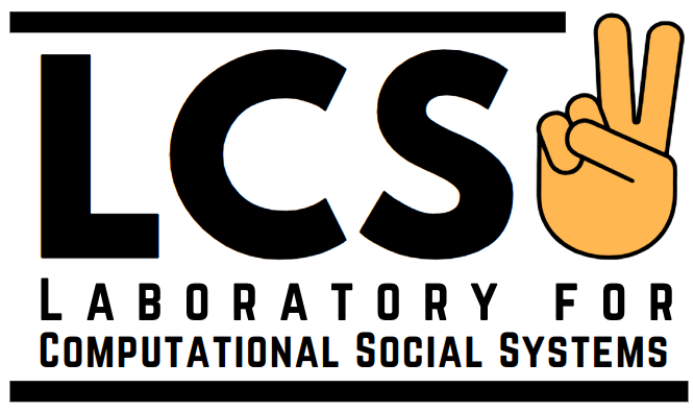}%
}
\renewcommand{\KDMidentitycard}[3]{%
  \begin{minipage}[c][1.18in][c]{\linewidth}
    \centering
    #2\par\vspace{0.10in}%
    {\sffamily\footnotesize\color{KDMGray}#1\par}%
  \end{minipage}%
}

\usepackage{hyperref}
\hypersetup{colorlinks=true,linkcolor=KDMTeal,citecolor=KDMTeal,urlcolor=KDMTeal,
  pdftitle={Action-On-Item Preference Flow: A Shared Event Schema for Predictive and Generative Personalization},
  pdfauthor={Parthiv Chatterjee, Kashish Kanjaria, Vashisth Purani, Sourish Dasgupta, Tanmoy Chakraborty}}
\usepackage[section]{placeins}
\usepackage[capitalize,noabbrev]{cleveref}

\theoremstyle{plain}
\newtheorem{theorem}{Theorem}[section]
\newtheorem{proposition}[theorem]{Proposition}
\newtheorem{lemma}[theorem]{Lemma}
\newtheorem{corollary}[theorem]{Corollary}
\theoremstyle{definition}

\theoremstyle{remark}

\newcommand{\pertide}{\texttt{PerTIDE}\xspace}

\providecommand{\RETURN}{\STATE \textbf{return }}

\title{Action-On-Item Preference Flow: A Shared Event Schema for Predictive and Generative Personalization}
\author{%
  Parthiv Chatterjee\textsuperscript{1} \quad
  Kashish Kanjaria\textsuperscript{1} \quad
  Vashisth Purani\textsuperscript{2}\\[0.35em]
  Sourish Dasgupta\textsuperscript{1} \quad
  Tanmoy Chakraborty\textsuperscript{3}%
}

\begin{document}
\addtocontents{toc}{\protect\setcounter{tocdepth}{-1}}
% \maketitle

\maketitle

\begin{tcolorbox}[
  colback=KDMWash, colframe=KDMRule, boxrule=0.35pt,
  arc=0pt, left=8pt, right=8pt, top=5pt, bottom=5pt
]
\footnotesize\textbf{Version note.} This author-prepared archival manuscript presents
work accepted to NeurIPS 2026. The discussion and interpretation are expanded
in this version; the research content, tables, and reported results are
preserved from the accepted manuscript source.
\end{tcolorbox}
\vspace{0.05in}

\begin{abstract}
A user's movie, news, and dialogue histories differ in their native actions and outputs, yet each interaction supplies evidence that can update user memory. We study whether these histories can train one reusable update mechanism. An action-on-item schema pairs a mapped interaction role with a content embedding, allowing shared update parameters to operate on separate user states. We establish invariance to native relabeling, bounded state changes under item-embedding perturbations, and a pooled-training bound under explicit compatibility conditions. The Multi-Timescale State Hypothesis (MTSH) specifies how this evidence enters, persists, and is consumed; \pertide{} implements it with action gating, three state-space traces, fusion, and command-conditioned readout. On PENS, the same history encoder supports both next-news prediction and personalized headline generation. In a controlled PENS-to-MovieLens experiment, a frozen source-trained core exceeds an identically structured random core by 15.23 MRR points after fitting the same target consumer. On MIND, \pertide{} retains a 4.12-point MRR advantage over a same-input three-branch state-space control. Action, readout, and trace interventions identify complementary contributions to these gains. Together, the theory and experiments support learning history updates across compatible sources and reusing them through predictive and generative consumers.
\end{abstract}

\section{Introduction}
\label{sec:intro}

Personalization systems use a user's interaction history to recommend or generate content. A user may watch or skip movies, click or ignore news, request headlines or summaries, or respond in dialogue. These behaviors differ in their action vocabularies, item types, temporal structures, and downstream objectives. Each interaction nevertheless provides evidence about an evolving user and can change what the system retains. We study whether their common form, an action on an item, allows one history encoder to learn from diverse trajectories and reuse its learned updates across predictive and generative tasks.

Suppose Alice watches a political thriller and a television series with similar themes, while Bob watches the series but skips the film. A common encoder can learn how an action modulates the content written to memory, how that evidence persists, and how a later task uses it. Alice and Bob share these operations, while their histories determine separate states. At a fixed timestep, holding the history, action, and command constant lets us ask a precise question: how much does the update change when the film is replaced by a similar series or by unrelated content?

Prior work provides several forms of sharing. P5 and NEO express multiple recommendation or user-understanding tasks through a common language interface~\citep{p5-recsys-2022,denadai2026neo}. UniSRec and RecGURU learn transferable item or sequence representations~\citep{unisrec-kdd-2022,recguru-wsdm-2022}, and LaMP and Persona-Plug condition generation on retrieved histories or user embeddings~\citep{lamp-2024-lamp,liu-etal-2025-llms}. These capabilities establish reuse at the task or representation level. Successful reuse at those levels alone does not establish how an evolving state responds to controlled changes in an individual action or item. We examine that response, its formal stability, and whether the learned update operations remain useful through different consumers.

Using the action-on-item interface of the user interaction graph (UIG)~\citep{peraugy-tmlr-2025}, we map native interactions to action roles and content embeddings. This abstraction enables one update mechanism to learn from different native histories. It has three precise properties: identical mapped events induce identical state evolution from equal initial states and commands; small item-embedding changes produce bounded state changes under stability and readout conditions; and compatible domains admit a pooled-training bound with explicit distribution and label-semantic discrepancies. An item can therefore change category without requiring a new update rule: its role and content representation determine how it enters the shared mechanism. The stability result describes that response in the supplied embedding geometry, while the pooling result specifies when evidence from different sources can support learning together.

We formulate the \textbf{Multi-Timescale State Hypothesis (MTSH)} to separate what enters memory, how it persists, and how a task uses the retained history. \pertide{} (\textbf{Per}sonalization via multi-\textbf{T}imescale \textbf{I}nteraction \textbf{D}rift \textbf{E}ncoder) first applies a learned action gate to the item embedding. Three state-space traces retain the resulting evidence with long-term, short-term, and episodic biases. Adaptive fusion combines their contributions, with a residual path carrying the current event directly. A command-conditioned readout then forms a task-ready user representation for a predictive head or a prefix-conditioned text decoder. Sharing these operations lets the encoder learn from pooled event trajectories while each user maintains a history-specific state. The traces provide different retention choices for the same evidence; their combinations are tested through temporal probes and ablations.

The results identify both reusable learning and the contributions of its components. On PENS, the same history encoder supports next-news prediction and personalized headline generation. In a controlled PENS-to-MovieLens experiment, a frozen source-trained core exceeds an identically structured random core by \textbf{15.23 MRR points} after fitting the same target consumer. Source training therefore supplies useful structure that target-head fitting alone does not recover from the random core; the low score with an untrained consumer locates the consumer's role in accessing that structure. On MIND, \pertide{} retains a \textbf{4.12-point MRR advantage} over a same-input three-branch SSM control, supporting the additional retention design. The trace ablations reveal why isolated strength is insufficient for judging a component: E alone reaches only 24.18 MRR, while adding E to L+S raises MRR from 41.26 to 46.65. The same addition raises PENS PSE-METEOR from 0.182 to 0.355. A weak isolated trace can thus provide useful complementary evidence to the fused state.

\textbf{Contributions.} We make three contributions. \emph{(i) A formal account of history-update reuse} establishes native-label invariance, item-embedding stability, and compatible pooled learning on the existing action-on-item interface. \emph{(ii) MTSH and its \pertide{} implementation} organize action-dependent entry, complementary retention, fusion, and command-conditioned consumption into a shared encoder. \emph{(iii) Controlled evidence of learned reuse} connects predictive and generative consumers, source-frozen domain transfer, same-input comparisons, and mechanism interventions. Together, these contributions explain what is shared, how the shared operations are realized, and where the reported evidence supports their usefulness.

\section{Related Work}
\label{sec:related_work}

\textbf{Sharing tasks and representations.}
P5 unifies recommendation tasks through language modeling, and NEO combines recommendation, search, and user understanding through language-steered generation~\citep{p5-recsys-2022,denadai2026neo}. Multi-behavior and multi-task methods share parameters or objectives across related signals~\citep{nmtr-icde-2019,mmoe-kdd-2018}. RecGURU aligns user representations adversarially, UniSRec transfers item and sequence representations, and GenCDR combines universal and domain-specific representations~\citep{recguru-wsdm-2022,unisrec-kdd-2022,hu2026gencdr}. We build on this established reuse. Our question concerns the action-dependent update applied to each event, its stability under item substitution, and the retention and readout operations learned on the common interface. UniSRec and RecGURU provide direct empirical comparisons for transfer.

\textbf{Temporal history encoders.}
News and sequential recommendation use profile pooling, recurrence, attention, and state-space models~\citep{naml,ebnr,nrms,gru4rec-2016,bert4rec-cikm-2019,mamba4rec-arxiv-2024}. Long/short-term models, time-aware attention, and multi-interest encoders already represent persistent interests and recent intent~\citep{lstur-acl-2019,yu-ijcai-2019,tisasrec-wsdm-2020,comirec-kdd-2020}. PRIME and Reflective Memory Management organize episodic, semantic, or dialogue memory, while generalized user representations aggregate behavior across temporal windows~\citep{zhang-etal-2025-prime,tan-etal-2025-prospect,fazelnia2025generalized}. Building on temporal modeling, MTSH separates event entry, persistence, fusion, and task consumption on one action-on-item interface; a same-input three-branch control tests the additional value of its retention parameterization.

\textbf{Personalized generation and event construction.}
Generation can use retrieved profile evidence, compact user embeddings, or user-specific parameters, as in LaMP, Persona-Plug, and OPPU~\citep{lamp-2024-lamp,liu-etal-2025-llms,tan-etal-2024-democratizing}. PENS and GTP study personalized news generation~\citep{pens-acl-2021,gtp-tacl-2023}. UIG and PerAugy supply the event representation and trajectory augmentation used here~\citep{peraugy-tmlr-2025}; they are precursors to the present encoder. \pertide{} learns shared update parameters and supplies a command-adapted history state to predictive heads or a prefix-conditioned decoder. Reusing that learned core is evaluated separately from training a consumer for a new output space.

\section{Action-On-Item Preference Flow}
\label{sec:action_item_preference_trajectories}

\subsection{A Shared Event Interface}
\label{subsec:action_item_interface}
For dataset $D$, a deterministic mapping $\rho_D$ assigns native interactions to positive, negative, request, or generated-response transitions. The last two distinguish the sides of a command-like interaction. Each event is $b_u^{(t)}=\langle a^{(t)},tl^{(t)}\rangle$, where $a^{(t)}$ is the mapped role and $tl^{(t)}$ is the item or response acted upon. Its embedding $\mathbf e_{tl}^{(t)}\in\mathbb R^d$ supplies the continuous content representation. The concrete task is specified separately by command seed $\mathbf g_q$. Roles are supplied by preprocessing; the encoder learns how each role modulates content.

UIG represents these events as action-labeled edges among user, content, and response nodes~\citep{peraugy-tmlr-2025}. A user's ordered path gives a trajectory $\tau_u$, consumed as a sequence of compact interaction blocks. Explicit clicks and non-clicks map directly to roles; ratings and dialogue stance require derived labels. Appendix~\ref{app:UIG Construction} records those construction rules and their different provenance.

\textbf{What remains shared when the item changes?}
Fix Alice's history, watch action, command, and current timestep. Replace the current movie first by a similar movie, then by a series, and finally by unrelated content. The update consumes the role and embedding, rather than the native item type. For equal initial states and commands, identical mapped trajectories give identical states (Appendix~\ref{app:schema_generalizability}, Proposition~\ref{thm: generalization - Invariance to Native Action Relabeling}). For the trace update $\mathbf m_t^r=A_r\mathbf m_{t-1}^r+B[\mu(a_t)\odot\mathbf e_t]$, replacing only that item changes its contribution at a later lag $k$ by at most
\begin{equation}\label{eq:item-substitution-main}
\|\Delta\mathbf m_{t+k}^r\|_2\leq\|B\|_2\beta_r^k\|\mathbf e_t-\mathbf e'_t\|_2,\qquad \|A_r\|_2\leq\beta_r<1,
\end{equation}
when subsequent inputs agree. The complete-state bound additionally accounts for fusion, the current-input residual, and recurrent readout (Appendix~\ref{app:schema_generalizability}, Proposition~\ref{thm: generalization - Stability Under Target-Locus Embedding Shifts}). Thus a similar item can inherit a similar update without a hand-built taxonomy of movies, series, and other content. For fixed history, action, and command, halving the embedding perturbation halves this upper bound. A movie-to-series replacement uses the same operations as a movie-to-movie replacement; embedding distance governs the guarantee. More distant items permit a larger change without forcing one.

\textbf{Why pool different trajectories?}
Alice's movie and series histories, and Bob's histories, train the same role-conditioned update parameters while maintaining separate user states. The pooled-training result in Appendix~\ref{app:schema_generalizability} decomposes target risk into pooled empirical risk, an estimation term, schema-distribution distance, and a discrepancy in labels or command semantics. When distributions and labels are compatible, examples from every participating domain contribute to the estimation bound for one shared update rule. The explicit mismatch terms identify what must remain comparable for that gain to carry to a target domain. Under matching schema distributions and conditional label semantics, a common model-class complexity bound $C/\sqrt n$ yields an estimation allowance that scales with $1/\sqrt N$ for $N$ independent pooled examples (Corollary~\ref{cor:schema-compatible-pooling}). Compatible histories can therefore contribute to estimating the shared rule even when they come from another native domain.

\paragraph{From History States To Preference Flow.}
\label{subsec:history_states_to_flow}
Profile aggregation (PAH) discards order, Markovian drift (MDH) carries the prefix through a recurrent state, and long-horizon dynamics (LDH) models learned dependencies over earlier events. These are overlapping design families, not mutually exclusive laws of user behavior. The action-on-item interface poses a further design question within any family: \textit{what should be written, how should it persist, and how should a task consume it?}

\subsection{Multi-Timescale State Hypothesis}
\label{subsec:mtsh}
MTSH retains action-conditioned evidence through complementary temporal traces before task consumption. Long-term ($L$), short-term ($S$), and episodic ($E$) describe intended slow, recent, and event-related retention. Their usefulness and differentiation require empirical tests. With entry $\Psi$, persistence $\mathcal P_r$, fusion $\Phi$, routing $\Pi$, and readout $G$, the implemented decomposition is
\begin{equation}\label{eq:mtsh}
\begin{aligned}
\mathbf b_u^{(t)}&=\Psi(\mathbf e_{tl}^{(t)},a^{(t)}),\qquad
\mathbf m_u^{(t,r)}=\mathcal P_r(\mathbf m_u^{(t-1,r)},\mathbf b_u^{(t)}),\\
\mathbf h_u^{(t)}&=\Phi(\mathbf m_u^{(t,L)},\mathbf m_u^{(t,S)},\mathbf m_u^{(t,E)},\mathbf b_u^{(t)}),\\
\boldsymbol\pi_u^{(t)}&=\Pi(\mathbf g_q,\mathbf z_u^{(t-1)}),\qquad
\mathbf z_u^{(t)}=G(\boldsymbol\pi_u^{(t)},\mathbf h_u^{(t)}).
\end{aligned}
\end{equation}
The current event can reach fusion through a residual path, and routing uses the preceding task-ready state. Section~\ref{sec:PerTIDE} specifies each operator. Sharing these operators leaves both the retained history and its command-adapted output specific to the user.

\section{\pertide{}: Instantiating MTSH}
\label{sec:PerTIDE}

\begin{figure}[t]
    \centering
    \includegraphics[width=0.80\linewidth]{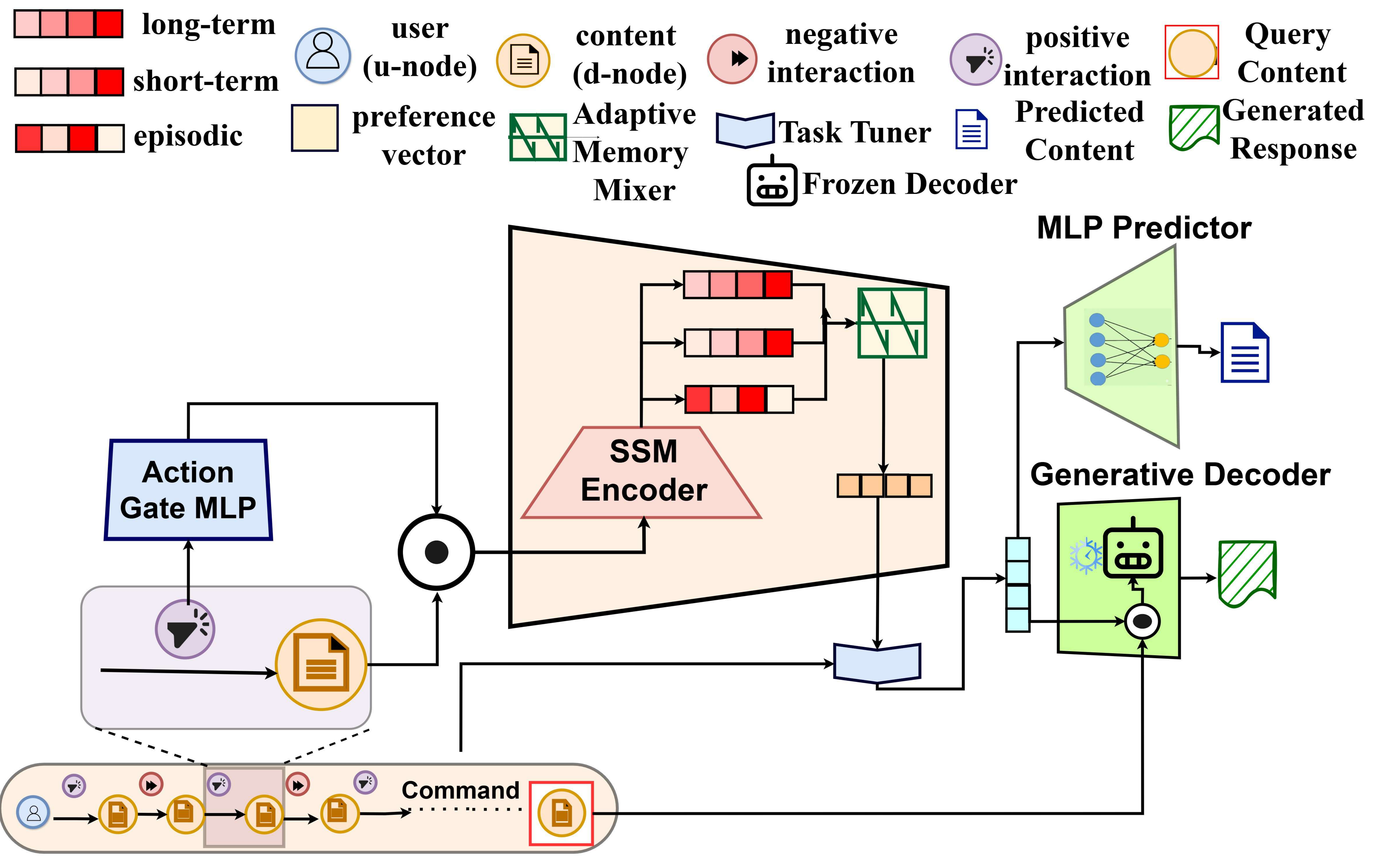}
    \caption{\textbf{\pertide\ pipeline}: a user's UIG trajectory is mapped to action-gated b-cells and encoded by a tri-trace SSM with an adaptive memory mixer; a command-conditioned readout produces a task-ready state that conditions task-specific heads for prediction (MLP) and generation (DistilGPT2 decoder). The equations in Section~\ref{sec:PerTIDE} additionally specify the shared eigenbasis, current-event residual, and routing through the preceding task-ready state.}
    \label{fig:pipeline-pertide-full}
\end{figure}

We instantiate MTSH through the four operations in Eq.~\ref{eq:mtsh}: action-conditioned evidence entry, trace-specific persistence, adaptive trace fusion, and command-conditioned consumption. We illustrate the architecture in Figure~\ref{fig:pipeline-pertide-full}. Each schema-normalized event is converted into action-conditioned evidence, propagated through long-, short-, and episodic traces, and fused into a user-specific preference-history state. The active command then converts this user-specific history state into a task-ready representation for a predictive or generative consumer.

\textbf{Action entry.}
We initialize item and response nodes with MPNet embeddings~\citep{mpnet-bert-neurips-2020}. At timestep $t$, the supplied action role $a^{(t)}$ controls how target-locus evidence enters the state:
\begin{equation}
\boldsymbol\mu^{(t)} = \sigma\!\left(\mathrm{MLP}_{\mathrm{gate}}(a^{(t)})\right),
\qquad
\mathbf b_u^{(t)} = \boldsymbol\mu^{(t)}\odot\mathbf e_{tl}^{(t)}.
\end{equation}
Here, $a^{(t)}$ comes from the deterministic event mapping in Section~\ref{subsec:action_item_interface}; the learned gate determines how the supplied role modulates content evidence. A negative-role gate still has nonnegative entries; it does not automatically subtract the item embedding or impose a monotone dislike score. Thus, a clicked and an ignored article may share the same content
space while contributing differently through their positive and negative roles. The resulting $\mathbf b_u^{(t)}$ instantiates the evidence-entry operator $\Psi$ in Eq.~\ref{eq:mtsh}.

\textbf{Trace persistence.}
The action-conditioned event enters three lag-biased state-space model (SSM) traces. The implementation shares an orthogonal eigenbasis $Q$ across traces and uses diagonal real spectra $\Lambda_r$. The transition matrices are defined by the following maps, with $Q_L=Q_S=Q_E=Q$ and $Q^{-1}=Q^\top$:
\begin{equation}
\label{eq:tristate}
\begin{aligned}
\mathbf m_u^{(t,L)} &= Q_L\exp(\delta\Lambda_L)Q_L^{-1}\mathbf m_u^{(t-1,L)}
+B\mathbf b_u^{(t)},\\
\mathbf m_u^{(t,S)} &= Q_S\sigma(-\kappa\Lambda_S)Q_S^{-1}\mathbf m_u^{(t-1,S)}
+B\mathbf b_u^{(t)},\\
\mathbf m_u^{(t,E)} &= Q_E(I+\eta\Lambda_E)Q_E^{-1}\mathbf m_u^{(t-1,E)}
+B\mathbf b_u^{(t)}.
\end{aligned}
\end{equation}
For the generic recurrence $\mathbf m_u^{(t,r)}=A_r\mathbf m_u^{(t-1,r)}+B\mathbf b_u^{(t)}$, unrolling yields the lag response $K_r(\Delta)=A_r^\Delta$. A modal base $a$ assigns weight $a^\Delta$ at lag $\Delta$. Slow and fast bases yield different retention, while event inputs leave decaying tails. The three formulas permit overlapping kernels; their names describe intended biases rather than guaranteeing an ordering or semantic partition. These kernels instantiate the persistence operators $\mathcal P_r$ in Eq.~\ref{eq:mtsh}.

\textbf{Trace fusion.}
The updated long-, short-, and episodic traces are adaptively fused into the preference-history state through
\begin{equation}
\mathbf h_u^{(t)}=\mathrm{MLP}_{\Phi}([\mathbf m_u^{(t,L)};\mathbf m_u^{(t,S)};\mathbf m_u^{(t,E)}])+W_\psi\mathbf b_u^{(t)}+\mathbf b_\psi.
\end{equation}
The residual supplies the current event directly without adding another recurrent trace.
This realizes the fusion operator $\Phi$ in Eq.~\ref{eq:mtsh}. See Appendix~\ref{app:stability-ssm}-\ref{app:kernel-timescale-proof} for stability conditions, modal responses, and the kernel derivations.

\textbf{Command consumption.}
The fused state $\mathbf h_u^{(t)}$ represents the user's preference history before resolving the concrete downstream operation. Given task seed $\mathbf g_q$, a supervised command classifier uses the preceding task-ready state as routing context and estimates
\begin{equation}
\boldsymbol\pi_u^{(t)}
=
\operatorname{SoftMax}\!\left(
W_{\mathrm{task}}
[\mathbf z_u^{(t-1)};\mathbf g_q]
\right).
\end{equation}
The corresponding task-ready state is
\begin{equation}
\mathbf z_u^{(t)}
=
(W_\pi\boldsymbol\pi_u^{(t)})
\odot
\mathbf h_u^{(t)}
+
\mathbf b_G.
\end{equation}
Thus, $\mathbf h_u^{(t)}$ is the user-specific history state learned under the shared event schema, while $\mathbf z_u^{(t)}$ resolves how that state is consumed for the active task. The same news history can therefore support next-news prediction or headline generation, while the command seed and downstream consumer determine the required task-ready
representation. This instantiates $G\circ\Pi$ in Eq.~\ref{eq:mtsh}.

\textbf{Recurrent user memory.} The encoder retains three 768-dimensional traces and the preceding task-ready state for each user. These four vectors require 12 KiB in FP32, independent of history length. Each new event updates this fixed recurrent cache, while the predictive or generative consumer accesses the resulting state. Appendix~\ref{app:detailed_results_efficiency} gives the calculation and separates it from model weights and runtime estimates.

\textbf{Training and heads.}
We train the shared encoder with temporal-position, next-event, and command-classification objectives:
$
\mathcal L_{\mathrm{enc}}
=
0.2\mathcal L_{\mathrm{pos}}
+
0.4\mathcal L_{\mathrm{next}}
+
0.4\mathcal L_{\mathrm{task}}.
$
One encoder is trained jointly on the MovieLens, MIND, PENS, and OpenAI training trajectories after schema construction. The source-only transfer experiments train a separate PENS-only encoder. The three loss weights are globally fixed choices, not learned optima. Predictive consumers optimize task-specific ranking or classification objectives. 

\textbf{Generative consumers.} For generation, we use a pretrained DistilGPT2 decoder conditioned through a learned prefix. User-fused injection (UFI) combines the predicted query-conditioned representation with the query-document embedding, while user-gated injection (UGI) additionally uses the task-ready user state to apply feature-wise affine modulation to the query-conditioned representation before
fusion. The resulting control is combined with query-document evidence and projected to the decoder prefix. During generation training, the contextualization modules, final two decoder blocks, and final layer normalization are optimized, while the remaining decoder parameters stay frozen. The generation-specific stage also keeps the history encoder fixed. Personalization therefore enters through the encoded preference-history state rather than a user-specific decoder. Architecture and decoder variants are given in Appendix~\ref{app:arch-details}; objectives, dimensions, and training are in Appendix~\ref{app:training-details}.
\section{Evaluation}
\label{sec:evaluation}

\textbf{Datasets and tasks.} Table~\ref{tab:evaluation_coverage} in Appendix~\ref{app:datasets} maps the five main settings to their event evidence, consumers, and metrics.
We evaluate naturally ordered predictive histories on MovieLens-1M, MIND~\citep{dataset-mind-acl-2020},
and PENS~\citep{pens-acl-2021}; PENS also provides personalized headline generation from the same news-side
history abstraction. OpenAI-Reddit is used only as an \emph{eventization stress test} since it lacks observed
chronology. For OpenAI, native train/evaluation pools are separated before construction, and PerAugy is applied independently within both pools. PENS surrogate responses and augmentation are used in training; MIND uses its native interaction histories without that surrogate construction. MPChat is tested as a boundary case where event roles must be inferred from dialogue/discourse.

\textbf{Task-specific baselines.}
Task-native baselines cover PAH, MDH, and LDH families, including NAML, PLM-NR, EBNR (GRU), NRMS, MINER, SASRec, BERT4Rec, Mamba4Rec, and other dataset-standard systems. Prompted instruction-tuned LLMs provide a fixed-parameter functional alternative for generation. To separate input construction from state design, we additionally evaluate schema-matched MeanPool-B, MaxPool-B, GRU-B, LSTM-B, D-FM-Attention-B, Discretized-SSM-B, and Tri-SSM-B controls on the same ordered interaction-block trajectories as \pertide{}. Tri-SSM-B further matches the three-branch organization while omitting the explicit L/S/E retention maps. These MIND controls hold the input representation fixed; task-native comparisons may use their original input constructions.

\textbf{Transfer stress-test baselines.} For the source-only reuse stress test, we train on PENS and withhold MovieLens from source-encoder training. We compare \pertide{} with two methods explicitly designed for transfer: UniSRec
~\citep{unisrec-kdd-2022} and RecGURU~\citep{recguru-wsdm-2022}. Each method uses the same PENS/MovieLens content-embedding interface and receives either 500 or 3,000 MovieLens trajectories for target-side alignment. The evaluation uses the same held-out MovieLens protocol.

\textbf{Metrics and protocol.}
Predictive settings use one positive target and 699 sampled negatives and report MRR, nDCG, and HR@10.
Generation uses PerSEval~\citep{perseval-tmlr-2024} (PSE-JSD, PSE-SU4, PSE-METEOR), together with reference-based content metrics and human ratings of headline similarity (Appendices~\ref{app:content accuracy metrics} and~\ref{app:hj}). These evaluate agreement with references, rather than directly measuring article factuality. Main comparisons use paired bootstrap evaluation with
10,000 resamples; robustness studies keep trajectories and candidate sets fixed so only the perturbed factor changes.
Full dataset, preprocessing, hyperparameter, statistical, and evaluation details are in
Appendices~\ref{app:datasets}--\ref{app:training-details}.
\section{Results And Analysis}
\label{sec:results}

\subsection{RQ1: Can The Shared Encoder Serve Prediction And Generation?}
\begin{table}[t]
\centering
\caption{\textbf{One encoder supports prediction and generation.} Predictive scores are MRR/nDCG (0--100), with nDCG@5 for MIND and nDCG@10 for MovieLens and PENS; generative scores are PSE-JSD/PSE-METEOR (0--1). Each row uses its strongest listed task-native comparator. Full scores appear in Appendix~\ref{app:detailed_results_predictive}, Tables~\ref{tab:ml1m_recommendation}--\ref{tab:rq1_pens_structured_recommendation}; Appendix~\ref{app:detailed_results_generation}, Table~\ref{tab:pse_icml}; and Appendix~\ref{app:detailed_results_openai_reddit}, Table~\ref{tab:rq2_openai_only}. Table~\ref{tab:iclr_schema_matched} gives the same-input controls.}
\label{tab:iclr_main_scoreboard}
\setlength{\tabcolsep}{3pt}
\setlength{\extrarowheight}{4pt}
\renewcommand{\arraystretch}{1.1}
\begin{tabularx}{\linewidth}{@{}p{0.19\linewidth}Xccc@{}}
\toprule
\textbf{Setting} & \textbf{Comparator} & \textbf{Baseline} & \textbf{\pertide{}} & \textbf{Gain} \\
\midrule
MovieLens-1M movie ranking & SSD4Rec & 14.95 / 20.31 & \textbf{17.31 / 23.78} & +2.36 / +3.47 \\
\hdashline
MIND news ranking & MINER & 36.60 / 40.20 & \textbf{46.65 / 49.21} & +10.05 / +9.01 \\
\hdashline
PENS next-news ranking & LSTUR & 9.04 / 10.16 & \textbf{9.94 / 11.58} & +0.90 / +1.42 \\
\midrule
PENS headline generation & DeepSeek-R1-32B (2-shot) & 0.263 / 0.174 & \textbf{0.366 / 0.355} & +0.103 / +0.181 \\
\hdashline
OpenAI-Reddit TL;DR generation & DeepSeek-14B (2-shot) & 0.243 / 0.109 & \textbf{0.285 / 0.294} & +0.042 / +0.185 \\
\bottomrule
\end{tabularx}
\end{table}
Across the five main settings in Table~\ref{tab:iclr_main_scoreboard}, \pertide{} improves the reported primary metrics over the listed task-native comparators. MovieLens gains are 2.36/3.47 MRR/nDCG points over SSD4Rec; MIND gains are 10.05/9.01 over MINER; and PENS prediction gains are 0.90/1.42 over LSTUR. On PENS, the same history encoder also supports headline generation through a different consumer. UGI reaches 0.366/0.341/0.355 PSE-JSD/PSE-SU4/PSE-METEOR, compared with 0.263/0.125/0.174 for the strongest listed two-shot LLM, DeepSeek-R1-Distill-Qwen-32B. These complete-system gains use trained contextualization and partial decoder tuning for \pertide{} and two-shot prompting for the LLM. OpenAI's gains concern constructed trajectories; dialogue boundary results and metric-specific exceptions are reported in Appendix~\ref{app:detailed_results}.

\paragraph{Generation With Sparse Positive Evidence.} The encoder retains an advantage when fewer positive events are available. On the click-only sparse subset, \pertide{} achieves 0.211/0.244/0.262 PSE-JSD/PSE-SU4/PSE-METEOR, compared with 0.152/0.087/0.092 for the prompted DeepSeek-R1-32B baseline (Appendix~\ref{app:detailed_results_sparse}). The SU4 and METEOR drops are 28.45\% and 26.2\%, versus 30.4\% and 47.2\% for that baseline. JSD falls by about 42\% for both systems. Thus the advantage persists with limited positive evidence, with stronger relative retention on two personalization metrics. The result concerns this sparse subset and does not isolate which trace supplies the retained benefit.

\subsection{RQ2: Do Entry, Retention, And Readout Explain The Gains?}
\begin{table}[t]
\centering
\caption{\textbf{Same-input state-model control on MIND.} All ``-B'' controls consume the same ordered schema-normalized interaction-block embeddings as \pertide{}. Tri-SSM-B provides a structurally matched three-branch SSM control without the explicit long-, short-, and episodic retention maps used by \pertide{}. Full task-native results appear in Appendix~\ref{app:detailed_results_mind}, Table~\ref{tab:history_encoder_results}.}
\label{tab:iclr_schema_matched}
\setlength{\tabcolsep}{5pt}
\renewcommand{\arraystretch}{1.02}
\scalebox{1}{
\begin{tabular}{llccc}
\toprule
\textbf{State family} & \textbf{Model} & \textbf{MRR} & \textbf{nDCG@5} & \textbf{HR@10} \\
\midrule
PAH & MeanPool-B & 29.43 & 31.22 & 33.17 \\
PAH & MaxPool-B & 28.32 & 31.16 & 32.41 \\
MDH & GRU-B & 32.64 & 35.17 & 37.92 \\
MDH & LSTM-B & 32.17 & 34.92 & 37.05 \\
LDH & D-FM Attention-B & 32.17 & 34.56 & 36.21 \\
LDH & Discretized SSM-B & 35.14 & 38.53 & 37.27 \\
LDH & Tri-SSM-B & 42.53 & 43.84 & 46.33 \\
\midrule
MTSH & \textbf{\pertide{}} & \textbf{46.65} & \textbf{49.21} & \textbf{55.48} \\
\bottomrule
\end{tabular}
}
\end{table}
\begin{figure}[t]
\centering
\begin{minipage}[t]{0.32\linewidth}\centering
\begin{tikzpicture}[x=0.032cm,y=0.17cm,font=\scriptsize]
\draw[rounded corners=2pt,draw=black!30,fill=black!1] (-20,-4.6) rectangle (111,17);
\node[font=\scriptsize\bfseries] at (45.5,15.7) {(a) PENS Role Corruption};
\foreach \y in {0,4,8,12} {\draw[black!12] (0,\y)--(100,\y); \node[anchor=east] at (-3,\y) {\y};}
\draw[black!60] (0,14)--(0,0)--(100,0);
\foreach \x in {0,25,50,75,100} {\draw (\x,0)--(\x,-0.3); \node[anchor=north] at (\x,-0.55) {\x};}
\draw[blue!65!black,thick] plot coordinates {(0,12.14) (10,10.13) (25,9.17) (50,7.71) (100,2.81)};
\foreach \x/\y in {0/12.14,10/10.13,25/9.17,50/7.71,100/2.81} {\fill[blue!65!black] (\x,\y) circle[radius=1.4pt];}
\node[anchor=south west] at (0,12.55) {12.14};
\node[anchor=south east] at (100,3.2) {2.81};
\node[rotate=90] at (-15,7) {MRR};
\node at (50,-3.35) {Corrupted roles (\%)};
\end{tikzpicture}
\end{minipage}\hfill
\begin{minipage}[t]{0.32\linewidth}\centering
\begin{tikzpicture}[x=0.95cm,y=0.048cm,font=\scriptsize]
\draw[rounded corners=2pt,draw=black!30,fill=black!1] (-0.8,-16.3) rectangle (3.6,60.2);
\node[font=\scriptsize\bfseries] at (1.4,55.5) {(b) MIND Trace Subsets};
\foreach \y in {0,10,20,30,40,50} {\draw[black!12] (0.35,\y)--(3.4,\y); \node[anchor=east] at (0.25,\y) {\y};}
\draw[black!60] (0.35,50)--(0.35,0)--(3.4,0);
\fill[black!35] (0.7,0) rectangle (1.3,24.18);
\fill[blue!55!black] (1.7,0) rectangle (2.3,41.26);
\fill[teal!75!black] (2.7,0) rectangle (3.3,46.65);
\foreach \x/\y in {1/24.18,2/41.26,3/46.65} {\node[anchor=south] at (\x,\y+1) {\y};}
\node[anchor=north] at (1,-2) {E}; \node[anchor=north] at (2,-2) {L+S}; \node[anchor=north] at (3,-2) {L+S+E};
\node[rotate=90] at (-0.58,25) {MRR};
\node at (1.85,-12) {Retained traces};
\end{tikzpicture}
\end{minipage}\hfill
\begin{minipage}[t]{0.32\linewidth}\centering
\begin{tikzpicture}[x=0.95cm,y=6cm,font=\scriptsize]
\draw[rounded corners=2pt,draw=black!30,fill=black!1] (-0.8,-0.1304) rectangle (3.6,0.4816);
\node[font=\scriptsize\bfseries] at (1.4,0.444) {(c) PENS Trace Subsets};
\foreach \y in {0,0.1,0.2,0.3,0.4} {\draw[black!12] (0.35,\y)--(3.4,\y); \node[anchor=east] at (0.25,\y) {\y};}
\draw[black!60] (0.35,0.4)--(0.35,0)--(3.4,0);
\fill[black!35] (0.7,0) rectangle (1.3,0.045);
\fill[blue!55!black] (1.7,0) rectangle (2.3,0.182);
\fill[teal!75!black] (2.7,0) rectangle (3.3,0.355);
\foreach \x/\y in {1/0.045,2/0.182,3/0.355} {\node[anchor=south] at (\x,\y+0.008) {\y};}
\node[anchor=north] at (1,-0.016) {E}; \node[anchor=north] at (2,-0.016) {L+S}; \node[anchor=north] at (3,-0.016) {L+S+E};
\node[rotate=90] at (-0.62,0.20) {PSE-METEOR};
\node at (1.85,-0.096) {Retained traces};
\end{tikzpicture}
\end{minipage}
\caption{\textbf{Correct action roles and complementary traces improve personalization.} (a) On fixed PENS trajectories, targets, and candidates, only positive/negative roles change. (b,c) The episodic trace is weak alone, yet adding it to L+S improves MIND MRR by 5.39 points and PENS PSE-METEOR by 0.173. The panels use separate protocols and metric scales. Full interventions are in Tables~\ref{tab:iclr_role_corruption} and~\ref{tab:trace_subset_ablation}.}
\label{fig:iclr_wow_panels}
\end{figure}
\textbf{Holding the input and branch count fixed.}
In the MIND comparison, a single Discretized-SSM-B reaches 35.14 MRR and three ordinary SSM branches reach 42.53. \pertide{} reaches 46.65, adding 4.12 MRR, 5.37 nDCG@5, and 9.15 HR@10 points over Tri-SSM-B (Table~\ref{tab:iclr_schema_matched}). The advantage survives both common input construction and a three-branch control, supporting the additional retention parameterization in this setting.

\textbf{Action-dependent entry and command-dependent readout.}
Removing action conditioning reduces MIND MRR/HR@10 from 46.65/55.48 to 40.13/46.35; removing command-conditioned readout gives 42.51/51.06. These interventions locate useful contributions at two different stages of the update. A separate fixed-trajectory role-corruption experiment progressively flips positive and negative labels: MRR falls from 12.14 to 10.13, 9.17, 7.71, and 2.81 at 0\%, 10\%, 25\%, 50\%, and 100\% corruption (Appendix~\ref{app:detailed_results}, Table~\ref{tab:iclr_role_corruption}). Correct action roles therefore carry predictive information that the encoder uses (Figure~\ref{fig:iclr_wow_panels}a).

\textbf{Retention behavior and complementary traces.}
Target-fixed probes alter where preference-similar evidence occurs (Appendix~\ref{app:detailed_results_trace_subset}, Table~\ref{tab:controlled_temporal_split_pertide_variants}). L-Tr retains the top 75\% of history nodes ranked by embedding proximity to the target in their original order; S-Tr concentrates such evidence in the final 30\%; E-Tr inserts a consistent 3--4-step burst among unrelated content. On MIND, each isolated trace is strongest in its corresponding regime: L obtains 49.54/54.78 on L-Tr, S obtains 50.07/55.35 on S-Tr, and E obtains 50.35/55.84 on E-Tr in MRR/nDCG@10. MovieLens has an exception: L slightly exceeds S on S-Tr, 27.18/29.35 versus 27.13/29.22. The fused state leads every reported probe, including the setting where isolated specialization overlaps.

Natural-task ablations further separate isolated strength from contribution to a combination (Appendix~\ref{app:detailed_results}, Table~\ref{tab:trace_subset_ablation}). On MIND, E alone reaches only 24.18 MRR, yet adding E to L+S raises MRR from 41.26 to 46.65. On PENS generation, the same addition raises PSE-METEOR from 0.182 to 0.355. The full model benefits from the episodic contribution even though E is weak alone (Figure~\ref{fig:iclr_wow_panels}b,c); the appendix also reports smaller combinations that hurt performance.

\subsection{RQ3: Do Learned Updates Survive A New Consumer Or Domain?}
\textbf{One history, different consumers.}
PENS prediction reaches 9.94/11.58 MRR/nDCG@10, while the same encoded news histories condition personalized headlines. UGI improves on UFI by 0.034/0.052/0.052 in the three PSE metrics (Appendix~\ref{app:decoder-variants}; Table~\ref{tab:pse_icml}). The state is reusable across the evaluated objectives, and its decoder conditioning still affects how effectively generation uses it.

\textbf{Source-trained reuse with target alignment.}
A PENS-only encoder, trained with MovieLens excluded, is aligned using either 500 or 3,000 target trajectories. Against the strongest transfer baseline at each budget, gains are 0.15/0.21/1.47 and 1.43/2.98/5.26 MRR/nDCG@10/HR@10 points, respectively (Table~\ref{tab:iclr_transfer}). More target alignment substantially improves all three methods. A fitted consumer thus makes source-trained updates useful in the new output space.

\begin{table}[!htbp]
\centering
\caption{\textbf{PENS-trained state reuse with target alignment.} MovieLens is withheld from source-encoder training. Methods receive the same target budget and one-positive/699-negative protocol. Scores use the 0--100 scale. The frozen-core control in Table~\ref{tab:iclr_frozen_transfer} uses a different slice, so absolute scores should be compared within each table.}
\label{tab:iclr_transfer}
\small
\setlength{\tabcolsep}{7pt}
\renewcommand{\arraystretch}{1.08}
\begin{tabular}{lccc ccc}
\toprule
& \multicolumn{3}{c}{\textbf{500 Target Trajectories}} & \multicolumn{3}{c}{\textbf{3K Target Trajectories}} \\
\cmidrule(lr){2-4}\cmidrule(lr){5-7}
\textbf{Method} & \textbf{MRR} & \textbf{nDCG@10} & \textbf{HR@10} & \textbf{MRR} & \textbf{nDCG@10} & \textbf{HR@10} \\
\midrule
UniSRec & 4.31 & 3.79 & 8.23 & 15.18 & 16.92 & 33.49 \\
RecGURU & 6.03 & 5.73 & 11.49 & 14.21 & 16.17 & 29.44 \\
\textbf{\pertide{}} & \textbf{6.18} & \textbf{5.94} & \textbf{12.96} & \textbf{16.61} & \textbf{19.90} & \textbf{38.75} \\
\bottomrule
\end{tabular}
\end{table}

\Needspace{18\baselineskip}
\begin{wraptable}{r}{0.61\textwidth}
\centering
\caption{\textbf{Learned updates survive frozen-core transfer.} PENS$\rightarrow$MovieLens with \pertide{} cores and the same head architecture. Trained heads use 3K target rows and matched candidates, loss, and optimization. Scores use the 0--100 scale. Table~\ref{tab:iclr_transfer} gives the separate transfer-baseline comparison.}
\label{tab:iclr_frozen_transfer}
\setlength{\tabcolsep}{2.5pt}
\begin{tabular}{@{}llccc@{}}
\toprule
\textbf{Frozen core} & \textbf{Head} & \textbf{MRR} & \shortstack{\textbf{nDCG}\\\textbf{@10}} & \textbf{HR@10} \\
\midrule
Random & Trained & 1.115 & 0.780 & 1.44 \\
PENS-trained & Untrained & 1.371 & 0.960 & 2.18 \\
\textbf{PENS-trained} & \textbf{Trained} & \textbf{16.344} & \textbf{16.002} & \textbf{20.28} \\
\bottomrule
\end{tabular}
\end{wraptable}
\textbf{Separating learned updates from an adaptable random representation.}
The controlled frozen-core experiment fixes the MovieLens head, 3,000 target rows, negatives, loss, and optimization. A frozen PENS-trained core with the trained head reaches 16.344 MRR; a frozen random core with the same trained head reaches 1.115; and the learned core with an untrained head reaches 1.371 (Table~\ref{tab:iclr_frozen_transfer}). The 15.23-point advantage isolates information learned in the source encoder, while the untrained-head result exposes the need for target alignment. This protocol is separate from the transfer-baseline comparison, so their absolute scores should not be combined. 

\paragraph{Schema Proximity And Transfer.} Nearby schema representations are associated with easier transfer in the separate diagnostic on 500 MovieLens trajectories from 99 users. Schema distance has Spearman correlations of 0.11 and 0.16 with frozen and adapted target risk, and $-0.09$ and $-0.11$ with reciprocal-rank gain, respectively. All four user-clustered 95\% confidence intervals exclude zero (Appendix~\ref{app:reuse_controls}, Table~\ref{tab:iclr_schema_distance}). The directions connect schema proximity to lower risk and greater benefit from reuse. The associations are modest and do not establish that the theorem's stability or conditional-label assumptions hold in these runs.

\section{Discussion And Limitations}
\label{sec:discussion-limitations}
The reusable object is the learned history update: different users retain separate states, and aligned consumers access those states for prediction or generation. The frozen-core comparison demonstrates useful source learning, while same-input controls locate an advantage beyond schema construction and parallel branches. On PENS, reference-content agreement also improves over DeepSeek-32B on RG-SU4 (41.32 versus 29.42) and BLEU-4 (27.64 versus 19.31). Human similarity ratings are 3.21 versus 3.12 on a six-point scale, while BERTScore is slightly lower, 90.84 versus 90.87 (Appendix~\ref{app:detailed_results_content_fidelity}). These findings support reference agreement rather than article factuality.

This reuse depends on supplied action mappings, informative content embeddings, and compatible downstream semantics. OpenAI evaluates constructed chronology, and MPChat combines useful generation results with deficits in response prediction and speaker identification (Appendix~\ref{app:mpchat boundary}). Stable traces and the three branch names alone do not certify readout contraction or semantic disentanglement. The objectives use globally fixed weights, and reported resource figures are estimates rather than matched hardware measurements. Uncertainty-aware action mappings, richer discourse representations, and adaptive objective weighting are concrete extensions of the current design.

\section{Conclusion}
\label{sec:conclusion}
Heterogeneous personalization histories can teach a common way to update user memory when their interactions are represented as actions on embedded items. The schema gives an exact invariance property, a conditional continuity bound under item changes, and a pooled-training bound that makes compatibility explicit. MTSH specifies how this evidence enters, persists, and is consumed, and \pertide{} implements those operations in a shared encoder. Prediction and generation results demonstrate multiple uses of the learned history state; the controlled frozen-core experiment provides direct evidence that source-trained updates remain useful after a domain change and consumer alignment. Same-input controls and trace interventions connect that reuse to the proposed update structure. The resulting opportunity is to learn history updates across compatible data sources and reuse them through task-appropriate consumers.

\clearpage

\section*{AI Use Statement}
Generative AI tools were used to cross-check technical reasoning, assist with literature retrieval, mathematical exposition and verification, manuscript drafting and editing, and code cleaning/documentation. The authors determine the research questions, experimental decisions, scientific claims, and final interpretations, and retain responsibility for the veracity, originality, and integrity of the paper.

\bibliographystyle{plainnat}
\bibliography{custom}

\clearpage
\phantomsection
\renewcommand{\contentsname}{Appendix Contents}
\setcounter{tocdepth}{2}
\tableofcontents
\clearpage
\appendix
\addtocontents{toc}{\protect\setcounter{tocdepth}{2}}
\section{Datasets}\label{app:datasets}
\begin{table}[!htbp]
\centering
\caption{\textbf{One event interface, five predictive and generative settings.} Ranking uses one positive target and 699 sampled negatives. Generation appends produced responses to the running trajectory. PSE denotes PerSEval; metric definitions are in Appendices~\ref{app:perseval} and~\ref{app:content accuracy metrics}.}
\label{tab:evaluation_coverage}
\setlength{\tabcolsep}{4pt}
\renewcommand{\arraystretch}{1.08}
\begin{tabularx}{\linewidth}{@{}p{0.16\linewidth}XX@{}}
\toprule
\textbf{Dataset} & \textbf{Event evidence and consumer} & \textbf{Primary metrics} \\
\midrule
MovieLens-1M & Positive/negative movie events; movie ranking & MRR, nDCG@10, HR@10 \\
MIND & Click/ignore news events; news ranking & MRR, nDCG@5, HR@10 \\
PENS & Click/skip news events; next-news ranking & MRR, nDCG@10 \\
PENS & News history and headline request; personalized generation & PSE-JSD, PSE-SU4, PSE-METEOR \\
OpenAI-Reddit & Feedback-derived summary events and request; eventized TL;DR generation & PSE-JSD, PSE-SU4, PSE-METEOR \\
\bottomrule
\end{tabularx}
\end{table}
\subsection{MIND-Large Dataset}\label{app:mind large dataset}
MIND (Large) \citep{dataset-mind-acl-2020} is the full-scale release of the Microsoft News recommendation dataset, collected from anonymized user behavior logs. It contains 1{,}000{,}000 users and 161{,}013 English news articles, and the associated interaction logs include over 15 million impressions on the platform \citep{dataset-mind-acl-2020}. The data is organized around impression logs that record the impression ID, anonymized user ID, timestamp, a user click-history list before the impression, and the displayed candidate news with click/non-click labels; the release also includes per-article metadata (e.g., title, abstract, category, and entities) \citep{dataset-mind-acl-2020}. A small subsampled variant (MIND-small) is also provided by the dataset maintainers by randomly sampling 50{,}000 users and their behavior logs from the full dataset \citep{dataset-mind-acl-2020}.

\subsection{PENS Dataset}\label{app:pens dataset}
PENS \citep{pens-acl-2021} contains 113{,}762 news articles spanning 15 topical categories. Each article includes an ID, title (avg.\ 10.5 words), body (avg.\ 549 words), and category label; titles are also linked to WikiData entities. The dataset further provides user interaction logs (impressions and clicks) and extends MIND news records with article bodies and personalized headline annotations \citep{dataset-mind-acl-2020}.

\textbf{PENS training set.}\label{sec:PENS dataset - training}
For training, 500k user--news impressions were sampled from June 13 to July 3, 2019. Each log entry is recorded as $[\mathrm{uID}, \mathrm{tmp}, \mathrm{clkNews}, \mathrm{uclkNews}, \mathrm{clkedHis}]$, where $\mathrm{clkNews}$ and $\mathrm{uclkNews}$ denote clicked and unclicked news in the impression, and $\mathrm{clkedHis}$ is the user’s previously clicked history sorted by click time. The resulting training pool for \pertide\ exhibits substantial temporal variation in user interactions, motivating explicit modeling of preference dynamics in offline personalization settings.

\textbf{PENS test set.}\label{sec:PENS dataset - testing}
To construct an offline evaluation set, 103 English-speaking students first reviewed 1{,}000 headlines and selected at least 50 articles to indicate their interests. Each participant was then asked to write preferred headlines for 200 previously unseen articles; assignments overlapped across participants.

Each test article has four participant-written references on average. Editorial checks verified factual correctness and removed incorrect headlines, yielding a high-quality set of personalized gold references for evaluation.

\subsection{OpenAI (Reddit) Dataset}\label{app:OpenAI Reddit Dataset}
The OpenAI feedback dataset \citep{stiennon2020summarize}, built from the Reddit TL;DR corpus \citep{openai-reddit}, contains 123{,}169 Reddit posts drawn from 29 subreddits. It provides both human-written and OpenAI-generated summaries and is organized into two splits: \textit{Comparisons} (training/validation) and \textit{Axis} (validation/testing). A curated subset of 1{,}038 posts was processed under 13 summarization policies, producing 7{,}713 summaries, which were evaluated by 64 annotators via paired comparisons with auxiliary signals (e.g., selection preference, confidence, and ratings along dimensions such as accuracy, coherence, coverage, and overall quality). In contrast to PENS, these annotations are not indexed by stable user identities or reading histories.

\subsection{MPChat Dataset}\label{app:mpchat dataset}
MPChat \citep{mpchat-2023-acl} is a multimodal persona-grounded dialogue dataset in which personas are represented as image--sentence pairs intended to capture episodic memories, and dialogues are curated to be grounded in these multimodal persona elements. The dataset is released with retrieval-style benchmarks for multimodal persona-grounded dialogue understanding, including next response prediction, grounding persona prediction, and speaker identification. In total, MPChat contains 15{,}000 multi-turn dialogues with 42{,}531 utterances written by 25{,}877 users, and it is split chronologically into 11{,}975/1{,}516/1{,}509 dialogues for train/validation/test \citep{mpchat-2023-acl}.
\section{Schema Construction And Generalizability}
\label{app:UIG and Schema-level Generalizability}

\subsection{Dataset-Specific Construction}\label{app:UIG Construction}
UIG represents an interaction as an action directed at a document or response node. Its construction depends on what a dataset records: explicit interactions, ratings of generated text, or dialogue turns. The following mappings preserve these differences in provenance while providing the encoder with the same event interface.

\textbf{PENS.}\label{app:UIG Construction from PENS-styled}
The recorded \textit{clkNews} and \textit{uclkNews} events become click and skip edges to document nodes. In the training pool, selected sentences from the corresponding articles supply surrogate response nodes, linked by request (\textit{genRes}) and generated-response (\textit{resGen}) transitions. These responses provide training targets; they are not observed user-authored summaries or evidence that the user's interests actually evolved in the sampled order. PENS has user-written test headlines, whereas its training histories primarily record article interactions~\citep{pens-acl-2021}.

\textbf{OpenAI--Reddit.}\label{app:UIG Construction from OpenAI-styled}
The source training and evaluation records are kept in separate pools before construction and augmentation. For each rater identifier, an article receives a derived positive label if at least one associated generated summary has a feedback-confidence score above the chosen threshold, 6 on the nine-point scale; the remaining articles receive a derived negative label. Among a rater's summaries for an article, the highest-rated summary supplies the response node. Article--response pairs and negative article nodes are randomly ordered into a trajectory. PerAugy is applied separately within both pools. The confidence threshold, rating-based selection, and random order are construction rules, rather than observed clicks, skips, or longitudinal preferences. Rater identifiers therefore index the constructed histories without establishing stable real-world user identity across time.

\textbf{MPChat.}\label{app:UIG Construction from MPChat-styled}
Dialogue histories pair recent context utterances with a speaker's subsequent response. Algorithm~\ref{alg:mpchat-stance} assigns positive or negative edges to context nodes using response cues, sentiment, and text similarity; ties and neutral cases are assigned negative. These are inferred stance labels, not explicit preference judgments. A response used to annotate a historical event belongs to the observed prefix; a response being predicted must remain outside that prefix. This cutoff is necessary to interpret the response-prediction task as prediction from history.

\begin{algorithm}[t]
\caption{\textsc{HybridStanceInfer}: MPChat-Style Turn-Level Stance Inference}
\label{alg:mpchat-stance}
\begin{algorithmic}[1]
\REQUIRE Context utterances $\{d\}$; target response $s$; sentiment encoder $\mathrm{SentEnc}$; text encoder $\mathrm{TxtEnc}$; cue lexicons $\{\mathcal{C}\}$
\ENSURE Binary stance labels for linked context turns

\STATE $x_s \gets \mathrm{lower}(s)$;\;\; $\mathbf{e}_s \gets \mathrm{TxtEnc}(x_s)$;\;\; $\sigma_s \gets \mathrm{SentEnc}(x_s)$
\STATE $\kappa \gets \mathrm{splitClauses}(x_s)$

\FOR{each recent context turn $d \in \{d\}$}
    \STATE $x_d \gets \mathrm{lower}(d)$;\;\; $\mathbf{e}_d \gets \mathrm{TxtEnc}(x_d)$
    \STATE $p \gets 0,\; n \gets 0$ \hfill{\small (positive / negative evidence)}
    \FOR{each clause $\kappa_i \in \kappa$}
        \STATE $(p,n) \gets \mathrm{CueScore}(\kappa_i,\{\mathcal{C}\},p,n)$
    \ENDFOR
    \STATE $(p,n) \gets \mathrm{SentScore}(\sigma_s,p,n)$
    \STATE $(p,n) \gets \mathrm{SimScore}(\cos(\mathbf{e}_s,\mathbf{e}_d),p,n)$
    \STATE $\ell_d \gets \mathbb{I}[p > n]$ \hfill{\small (neutral $\rightarrow$ negative)}
\ENDFOR

\RETURN $\{\ell_d\}$
\end{algorithmic}
\end{algorithm}

\begin{figure}[t]
\centering
\includegraphics[width=0.5\linewidth]{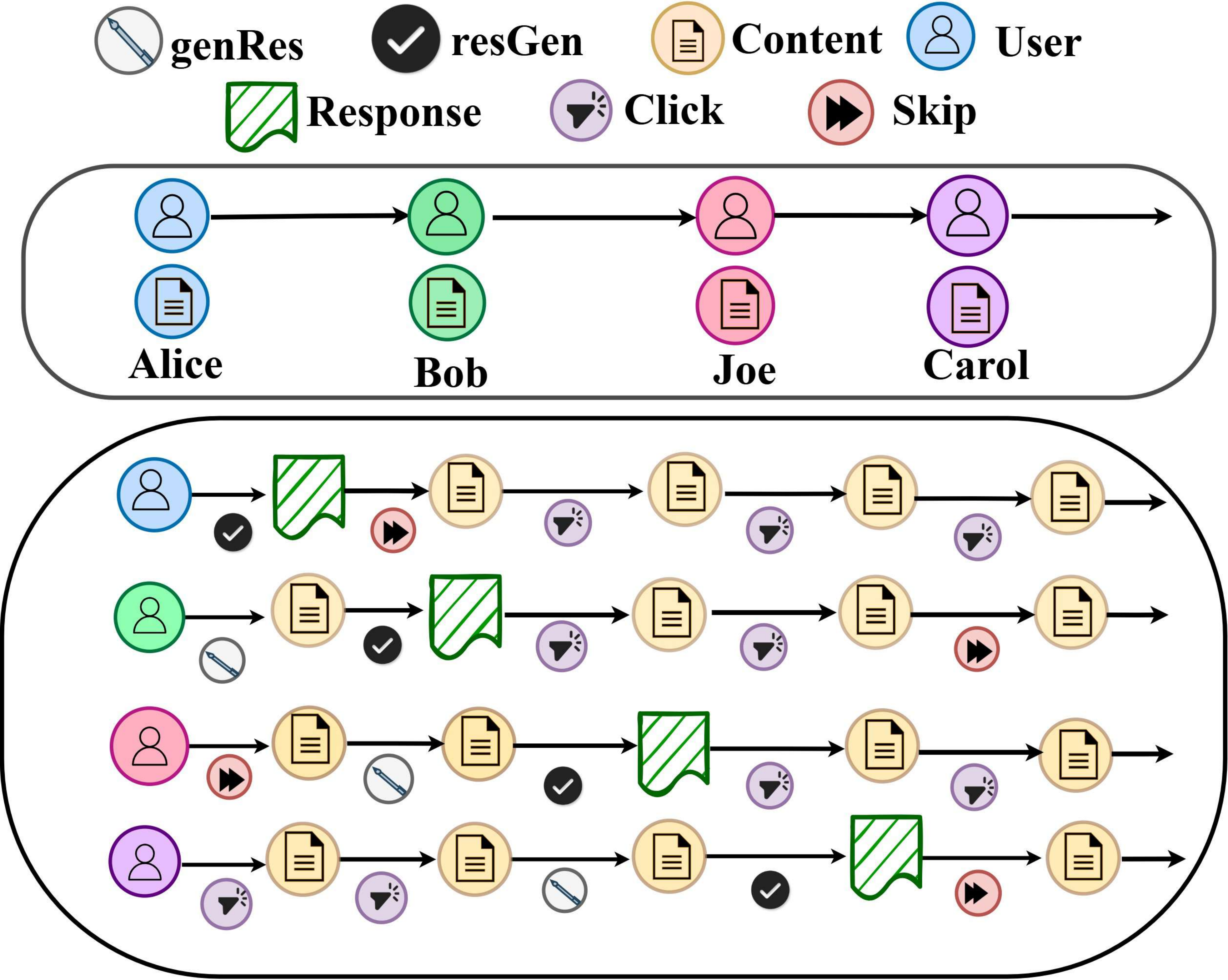}
\caption{Construction of the User Interaction Graph (UIG) from multi-user, multi-turn dialogue. Turn-level interactions are annotated with behavioral actions and sentiment signals, then projected into a unified UIG over participating users and content nodes.}
\label{fig:pipeline}
\end{figure}

\subsection{What The Schema Shares}
\label{app:schema_generalizability}
Let domain $D$ have native actions $\mathcal A_D$ and items $\mathcal X_D$. An item may be a document, movie, or response node. The schema maps a native event $(\alpha,x)$ to
\[
S_D(\alpha,x)=\big(\rho_D(\alpha),\psi_D(x)\big),\qquad
\rho_D:\mathcal A_D\to\mathcal A,\quad \psi_D:\mathcal X_D\to\mathbb R^d.
\]
Here $\mathcal A$ contains positive, negative, request, and generated-response transitions; the latter two distinguish the input and output sides of a command-like interaction. The same parameterized update consumes these roles and item embeddings across users and domains. User states remain separate. Thus Alice's watched movie and Bob's watched series can exercise the same update rule without asserting that Alice and Bob have the same preferences.

For fixed parameters, write the implemented update in compact form as
\begin{align}
 b_i&=\mu(a_i)\odot e_i,\quad \mu(a)=\sigma(\mathrm{MLP}_{\rm gate}(a)),\nonumber\\
 m_i^r&=A_r m_{i-1}^r+B b_i,\quad r\in\{L,S,E\},\nonumber\\
 h_i&=\Phi([m_i^L;m_i^S;m_i^E])+W_\psi b_i+b_\psi,\label{eq:schema-implemented-update}\\
 z_i&=F_{g_i}(h_i,z_{i-1}).\nonumber
\end{align}
The last line includes the command-conditioned routing through the preceding task-ready state (Appendix~\ref{app:arch-details}). Keeping the residual input and this state dependence explicit is necessary when extending a trace bound to the complete encoder.

\begin{proposition}[Invariance to native action relabeling]\label{thm: generalization - Invariance to Native Action Relabeling}
Suppose two trajectories have identical schema pairs $(a_i,e_i)$ at every step, identical command seeds $g_i$, and equal initial trace and task-ready states. Their $m_i^r$, $h_i$, and $z_i$ are equal at every step under the same deterministic encoder parameters.
\end{proposition}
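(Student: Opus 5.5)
The argument is an induction on the step index $i$, using the compact update in Eq.~\eqref{eq:schema-implemented-update}. The key observation is structural. Every quantity the encoder computes at step $i$ is a deterministic function of four ingredients only: the schema pair $(a_i,e_i)$, the command seed $g_i$, the previous trace states $m_{i-1}^r$, and the previous task-ready state $z_{i-1}$. The native action $\alpha$ and native item $x$ never enter except through $S_D(\alpha,x)=(\rho_D(\alpha),\psi_D(x))$. We therefore write the step as a single map
\[
(m_i^L,m_i^S,m_i^E,h_i,z_i)=\mathcal U_\theta\big(a_i,e_i,g_i,m_{i-1}^L,m_{i-1}^S,m_{i-1}^E,z_{i-1}\big).
\]
Here $\mathcal U_\theta$ is the composition of the gate $\mu$, the linear trace maps $A_r,B$, the fusion $\Phi$ with its residual, and $F_{g}$. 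It depends on the shared parameters $\theta$ and on nothing dataset-specific.

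\textbf{Base case.} The hypothesis gives equal initial states $m_0^r$ and $z_0$ for both trajectories. If the statement is read from $i=1$, we also note that $h_0$ is not an input to any later update.

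\textbf{Inductive step.} Assume $m_{i-1}^r$ (for $r\in\{L,S,E\}$) and $z_{i-1}$ agree across the two trajectories. We then check each line of Eq.~\eqref{eq:schema-implemented-update} in order.
\begin{itemize}
\item The gated evidence $b_i=\mu(a_i)\odot e_i$ agrees, because $a_i$ and $e_i$ agree and $\mu$ is a fixed function.
\item Each trace $m_i^r=A_rm_{i-1}^r+Bb_i$ then agrees, since $A_r$ and $B$ are shared.
\item The fused state $h_i$ agrees, since both $\Phi$ and the residual $W_\psi b_i+b_\psi$ depend only on agreeing quantities.
\item The task-ready state $z_i=F_{g_i}(h_i,z_{i-1})$ agrees, because $g_i$ and $z_{i-1}$ agree. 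Unpacking $F_{g_i}$ through the routing softmax $\pi_i=\mathrm{SoftMax}(W_{\rm task}[z_{i-1};g_i])$ and the readout $z_i=(W_\pi\pi_i)\odot h_i+b_G$ shows it is a well-defined function of its arguments.
\end{itemize}
Equality of all the states at step $i$ then closes the induction.

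The only point needing care is what ``deterministic encoder parameters'' must exclude. The inference pass must use no dropout or other stochastic layer, and nothing in $\mathcal U_\theta$ may read a hidden dataset identifier, native label, or absolute position outside the stated inputs. The positional and next-event objectives act only on training losses, not on the forward map. I would state this as the precise meaning of the hypothesis and verify it against the architecture description in Appendix~\ref{app:arch-details}; I expect this verification to be the main obstacle.

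Once it is settled, the conclusion follows: native relabelings that yield the same mapped sequence are indistinguishable to the encoder.
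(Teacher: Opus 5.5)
Your proposal is correct and follows the paper's own proof: induction from the equal initial states, where equal $(a_i,e_i)$ give equal $b_i$, and equal previous states and inputs then give equal traces, fused state, and readout under Eq.~\eqref{eq:schema-implemented-update}. The determinism caveat you call the main obstacle is not a real obstacle, because the proposition is stated for that compact update, whose only inputs are the schema pair, the command seed, and the previous states, and the hypothesis already stipulates deterministic parameters.
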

\textit{Proof.} Equal roles and embeddings give equal $b_i$. Equal previous states and inputs give equal trace updates, residual fusion, and readout in Eq.~\eqref{eq:schema-implemented-update}. Induction from the equal initial states proves the result. $\square$

Native names such as watch, click, and open therefore cease to matter once both the role and item representation match. A role match alone does not make two items, histories, or users equivalent.

\begin{proposition}[Stability under item-embedding changes]\label{thm: generalization - Stability Under Target-Locus Embedding Shifts}
Consider equal-length trajectories with equal initial states, roles, and command seeds. Let their item-embedding differences be $\delta_i=\|e_i-e'_i\|_2$. Assume $\|A_r\|_2\leq\beta_r<1$. Then
\[
\|m_i^r-m_i^{\prime r}\|_2\leq\|B\|_2\sum_{j=1}^i\beta_r^{i-j}\delta_j.
\]
If $\Phi$ is $L_\Phi$-Lipschitz, define
\[
H_i=L_\Phi\|B\|_2\sum_r\sum_{j=1}^i\beta_r^{i-j}\delta_j+\|W_\psi\|_2\delta_i.
\]
Then $\|h_i-h'_i\|_2\leq H_i$. If the readout satisfies
\[
\|F_g(h,z)-F_g(h',z')\|_2\leq L_H\|h-h'\|_2+L_Z\|z-z'\|_2
\]
on the states under consideration, then
\[
\|z_i-z'_i\|_2\leq L_H\sum_{j=1}^i L_Z^{i-j}H_j.
\]
\end{proposition}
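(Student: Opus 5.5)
Since roles are equal, the gate vectors $\mu(a_i)=\mu(a'_i)$ coincide at every step. This reduces every difference to the item-embedding differences $\delta_j$. The plan is to propagate these differences stage by stage through Eq.~\eqref{eq:schema-implemented-update}: first the traces, then fusion, then readout. Each stage is a linear recurrence inequality that unrolls by induction.

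\textbf{Trace bound.} Write $\Delta b_i=b_i-b'_i=\mu(a_i)\odot(e_i-e'_i)$. The gate is a sigmoid, so its entries lie in $(0,1)$ and $\|\Delta b_i\|_2\le\delta_i$. This is the one place where the sigmoid range is used; a general gate would introduce a factor $\|\mu\|_\infty$. Subtracting the two trace recurrences gives $\Delta m_i^r=A_r\Delta m_{i-1}^r+B\Delta b_i$ with $\Delta m_0^r=0$. Unrolling yields $\Delta m_i^r=\sum_{j=1}^i A_r^{i-j}B\Delta b_j$. Submultiplicativity and $\|A_r^{k}\|_2\le\beta_r^{k}$ then give the first claim; the same estimate follows by induction on $i$.

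\textbf{Fusion bound.} Apply the Lipschitz property of $\Phi$ to the concatenated trace vector. Use $\|[x;y;w]\|_2\le\|x\|_2+\|y\|_2+\|w\|_2$, which is cruder than the Euclidean norm of the stacked differences but is the form matching $H_i$. Adding the residual term, bounded by $\|W_\psi\|_2\|\Delta b_i\|_2\le\|W_\psi\|_2\delta_i$, gives $\|h_i-h'_i\|_2\le H_i$. The bias $b_\psi$ cancels.

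\textbf{Readout bound.} Command seeds are equal, so both trajectories use the same $F_{g_i}$. The assumed joint Lipschitz condition gives $\|\Delta z_i\|_2\le L_H H_i+L_Z\|\Delta z_{i-1}\|_2$ with $\Delta z_0=0$. Induction gives $\|\Delta z_i\|_2\le L_H\sum_{j=1}^iL_Z^{i-j}H_j$: assuming the bound for $i-1$, multiply by $L_Z$ and add $L_HH_i$.

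The only delicate point is the readout step. The condition is assumed only ``on the states under consideration,'' so I would state explicitly that both trajectories' $(h_i,z_{i-1})$ must lie in that region. I would also note that no contraction $L_Z<1$ is needed for this finite-horizon bound.
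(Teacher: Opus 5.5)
Your proposal is correct and follows the paper's proof essentially step for step. The sigmoid range gives $\|b_i-b'_i\|_2\le\delta_i$, the trace difference is unrolled from zero initial difference and bounded in operator norm, the fusion bound combines the Lipschitz property, the sum of component norms and the residual term, and the readout recursion $D_i\le L_HH_i+L_ZD_{i-1}$ with $D_0=0$ is then unrolled. Your two side remarks also agree with the paper: both trajectories' states must lie in the region where the readout condition holds, and no contraction $L_Z<1$ is needed for this finite-prefix bound.
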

\textit{Proof.} Since the shared sigmoid gate has entries in $[0,1]$, $\|b_i-b'_i\|_2\leq\delta_i$. Unrolling the trace difference from zero initial difference gives $\sum_{j=1}^i A_r^{i-j}B(b_j-b'_j)$; the operator-norm bound yields the first inequality. Apply the Lipschitz bound to concatenated traces, upper-bound their concatenated norm by the sum of component norms, and add the residual difference to obtain $H_i$. Finally, the readout gives $D_i\leq L_HH_i+L_ZD_{i-1}$ with $D_0=0$; unrolling gives the stated bound. $\square$

For the implemented readout, $\pi=\operatorname{softmax}(W_z z+W_g g)$ and $F_g(h,z)=(W_\pi\pi)\odot h+b_G$. On a set with $\|h\|_\infty\leq H$, valid constants are $L_H=\|W_\pi\|_2$ and $L_Z=H\|W_\pi\|_2L_{\rm sm}\|W_z\|_2$, where $L_{\rm sm}$ is a Lipschitz constant for softmax. The finite-prefix bound does not require $L_Z<1$; a uniform long-horizon readout bound needs an additional contraction condition. Stable traces alone do not establish that condition.

\textbf{Changing only the item at a fixed step.}
Fix Alice's preceding history, action, command, and timestep $t$, and replace the current movie by another item. At that step, the trace difference is $B\operatorname{diag}(\mu(a_t))(e_t-e'_t)$. If all subsequent inputs agree, its contribution at lag $k$ has norm at most $\|B\|_2\beta_r^k\|e_t-e'_t\|_2$. Nearby embeddings therefore induce nearby continuous states under the stated assumptions. Moving progressively farther away enlarges this upper bound; it does not force the states to separate monotonically. Semantic similarity must be represented by the embedding map, and continuity of a state does not imply continuity of a discrete recommendation or generated token.

\subsection{Transfer And Pooled Training}
For equal-length schema trajectories and a command seed $g$, let $x=(S_D(\tau),g)$ and define
\[
d_{\rm sch}(x,x')=\sum_{i=1}^T\omega_i\left[\mathbf1(a_i\ne a'_i)+\|e_i-e'_i\|_2\right]+\lambda_g\|g-g'\|_2.
\]
Positive $\omega_i$ and $\lambda_g$ give a metric on these schema representations. Allowing zero weights gives a pseudometric, interpreted on its quotient. Comparisons of variable-length histories require a common padding or alignment convention. Trace-influence weights, such as $\omega_i=\sum_r\beta_r^{T-i}$, emphasize events retained at the endpoint; they do not by themselves certify a loss Lipschitz constant for the stateful readout.

\begin{proposition}[Schema-level transfer bound]\label{thm: generalization - Schema-Level Transfer Bound.}
Let source and target joint distributions over $(x,y)$ have input marginals $P_S,P_T$ with finite first moments under $d_{\rm sch}$. For fixed $\theta$, let $q_D(x)=\mathbb E_D[\ell(f_\theta(x),y)\mid x]$ for a bounded loss. Assume a version of $q_S$ is defined on both supports and is $L$-Lipschitz. Set $\Delta_{S,T}(\theta)=\mathbb E_{P_T}|q_T(x)-q_S(x)|$. Then
\[
R_T(\theta)\leq R_S(\theta)+L W_1(P_S,P_T)+\Delta_{S,T}(\theta).
\]
\end{proposition}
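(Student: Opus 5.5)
The argument is a standard add-and-subtract decomposition combined with Kantorovich--Rubinstein duality. By the tower property, $R_D(\theta)=\mathbb E_{P_D}[q_D(x)]$ for $D\in\{S,T\}$. Boundedness of the loss makes every conditional expectation integrable, so this step is routine. We then write
\[
R_T(\theta)=\mathbb E_{P_T}[q_S(x)]+\mathbb E_{P_T}[q_T(x)-q_S(x)],
\]
using the version of $q_S$ that the hypothesis defines on both supports. The second term is bounded by $\mathbb E_{P_T}|q_T-q_S|=\Delta_{S,T}(\theta)$, which is exactly the label/command-semantics discrepancy. The first term is split further as $\mathbb E_{P_S}[q_S]+\big(\mathbb E_{P_T}[q_S]-\mathbb E_{P_S}[q_S]\big)$, and $\mathbb E_{P_S}[q_S]=R_S(\theta)$.

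The remaining difference $\mathbb E_{P_T}[q_S]-\mathbb E_{P_S}[q_S]$ is handled by Kantorovich--Rubinstein duality on the metric space $(\mathcal X,d_{\rm sch})$:
\[
\sup_{\|\varphi\|_{\rm Lip}\le 1}\big(\mathbb E_{P_T}\varphi-\mathbb E_{P_S}\varphi\big)=W_1(P_S,P_T).
\]
Applying it to $\varphi=q_S/L$ gives the bound $L\,W_1(P_S,P_T)$. Summing the three pieces yields the claim. If one prefers to avoid duality, the same bound follows directly from an optimal (or $\varepsilon$-optimal) coupling $\gamma$ of $P_S$ and $P_T$: $\big|\mathbb E_\gamma[q_S(x')-q_S(x)]\big|\le L\,\mathbb E_\gamma d_{\rm sch}(x,x')$. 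Letting $\varepsilon\to0$ then finishes the argument, and this route only needs the existence of couplings.

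The main obstacle is measure-theoretic bookkeeping, not the inequality itself. Three points need care. First, $q_S$ is a conditional expectation defined only $P_S$-almost everywhere, so its values on $\operatorname{supp}P_T$ are meaningful only through the assumed Lipschitz version. That version must be used consistently in both the $\Delta$ term and the $W_1$ term. Second, $d_{\rm sch}$ may be only a pseudometric when some weights vanish. There we pass to the quotient, as the paper indicates, and note that an $L$-Lipschitz function is automatically constant on zero-distance classes, so it descends to the quotient. Third, the finite first moments guarantee that $W_1$ is finite and that duality applies on the (separable) space of equal-length schema representations. Separability holds because $\mathcal A$ is finite and the embeddings and seeds lie in Euclidean spaces.
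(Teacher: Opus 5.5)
Your proposal is correct and follows the paper's proof: add and subtract $\mathbb E_{P_T}q_S$, bound the conditional-loss gap by $\Delta_{S,T}(\theta)$, and bound $\mathbb E_{P_T}q_S-\mathbb E_{P_S}q_S$ by $L\,W_1(P_S,P_T)$ via Kantorovich--Rubinstein duality. Your coupling route is all the argument needs, since only the elementary direction of the duality is used. The finite first moments also give something you did not state: together with the Lipschitz bound, they make the extended $q_S$ integrable under $P_T$, so the split is legitimate.
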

\textit{Proof.} Add and subtract $\mathbb E_{P_T}q_S$ in $R_T=\mathbb E_{P_T}q_T$. The conditional-loss difference is at most $\Delta_{S,T}$, while Kantorovich--Rubinstein duality bounds $\mathbb E_{P_T}q_S-\mathbb E_{P_S}q_S$ by $L W_1(P_S,P_T)$. $\square$

The bound requires both comparable input geometry and compatible target or command semantics. A small embedding distance alone cannot control conflicting labels. If a source-risk bound $R_S\leq\widehat R_S+\mathcal G_S(n,\delta)$ holds uniformly over the selected model class, it can be substituted into this inequality for a model learned from the same sample.

For $M$ domains, let $n_m$ be fixed sample counts, $N=\sum_m n_m$, and $\pi_m=n_m/N$. Let $P_\pi=\sum_m\pi_mP_m$ be the mixture of input marginals, with joint mixture and risk defined using the same weights. For transfer from a pooled mixture to target $t$, write $\Delta_{t,\pi}(\theta)=\mathbb E_{P_t}|q_t(x)-q_\pi(x)|$, where $q_\pi$ is the conditional expected loss under the joint mixture. The pooled empirical objective is
\[
\widehat R_\pi(\theta)=\frac1N\sum_{m=1}^M\sum_{i=1}^{n_m}\ell(f_\theta(x_{m,i}),y_{m,i}),\qquad
\widehat\theta_\pi\in\arg\min_{\theta\in\Theta}\widehat R_\pi(\theta).
\]
\begin{proposition}[Common-substrate training bound]\label{thm: generalization - Common-Substrate Training Bound.}
Assume examples are independent within and across domains, with each domain's examples drawn from its specified joint distribution. Let $\ell\in[0,1]$ and
$\mathcal F=\{(x,y)\mapsto\ell(f_\theta(x),y):\theta\in\Theta\}$.
Let $\widehat{\mathfrak R}_N(\mathcal F)$ be its empirical Rademacher complexity on the pooled sample. Assume the preceding transfer proposition applies to the mixture and target for every $\theta\in\Theta$, with a common constant $L$. With probability at least $1-\delta$,
\begin{align*}
R_t(\widehat\theta_\pi)\leq{}&\widehat R_\pi(\widehat\theta_\pi)+2\widehat{\mathfrak R}_N(\mathcal F)+3\sqrt{\frac{\log(2/\delta)}{2N}}\\
&+L W_1(P_t,P_\pi)+\Delta_{t,\pi}(\widehat\theta_\pi).
\end{align*}
The empirical-risk term can also be replaced by $\widehat R_\pi(\theta^\star)$ for any comparator $\theta^\star\in\Theta$.
\end{proposition}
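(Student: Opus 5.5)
The plan is to combine the schema-level transfer bound (Proposition~\ref{thm: generalization - Schema-Level Transfer Bound.}), applied with the pooled mixture as the source, with a uniform Rademacher generalization bound for the pooled sample. The pooled sample is not identically distributed, so the first task is to identify what the empirical objective estimates. Since $n_m$ are fixed, $\mathbb E\,\widehat R_\pi(\theta)=\sum_m\pi_m R_m(\theta)=R_\pi(\theta)$, where $R_\pi$ is the risk under the joint mixture with weights $\pi_m$. This holds for every fixed $\theta$. It is also exactly the population quantity $\mathbb E_{P_\pi}q_\pi$ appearing when the transfer proposition is applied with $P_S=P_\pi$ and $q_S=q_\pi$.

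First, I will prove a uniform deviation bound $\sup_{\theta}(R_\pi(\theta)-\widehat R_\pi(\theta))\le 2\widehat{\mathfrak R}_N(\mathcal F)+3\sqrt{\log(2/\delta)/(2N)}$ with probability at least $1-\delta$. Let $\Phi=\sup_\theta(R_\pi-\widehat R_\pi)$. Changing one example changes $\Phi$ by at most $1/N$ because $\ell\in[0,1]$, and the examples are independent, even though they are not identically distributed. McDiarmid's inequality therefore gives $\Phi\le\mathbb E\Phi+\sqrt{\log(2/\delta)/(2N)}$ with probability $1-\delta/2$.

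The symmetrization step is the main obstacle, because the samples are only independent and not i.i.d. I would handle it with a ghost sample $x'_{m,i}$ drawn from the same domain $m$ for each slot. Swapping $x_{m,i}\leftrightarrow x'_{m,i}$ preserves the joint law slot by slot, so Rademacher signs can be introduced per slot exactly as in the i.i.d.\ proof. This yields $\mathbb E\Phi\le 2\,\mathbb E\,\widehat{\mathfrak R}_N(\mathcal F)$. A second McDiarmid application to $\widehat{\mathfrak R}_N$, which also has bounded differences $1/N$, replaces the expectation by the empirical complexity at cost $2\sqrt{\log(2/\delta)/(2N)}$ with probability $1-\delta/2$. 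A union bound then gives the total constant $3$ and confidence $1-\delta$.

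On this event, I will evaluate at the data-dependent $\widehat\theta_\pi$. The uniformity is what permits this. I then apply the transfer proposition for that $\theta$, which is allowed because its hypotheses are assumed for every $\theta\in\Theta$ with a common $L$:
\[R_t(\widehat\theta_\pi)\le R_\pi(\widehat\theta_\pi)+LW_1(P_t,P_\pi)+\Delta_{t,\pi}(\widehat\theta_\pi).\]
Substituting $R_\pi(\widehat\theta_\pi)\le\widehat R_\pi(\widehat\theta_\pi)+2\widehat{\mathfrak R}_N+3\sqrt{\log(2/\delta)/(2N)}$ gives the displayed bound. For the comparator version, I use $\widehat R_\pi(\widehat\theta_\pi)\le\widehat R_\pi(\theta^\star)$, which holds by the definition of the empirical minimizer. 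I also need to check that $W_1$ is symmetric, so that $W_1(P_\pi,P_t)=W_1(P_t,P_\pi)$.
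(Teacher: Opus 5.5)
Your proposal is correct and follows the paper's proof essentially step for step. The shared steps are domain-wise ghost-sample symmetrization, two bounded-differences (McDiarmid) applications at level $\delta/2$ each (which produce the constant $3$), evaluation at the data-dependent $\widehat\theta_\pi$ by uniformity, the transfer inequality with the pooled mixture as source, and empirical optimality for the comparator form. Your extra checks that $\mathbb E\,\widehat R_\pi(\theta)=R_\pi(\theta)=\mathbb E_{P_\pi}q_\pi$ and that $W_1$ is symmetric are details the paper leaves implicit.
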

\textit{Proof.} Symmetrization with independent ghost observations from each corresponding domain bounds the expected uniform risk gap by twice the expected Rademacher complexity. Changing one observation changes the uniform gap by at most $1/N$ and the empirical complexity by at most $1/N$. Applying bounded differences to each, with failure probability $\delta/2$, gives the uniform bound $R_\pi(\theta)\leq\widehat R_\pi(\theta)+2\widehat{\mathfrak R}_N(\mathcal F)+3\sqrt{\log(2/\delta)/(2N)}$. It holds at the data-dependent $\widehat\theta_\pi$. Apply the transfer inequality with its common $L$; empirical optimality gives the comparator form. $\square$

\begin{corollary}[Estimation bound under compatible pooling]\label{cor:schema-compatible-pooling}
If $P_t=P_\pi$ and $\Delta_{t,\pi}(\theta)=0$, only empirical risk and the estimation terms remain. If the same class admits $\widehat{\mathfrak R}_n(\mathcal F)\leq C/\sqrt n$ for the samples considered, the resulting estimation upper bound for $N\geq n_t$ is no larger than the corresponding bound for a domain-only sample of size $n_t$.
\end{corollary}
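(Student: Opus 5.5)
The plan is to specialize the bound in Proposition~\ref{thm: generalization - Common-Substrate Training Bound.} and then compare two instances of one and the same expression.

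\textbf{Step 1: remove the mismatch terms.} Under $P_t=P_\pi$ we have $W_1(P_t,P_\pi)=0$, since $W_1$ of a distribution with itself vanishes. This holds for the pseudometric $d_{\rm sch}$ on its quotient as well. The hypothesis $\Delta_{t,\pi}(\theta)=0$ for all $\theta$, in particular at $\widehat\theta_\pi$, removes the label and command discrepancy. With probability at least $1-\delta$ we are left with
\[
R_t(\widehat\theta_\pi)\leq\widehat R_\pi(\widehat\theta_\pi)+2\widehat{\mathfrak R}_N(\mathcal F)+3\sqrt{\log(2/\delta)/(2N)}.
\]
That is, only empirical risk and estimation terms remain, which is the first claim.

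\textbf{Step 2: compare the estimation terms.} Write the estimation allowance for a sample of size $n$ as
\[
\varepsilon(n)=2\widehat{\mathfrak R}_n(\mathcal F)+3\sqrt{\log(2/\delta)/(2n)}.
\]
Substituting the assumed complexity bound gives
\[
\varepsilon(n)\leq U(n):=\frac{2C}{\sqrt n}+3\sqrt{\frac{\log(2/\delta)}{2n}}=\frac{1}{\sqrt n}\Bigl(2C+3\sqrt{\log(2/\delta)/2}\Bigr).
\]
The bracketed factor is nonnegative and does not depend on $n$, so $U$ is nonincreasing in $n$. Hence $N\geq n_t$ gives $U(N)\leq U(n_t)$.

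\textbf{Step 3: identify the domain-only bound.} The domain-only bound is the same proposition applied with $M=1$ and the target sample alone, with $P_\pi=P_t$ and $\Delta=0$ trivially. Its estimation allowance is exactly $U(n_t)$ once the same $C/\sqrt n$ bound is used. So the pooled upper bound on the estimation terms is no larger than the domain-only one, which is the second claim.

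\textbf{Main obstacle.} The difficulty is interpretive: the corollary compares \emph{upper bounds}, not the actual empirical complexities or risks. I would state explicitly that $\widehat{\mathfrak R}_N\leq C/\sqrt N$ must hold on the realized pooled sample. This is what ``for the samples considered'' means. I would also note that the empirical-risk terms differ between the two fits and are not being compared. A secondary point is the interpretation of $P_t=P_\pi$ in the joint sense: combined with $\Delta=0$, it makes $R_t=R_\pi$ at the chosen $\theta$. That justifies reading the specialized display as a genuine target-risk guarantee.
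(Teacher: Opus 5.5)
Your proposal is correct and follows the paper's intended argument: the paper gives no separate proof, treating the corollary as the common-substrate bound specialized to $W_1(P_t,P_\pi)=0$ and $\Delta_{t,\pi}=0$, together with the observation that $2C/\sqrt{n}+3\sqrt{\log(2/\delta)/(2n)}$ is nonincreasing in $n$, so its value at $n=N\geq n_t$ is at most its value at $n=n_t$. Your caveat that only the estimation upper bounds are compared, not empirical risks or accuracies, is the same qualification the paper adds immediately after the corollary.
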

This comparison concerns estimation bounds, not a guarantee of lower empirical risk or higher accuracy after pooling. Correlated augmentations do not automatically satisfy the independent-example assumption, and the bounded analysis loss is not the unbounded cross-entropy used in training. The theory identifies conditions under which shared training is compatible; the frozen-encoder and schema-matched experiments test whether learned updates are useful in the reported settings. MTSH then specifies how those updates enter, persist in, and are read from the shared encoder.

\section{UIG Augmentation}
\label{app:peraugy_aug}
After inserting surrogate response-nodes via \textit{genRes} and \textit{resGen} interactions in the base trajectories, we apply \texttt{PerAugy}-style \cite{peraugy-tmlr-2025} augmentation to (i) increase trajectory diversity while preserving selected local order constraints, and (ii) reduce local incoherence introduced by surrogate responses (r-nodes). We apply augmentation to u-tier trajectories $\tau^{u_j}$ before training, yielding an expanded trajectory pool used to instantiate $\mathcal{T}^{\text{PENS}}$ (and its perturbed variants).
\begin{figure*}[t]
\centering
\includegraphics[width=0.65\linewidth]{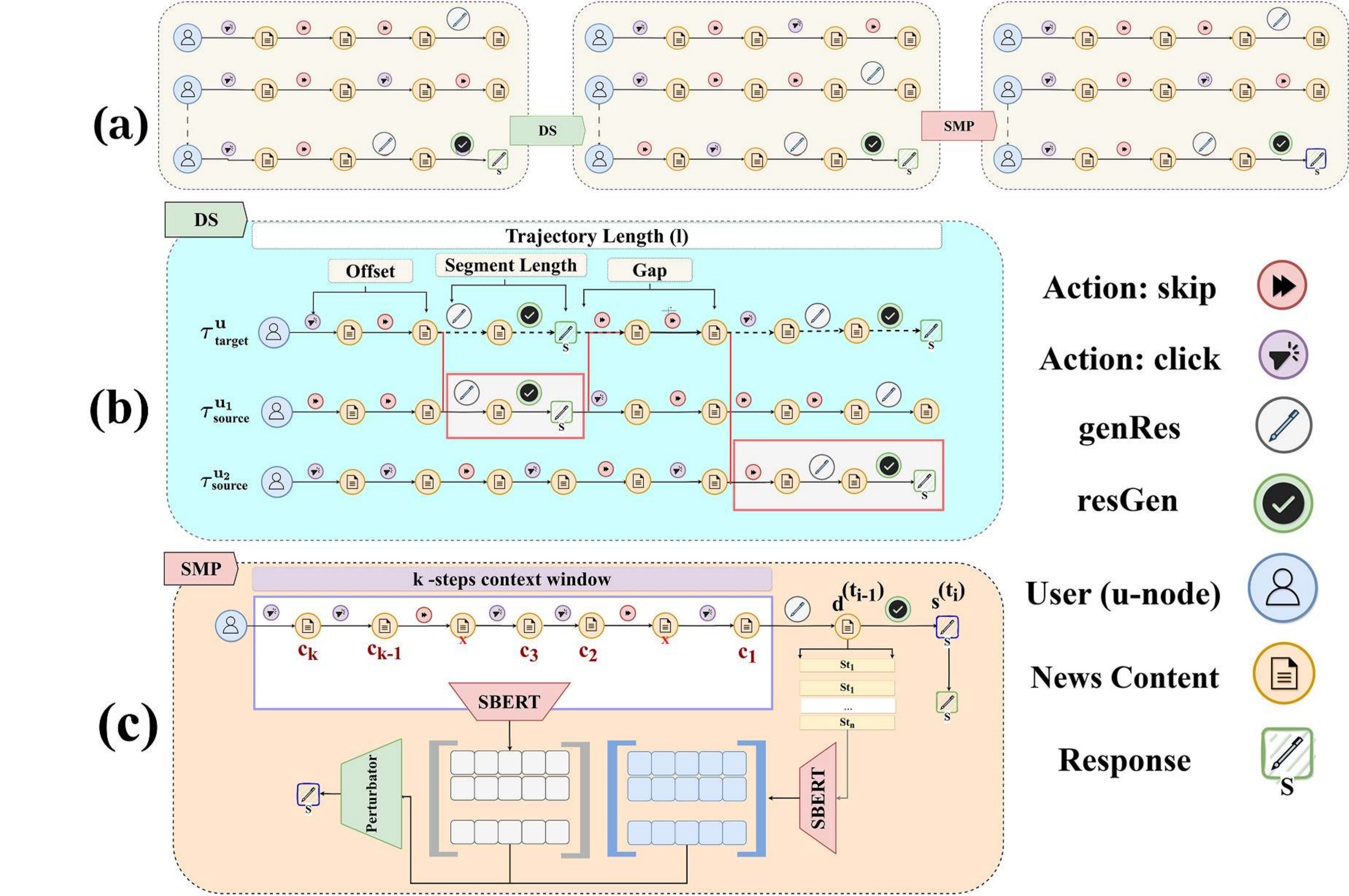}
\caption{\textbf{UIG Augmentation Pipeline:} \cite{peraugy-tmlr-2025} proposed framework -- (a) \textit{Pipeline overview depicting the two-step augmentation}, (b) \textit{Double Shuffling (DS) to ensure cross-trajectory augmentation and induce diffusion}, (c) \textit{Stochastic Markovian Perturbation (SMP) to smoothen the response-nodes and modulate random diffusion incorporated in DS stage}.}
\label{fig:peraugy pipeline}
\end{figure*}

\paragraph{Double shuffling.}
Given $m$ trajectories sampled within the same permitted pool, we select one target trajectory $\tau^{u_j}_{\text{target}}$ and treat the remaining $m{-}1$ trajectories as sources. We construct a synthetic trajectory by (i) preserving an early prefix of the target determined by an offset $O$, and (ii) substituting later contiguous segments using source trajectory snippets. To preserve local temporal realism, we enforce a gap hyperparameter $G$ such that after each substitution, $G$ consecutive steps from the target are kept intact.

\begin{algorithm}[t]
\caption{\textsc{DoubleShuffling}: Cross-Trajectory Segment Substitution}
\label{alg:DS}
\begin{algorithmic}[1]
\REQUIRE Target trajectory $\tau^{u_j}$; source trajectories $\{\tau^{u_i}\}_{i=1}^{m-1}$; offset distribution $\mathcal{P}_O$; gap size $G$
\ENSURE Synthetic trajectory $\tilde{\tau}^{u_j}$

\STATE Sample offset $O \sim \mathcal{P}_O$
\STATE $\tilde{\tau}^{u_j} \gets \tau^{u_j}[1{:}O]$ \hfill{\small (preserve prefix)}
\STATE $t \gets O$; $T\gets|\tau^{u_j}|$

\FOR{each source trajectory $\tau^{u_i}$}
    \STATE If $t\geq T$, stop; otherwise sample a segment of length $s\leq T-t$ from $\tau^{u_i}$
    \STATE $\tau_{\text{seg}}\gets\mathrm{SampleSegment}(\tau^{u_i},s)$; $g\gets\min(G,T-t-s)$
    \STATE $\tilde{\tau}^{u_j} \gets \tilde{\tau}^{u_j} \oplus \tau_{\text{seg}}$ \hfill{\small (substitute)}
    \STATE $\tilde{\tau}^{u_j} \gets \tilde{\tau}^{u_j} \oplus \tau^{u_j}[t{+}s{+}1{:}t{+}s{+}g]$ \hfill{\small (protected gap)}
    \STATE $t \gets t+s+g$
\ENDFOR

\STATE $\tilde{\tau}^{u_j} \gets \tilde{\tau}^{u_j} \oplus \tau^{u_j}[t+1{:}T]$
\RETURN $\tilde{\tau}^{u_j}$
\end{algorithmic}
\end{algorithm}

\paragraph{Stochastic Markovian perturbation.}
While shuffling increases structural diversity, it can introduce surrogate r-nodes whose content is poorly aligned with the target user’s recent context, reducing temporal coherence. We therefore apply a local, context-aware perturbation that replaces a substituted r-node with a sentence that better matches the immediate history of the \emph{target} trajectory. Consider a substituted summary node $r^{(t_i)}$ in $\tau^{u_j}_{\text{DS}}$. Let $\tau^{u_j}_{c_k}$ denote a backward context window of the previous $k$ nodes $\{c_q\}_{q=1}^{k}$ (most recent first), and let $d^{(t_{i-1})}$ be the preceding d-node. We segment $d^{(t_{i-1})}$ into candidate sentences $\{st_p\}_{p=1}^{n_{d^{(t_{i-1})}}}$. Using SBERT embeddings, we compute an influence-weighted distance between each candidate sentence and the context window, with temporal relevance enforced by an exponential decay weight $w_q = \exp(-\lambda\cdot \mathrm{pos}(c_q))$ and $\mathrm{pos}(c_1)=0$.

Let $\mathbf{\sigma} \in \mathbb{R}^{n_{d^{(t_{i-1})}} \times k}$ be the distance matrix with entries
$\mathbf{\sigma}_{(p,q)}=\sigma(\mathbf{e}_{st_p},\mathbf{e}_{c_q})$, where $\sigma(v,w)=\|v-w\|_2/\sqrt d$ is root mean squared distance (RMSD) for $d$-dimensional embeddings. We select:
\begin{equation}
\label{eq:SMP_app}
\hat{r}^{(t_i)}=st_{p^*},\qquad p^*
=
\arg\min_{1\leq p\leq n_{d^{(t_{i-1})}}}
\sum_{q=1}^{k}
\exp\!\big(-\lambda \cdot \mathrm{pos}(c_q)\big)\;
\sigma(\mathbf{e}_{st_p},\mathbf{e}_{c_q}).
\end{equation}
Equivalently, per-sentence scores can be written as $\mathbf{\sigma}\,\mathbf{w}$ where $\mathbf{w}_q=\exp(-\lambda\cdot \mathrm{pos}(c_q))$.

\begin{algorithm}[t]
\caption{\textsc{SummaryPerturbation}: Local Summary Rewriting}
\label{alg:SMP}
\begin{algorithmic}[1]
\REQUIRE Trajectory $\tau^{u_j}$; window size $k$; decay $\lambda$; perturbation rate $p_{\mathrm{SMP}}$
\ENSURE Perturbed trajectory $\tilde{\tau}^{u_j}$

\STATE $\tilde{\tau}^{u_j} \gets \tau^{u_j}$
\FOR{each substituted summary node $s^{(t_i)} \in \tilde{\tau}^{u_j}$}
    \STATE Sample $z \sim \mathrm{Bernoulli}(p_{\mathrm{SMP}})$
    \IF{$z=1$}
        \STATE $\mathcal{C} \gets \mathrm{ContextWindow}(\tilde{\tau}^{u_j},t_i,k)$
        \STATE $\hat{s}^{(t_i)} \gets \mathrm{RewriteSummary}(s^{(t_i)},\mathcal{C},\lambda)$ \hfill{\small (Eq.~\eqref{eq:SMP_app})}
        \STATE Replace $s^{(t_i)} \leftarrow \hat{s}^{(t_i)}$
    \ENDIF
\ENDFOR
\RETURN $\tilde{\tau}^{u_j}$
\end{algorithmic}
\end{algorithm}

For PENS, these operations construct the augmented training pool $\tau^{\text{PENS-D}}$. For OpenAI--Reddit, they are applied separately within the native training and evaluation pools, as described in Appendix~\ref{app:UIG Construction from OpenAI-styled}; source segments are not exchanged between pools. The perturbed s-nodes provide additional, coherence-aware supervision for personalized summarization tasks.
\section{\pertide\ Architectural Details}
\label{app:arch-details}

\subsection{Discrete-Time SSM Stability}
\label{app:stability-ssm}

We first state the stability condition used by the discrete state-space traces in \pertide. Consider the linear recursion
\[
\mathbf{h}_{t}
=
\mathbf{A}\mathbf{h}_{t-1}
+
\mathbf{B}\mathbf{x}_{t},
\]
with homogeneous dynamics
\[
\mathbf{h}_{t}
=
\mathbf{A}\mathbf{h}_{t-1}.
\]
The stability of the homogeneous part determines whether past perturbations decay or persist without bound.

\begin{proposition}[Discrete-time stability by spectral radius]
\label{prop:discrete_ssm_stability}
Let $\mathbf{A}\in\mathbb{C}^{d\times d}$. If
\[
\rho(\mathbf{A})<1,
\]
where $\rho(\mathbf{A})=\max_i |\lambda_i(\mathbf{A})|$ is the spectral radius, then the homogeneous recursion
\[
\mathbf{h}_{t}=\mathbf{A}\mathbf{h}_{t-1}
\]
is asymptotically stable, i.e.,
\[
\lim_{t\to\infty}\mathbf{A}^{t}\mathbf{h}_{0}=\mathbf{0}
\]
for every initial state $\mathbf{h}_{0}$.
\end{proposition}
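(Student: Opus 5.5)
The plan is to show that $\|\mathbf A^t\|\to 0$ for some (hence every) matrix norm. Then $\|\mathbf A^t\mathbf h_0\|\le\|\mathbf A^t\|\,\|\mathbf h_0\|\to 0$ for every initial state, which is the claimed asymptotic stability. The spectral-radius hypothesis controls eigenvalues, not norms, so the core of the argument is converting $\rho(\mathbf A)<1$ into decay of powers. Note that unlike the trace bounds in Proposition~\ref{thm: generalization - Stability Under Target-Locus Embedding Shifts}, which assumed $\|A_r\|_2\le\beta_r<1$ directly, here no operator-norm bound is given. A non-normal $\mathbf A$ can have $\|\mathbf A\|_2>1$ while $\rho(\mathbf A)<1$.

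I would use the Jordan decomposition $\mathbf A=P J P^{-1}$, so that $\mathbf A^t=P J^t P^{-1}$ and it suffices to show $J^t\to 0$ entrywise. For a Jordan block $J_k(\lambda)=\lambda I+N$ of size $k$ with $N$ nilpotent ($N^k=0$), the commuting binomial expansion gives
\[
J_k(\lambda)^t=\sum_{j=0}^{k-1}\binom{t}{j}\lambda^{t-j}N^j .
\]
Each entry is bounded by $\binom{t}{j}|\lambda|^{t-j}\le t^{j}|\lambda|^{t-j}$. Since $|\lambda|\le\rho(\mathbf A)<1$, a polynomial in $t$ times a geometric factor tends to zero: for instance the ratio of consecutive terms tends to $|\lambda|<1$, or one can write $t^j|\lambda|^t=\exp(j\log t+t\log|\lambda|)$. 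For $\lambda=0$ the block is nilpotent and vanishes for $t\ge k$. Because there are finitely many blocks of bounded size, $J^t\to 0$, and hence $\mathbf A^t\to 0$.

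As an alternative that avoids Jordan form, I could invoke Gelfand's formula $\lim_t\|\mathbf A^t\|^{1/t}=\rho(\mathbf A)$. Choosing $\varepsilon$ with $\rho(\mathbf A)+\varepsilon<1$ gives $\|\mathbf A^t\|\le(\rho(\mathbf A)+\varepsilon)^t$ for all large $t$. A third option is to construct an induced norm with $\|\mathbf A\|_*<1$ by rescaling the Jordan basis with $\mathrm{diag}(1,\epsilon,\dots,\epsilon^{k-1})$, which shrinks the superdiagonal ones to $\epsilon$. I would present the Jordan argument as primary because it is self-contained.

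The main obstacle is the non-diagonalizable case, namely controlling the polynomial growth $\binom{t}{j}$ created by the nilpotent part against the geometric decay $|\lambda|^{t}$. The key estimate is $t^{k-1}\rho^{\,t-k+1}\to 0$, and the rest is bookkeeping. Finally, I would note in a remark that the same argument gives exponential decay at any rate $\rho(\mathbf A)+\varepsilon$. This links back to the traces in Eq.~\eqref{eq:tristate}: there the shared orthogonal $Q$ makes each $A_r$ normal, so $\|A_r\|_2=\rho(A_r)$ and the rate is exactly $\rho(A_r)$.
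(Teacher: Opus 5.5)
Your proposal is correct and follows the same route as the paper: reduce asymptotic stability to $\mathbf A^t\to\mathbf 0$, then apply this to an arbitrary $\mathbf h_0$. The only difference is in that first step. The paper simply cites the standard spectral-radius characterization of matrix powers, whereas your Jordan-block estimate (or the Gelfand-formula alternative) proves it, including the non-diagonalizable case.
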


\begin{proof}
Since $\rho(\mathbf{A})<1$, all eigenvalues of $\mathbf{A}$ lie strictly inside the unit disk. By the standard spectral-radius characterization of matrix powers, this implies
\[
\lim_{t\to\infty}\mathbf{A}^{t}=\mathbf{0}.
\]
Therefore,
\[
\lim_{t\to\infty}\mathbf{h}_{t}
=
\lim_{t\to\infty}\mathbf{A}^{t}\mathbf{h}_{0}
=
\mathbf{0},
\]
which proves asymptotic stability.
\end{proof}

\begin{corollary}[Modal stability for diagonalizable transitions]
\label{cor:modal_stability}
Suppose $\mathbf{A}$ is diagonalizable as
\[
\mathbf{A}
=
\mathbf{Q}\operatorname{diag}(a_1,\ldots,a_d)\mathbf{Q}^{-1}.
\]
Then the homogeneous dynamics are asymptotically stable if
\[
|a_i|<1
\qquad
\text{for all } i=1,\ldots,d.
\]
For $0<|a_i|<1$, the contribution of mode $i$ after lag $\Delta$ decays as
\[
|a_i|^{\Delta}
=
\exp\!\left(-\frac{\Delta}{\tau_i}\right),
\qquad
\tau_i
=
-\frac{1}{\log |a_i|}.
\]
\end{corollary}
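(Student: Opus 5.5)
The plan is to reduce the corollary to Proposition~\ref{prop:discrete_ssm_stability} and then compute the modal decay rate directly in the eigenbasis.

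\textbf{Stability.} Since $\mathbf A=\mathbf Q\operatorname{diag}(a_1,\ldots,a_d)\mathbf Q^{-1}$ is a similarity transform, $\mathbf A$ has the same eigenvalues as the diagonal matrix, namely $a_1,\ldots,a_d$. Hence $\rho(\mathbf A)=\max_i|a_i|$, which is $<1$ under the hypothesis. Proposition~\ref{prop:discrete_ssm_stability} then gives $\mathbf A^t\mathbf h_0\to\mathbf 0$ for every $\mathbf h_0$. A self-contained alternative uses the telescoping identity $\mathbf A^t=\mathbf Q\operatorname{diag}(a_1^t,\ldots,a_d^t)\mathbf Q^{-1}$. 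This gives $\|\mathbf A^t\|\le\|\mathbf Q\|\,\|\mathbf Q^{-1}\|\max_i|a_i|^t\to0$, and the explicit condition-number factor is worth recording.

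\textbf{Modal decay.} I will set $\mathbf c=\mathbf Q^{-1}\mathbf h_0$, so that $\mathbf A^\Delta\mathbf h_0=\sum_i a_i^\Delta c_i\mathbf q_i$, where $\mathbf q_i$ are the columns of $\mathbf Q$. The contribution of mode $i$ after lag $\Delta$ is therefore scaled by $a_i^\Delta$, with magnitude $|a_i|^\Delta$. For $0<|a_i|<1$, write $|a_i|^\Delta=\exp(\Delta\log|a_i|)$. Because $\log|a_i|<0$, the quantity $\tau_i=-1/\log|a_i|$ is positive and finite, and $\Delta\log|a_i|=-\Delta/\tau_i$. This yields the stated exponential form.

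\textbf{Main obstacle.} The only subtlety is interpretive. The word ``contribution'' must be read as the modal coefficient in the (possibly non-orthogonal) eigenbasis, not as a Euclidean norm of the state, since the latter carries the $\kappa(\mathbf Q)$ factor. I would state this explicitly. I would also note that for the implemented traces $\mathbf Q$ is orthogonal, so the factor equals $1$. The case $a_i=0$ is excluded from the formula but is trivially stable.
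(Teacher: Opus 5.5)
Your proposal is correct and matches the paper's proof: the paper argues exactly via your ``self-contained alternative'' $\mathbf A^\Delta=\mathbf Q\operatorname{diag}(a_1^\Delta,\ldots,a_d^\Delta)\mathbf Q^{-1}\to\mathbf 0$, rather than routing through Proposition~\ref{prop:discrete_ssm_stability}, and then rewrites $|a_i|^\Delta=\exp(\Delta\log|a_i|)$ to read off $\tau_i$, just as you do. Your remarks on the $\kappa(\mathbf Q)$ factor, the orthogonal case, and $a_i=0$ mirror the comments the paper makes immediately after the proof.
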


\begin{proof}
For a diagonalizable transition,
\[
\mathbf{A}^{\Delta}
=
\mathbf{Q}
\operatorname{diag}(a_1^{\Delta},\ldots,a_d^{\Delta})
\mathbf{Q}^{-1}.
\]
Thus $\mathbf{A}^{\Delta}\to\mathbf{0}$ whenever $|a_i|<1$ for every mode. The effective decay relation follows by writing
\[
|a_i|^{\Delta}
=
\exp(\Delta\log |a_i|)
=
\exp\!\left(-\frac{\Delta}{\tau_i}\right),
\]
which gives
\[
\tau_i
=
-\frac{1}{\log |a_i|}.
\]
\end{proof}

A zero modal base removes its homogeneous contribution after one step; the logarithmic time constant above is defined only for nonzero bases. For the shared orthogonal eigenbasis used here, the spectral norm equals the largest absolute modal base, so modal bounds also supply the contraction constants used in Appendix~\ref{app:schema_generalizability}.

\paragraph{Connection to continuous-time eigenvalues.}
If the discrete transition is induced by a continuous-time generator $\mathbf{\Lambda}$ with eigenvalues $\lambda_i\in\mathbb{C}$, then continuous-time stability requires
\[
\operatorname{Re}(\lambda_i)<0.
\]
Under exponential discretization,
\[
a_i=\exp(\lambda_i\Delta t),
\]
so
\[
|a_i|
=
\exp(\operatorname{Re}(\lambda_i)\Delta t).
\]
Hence,
\[
|a_i|<1
\quad
\Longleftrightarrow
\quad
\operatorname{Re}(\lambda_i)<0.
\]
Under Euler-style discretization,
\[
a_i=1+\eta\lambda_i,
\]
and stability requires
\[
|1+\eta\lambda_i|<1.
\]
Thus Euler-style maps impose an explicit step-size constraint coupling $\eta$ and $\lambda_i$. The following analysis assumes these discrete-mode constraints. The spectral formulas alone do not ensure the Euler constraint or order the retention ranges of the three branches.

\subsection{SSM Hyperparameter Choices}
\label{app:ssm-hparams}

We next give sufficient rules for selecting the short-memory horizon and the event-step size. These rules are not claimed to be necessary; they provide simple conservative choices that make the intended timescale behavior explicit.

\paragraph{Notation.}
Let $\varepsilon\in(0,1/2)$ be a small tolerance, let
\[
\sigma(x)=\frac{1}{1+e^{-x}}
\]
be the sigmoid function, and let $\tau>0$ denote a desired exponential decay time constant.

\subsubsection*{Short-Memory Retention And Lag}
The short-trace modal base is $a_S=\sigma(-\kappa\lambda_S)$, with $\lambda_S>0$. The parameter $\kappa$ controls per-step retention; it is not the lag. Retention at lag $H$ is $a_S^H$.
\begin{lemma}[Sufficient per-step suppression]\label{lem:choose_k}
For $\lambda_S>0$ and $\varepsilon\in(0,1/2)$,
\begin{equation}\label{eq:choose_k_solution}
\kappa\geq\frac1{\lambda_S}\log\frac{1-\varepsilon}{\varepsilon}
\end{equation}
implies
\begin{equation}\label{eq:choose_k_goal}
a_S=\sigma(-\kappa\lambda_S)\leq\varepsilon.
\end{equation}
\end{lemma}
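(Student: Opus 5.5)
The plan is to reduce the claim to a monotonicity statement about the sigmoid. First I invert the target inequality in closed form, then check that the stated lower bound on $\kappa$ lands inside the feasible region.

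The key observation is that $\sigma$ is strictly increasing on $\mathbb R$. Its inverse on $(0,1)$ is the logit, $\sigma^{-1}(p)=\log\frac{p}{1-p}$. So for $\varepsilon\in(0,1)$,
\[
\sigma(x)\leq\varepsilon
\quad\Longleftrightarrow\quad
x\leq\log\frac{\varepsilon}{1-\varepsilon}=-\log\frac{1-\varepsilon}{\varepsilon}.
\]
I would verify this directly: $\frac{1}{1+e^{-x}}\leq\varepsilon$ is equivalent to $1\leq\varepsilon(1+e^{-x})$. That rearranges to $e^{-x}\geq\frac{1-\varepsilon}{\varepsilon}$, and taking logarithms preserves the inequality.

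Next I set $x=-\kappa\lambda_S$. Since $\lambda_S>0$, multiplying \eqref{eq:choose_k_solution} by $\lambda_S$ keeps the direction and gives $\kappa\lambda_S\geq\log\frac{1-\varepsilon}{\varepsilon}$. Equivalently, $-\kappa\lambda_S\leq-\log\frac{1-\varepsilon}{\varepsilon}$. The displayed equivalence then yields $a_S=\sigma(-\kappa\lambda_S)\leq\varepsilon$, which is \eqref{eq:choose_k_goal}.

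The only point needing care is the sign bookkeeping. The assumption $\lambda_S>0$ is what makes dividing or multiplying by $\lambda_S$ order-preserving. The restriction $\varepsilon<1/2$ makes $\log\frac{1-\varepsilon}{\varepsilon}>0$, so the required $\kappa$ is positive and the base is pushed below $1/2$, meaning genuine suppression. This restriction is not otherwise needed for the implication itself.

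As a closing remark, I would note the consequence for lag. Because $a_S\in(0,\varepsilon]$, the retention at lag $H$ satisfies $a_S^H\leq\varepsilon^H$. This connects the lemma to the lag-response kernel $K_S(\Delta)=A_S^\Delta$ and, through the orthogonal basis, to the contraction constant $\beta_S\leq\varepsilon$ used in the stability proposition of Appendix~\ref{app:schema_generalizability}.
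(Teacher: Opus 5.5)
Your proof is correct and is essentially the paper's argument: the paper also rewrites $\sigma(-\kappa\lambda_S)\le\varepsilon$ as $e^{\kappa\lambda_S}\ge(1-\varepsilon)/\varepsilon$ and takes logarithms, with $\lambda_S>0$ allowing the division. Your closing remark that $\beta_S\le\varepsilon$ goes beyond the lemma and would need the condition to hold for every modal $\lambda_{S,i}$, but this does not affect the proof itself.
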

\begin{proof}
The inequality $1/(1+e^{\kappa\lambda_S})\leq\varepsilon$ is equivalent to $e^{\kappa\lambda_S}\geq(1-\varepsilon)/\varepsilon$. Taking logarithms gives the stated condition.
\end{proof}
For a fixed base $a_S\in(0,1)$, a lag $H\geq\log\varepsilon/\log a_S$ instead ensures $a_S^H\leq\varepsilon$. These are distinct choices: one selects a base, the other reads its resulting horizon.

\begin{corollary}[Rapid suppression beyond the short-memory horizon]
\label{cor:short_tail_suppression}
Let $a_S=\sigma(-\kappa\lambda_S)\in(0,1)$ be a short-trace modal base. If $|u_{t-\ell}|\leq R$ for all $\ell$, then the total contribution from lags at least $W$ is bounded by
\[
\left|
\sum_{\ell=W}^{\infty}
a_S^{\ell}u_{t-\ell}
\right|
\leq
R\frac{a_S^{W}}{1-a_S}.
\]
Thus, when $a_S$ is small, evidence outside a short window is geometrically suppressed.
\end{corollary}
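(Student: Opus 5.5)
The plan is to combine the triangle inequality with a geometric-series tail. Write the tail sum as $\sum_{\ell=W}^{\infty} a_S^{\ell}u_{t-\ell}$. Because $a_S=\sigma(-\kappa\lambda_S)$ lies strictly in $(0,1)$, every power $a_S^{\ell}$ is positive. The boundedness assumption $|u_{t-\ell}|\leq R$ then shows the series converges absolutely, since it is dominated termwise by $R\,a_S^{\ell}$. This justifies moving the absolute value inside the infinite sum.

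The key steps, in order, are as follows.
\begin{enumerate}
\item Use absolute convergence to pass the absolute value through the sum:
\[
\Bigl|\sum_{\ell=W}^{\infty}a_S^{\ell}u_{t-\ell}\Bigr|\leq\sum_{\ell=W}^{\infty}a_S^{\ell}|u_{t-\ell}|.
\]
\item Bound each $|u_{t-\ell}|$ by $R$.
\item Factor out $a_S^{W}$ and evaluate the remaining geometric series:
\[
\sum_{\ell=W}^{\infty}a_S^{\ell}=a_S^{W}\sum_{k=0}^{\infty}a_S^{k}=\frac{a_S^{W}}{1-a_S}.
\]
\end{enumerate}
Together these give exactly the stated bound $R\,a_S^{W}/(1-a_S)$.

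For the interpretive remark that small $a_S$ gives geometric suppression, I would invoke Lemma~\ref{lem:choose_k}. If $\kappa$ satisfies \eqref{eq:choose_k_solution}, then $a_S\leq\varepsilon<1/2$, so the bound is at most $R\,\varepsilon^{W}/(1-\varepsilon)\leq 2R\,\varepsilon^{W}$. This makes the suppression quantitative in terms of the chosen tolerance.

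There is no genuine obstacle here. The only point needing care is justifying that the infinite sum is well defined before it is bounded, and absolute convergence, obtained from comparison with the geometric series, settles this. If the statement is read as applying to vector inputs $\mathbf u$ with the shared orthogonal basis $Q$, the same argument works mode by mode, with $R$ bounding each modal coordinate.
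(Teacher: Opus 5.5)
Your proof is correct and matches the paper's argument: the triangle inequality, the bound $|u_{t-\ell}|\leq R$, and the geometric tail $\sum_{\ell\geq W}a_S^{\ell}=a_S^{W}/(1-a_S)$. Your absolute-convergence justification and the quantitative $2R\varepsilon^{W}$ bound via Lemma~\ref{lem:choose_k} are valid additions that the paper does not spell out; the latter uses that $a_S^{W}/(1-a_S)$ is increasing in $a_S$.
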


\begin{proof}
By the triangle inequality,
\[
\left|
\sum_{\ell=W}^{\infty}
a_S^{\ell}u_{t-\ell}
\right|
\leq
\sum_{\ell=W}^{\infty}
a_S^{\ell}|u_{t-\ell}|
\leq
R\sum_{\ell=W}^{\infty}a_S^{\ell}.
\]
Since $a_S\in(0,1)$,
\[
\sum_{\ell=W}^{\infty}a_S^{\ell}
=
\frac{a_S^{W}}{1-a_S}.
\]
This proves the bound.
\end{proof}

\subsubsection*{Event-Kernel Step Size Selection}

For the episodic trace, let $\lambda_E<0$ denote a stable continuous-time decay rate. Under Euler-style discretization with step size $\eta>0$, the per-step base is
\[
a_E=1+\eta\lambda_E.
\]
The event influence at lag $\Delta=t-k$ is therefore
\begin{equation}
\label{eq:event_kernel_discrete}
w^{(E)}_{t,k}
=
(1+\eta\lambda_E)^{t-k}.
\end{equation}
We choose $\eta$ so that this discrete kernel exactly matches the target exponential decay
\[
\exp\!\left(-\frac{t-k}{\tau}\right).
\]

\begin{lemma}[Exact event-kernel matching]
\label{lem:choose_eta}
Let $\lambda_E<0$ and $\tau>0$. Define
\begin{equation}
\label{eq:choose_eta_solution}
\eta
=
\frac{e^{-1/\tau}-1}{\lambda_E}.
\end{equation}
Then
\[
1+\eta\lambda_E
=
e^{-1/\tau},
\]
and for every lag $\Delta=t-k$,
\[
(1+\eta\lambda_E)^{t-k}
=
\exp\!\left(-\frac{t-k}{\tau}\right).
\]
Moreover, $\eta>0$ and
\[
0<1+\eta\lambda_E<1,
\]
so the induced discrete event mode is stable.
\end{lemma}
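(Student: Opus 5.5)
The plan is a direct substitution argument in three steps, since the lemma is essentially algebraic.

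\textbf{Step 1 (the base).} Substitute the definition \eqref{eq:choose_eta_solution} into $1+\eta\lambda_E$. Because $\lambda_E\neq 0$, the factor $\lambda_E$ cancels and gives
\[
1+\eta\lambda_E=1+(e^{-1/\tau}-1)=e^{-1/\tau}.
\]

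\textbf{Step 2 (the kernel at every lag).} Raise this identity to the integer power $\Delta=t-k\geq 0$ and use $(e^{x})^{n}=e^{nx}$. This yields
\[
(1+\eta\lambda_E)^{t-k}=\exp\!\left(-\frac{t-k}{\tau}\right),
\]
so the discrete kernel \eqref{eq:event_kernel_discrete} matches the target exponential exactly at every lag. No approximation is involved, because the step size is chosen to match the per-step factor rather than to discretize a continuous flow.

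\textbf{Step 3 (sign of $\eta$ and stability).}
\begin{itemize}
\item Since $\tau>0$, we have $-1/\tau<0$, hence $0<e^{-1/\tau}<1$.
\item It follows that the numerator $e^{-1/\tau}-1$ is strictly negative.
\item Dividing this negative numerator by $\lambda_E<0$ gives $\eta>0$.
\item By Step 1, the base equals $e^{-1/\tau}$, so $0<1+\eta\lambda_E<1$.
\end{itemize}

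For stability, invoke Corollary~\ref{cor:modal_stability}, or Proposition~\ref{prop:discrete_ssm_stability} in the scalar modal case. A mode with $|a_E|<1$ is asymptotically stable. Its time constant is $-1/\log a_E=\tau$, which agrees with the target decay.

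\textbf{Main obstacle.} There is no real obstacle. The only points needing care are tracking the sign in Step 3 (a negative divided by a negative) and noting that $\lambda_E\neq0$, which makes $\eta$ well-defined. One could also remark that the result is per mode. If $\Lambda_E$ has several negative eigenvalues, a single $\eta$ matches $\tau$ only for the chosen $\lambda_E$; this is consistent with the Euler step-size caveat stated after Corollary~\ref{cor:modal_stability}.
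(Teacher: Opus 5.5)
Your proposal is correct and follows the paper's proof essentially step for step. You substitute the definition of $\eta$ to get $1+\eta\lambda_E=e^{-1/\tau}$, raise this to the power $t-k$, and then use $e^{-1/\tau}\in(0,1)$ with $\lambda_E<0$ to obtain $\eta>0$ and $0<1+\eta\lambda_E<1$. Your extra appeal to Corollary~\ref{cor:modal_stability} for the stability and time-constant reading is a harmless addition that the paper leaves implicit.
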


\begin{proof}
Substituting Eq.~\eqref{eq:choose_eta_solution} gives
\[
1+\eta\lambda_E
=
1+
\left(\frac{e^{-1/\tau}-1}{\lambda_E}\right)\lambda_E
=
e^{-1/\tau}.
\]
Raising both sides to the power $t-k$ yields
\[
(1+\eta\lambda_E)^{t-k}
=
e^{-(t-k)/\tau}.
\]
Since $\tau>0$, we have $e^{-1/\tau}\in(0,1)$. Therefore $e^{-1/\tau}-1<0$. Because $\lambda_E<0$, Eq.~\eqref{eq:choose_eta_solution} implies $\eta>0$. Finally,
\[
1+\eta\lambda_E=e^{-1/\tau}\in(0,1),
\]
which proves stability.
\end{proof}

\subsection{Kernel Interpretation Of The Three \pertide\ Traces}
\label{app:kernel-timescale-proof}

Each branch induces a geometric lag kernel. Different spectral parameterizations can select different bases. Their attainable kernels nevertheless overlap. Distinct timescales require different learned bases; branch names do not prove semantic disentanglement. Trace-subset and temporal-regime experiments test their functional differentiation.

\paragraph{General tri-trace recursion.}
For each trace $r\in\{L,S,E\}$, \pertide updates a memory state as
\begin{equation}
\label{eq:general_trace_recursion}
m_t^{(r)}
=
A_r m_{t-1}^{(r)}
+
B b_t,
\end{equation}
where $b_t$ is the action-gated event cell, $B$ is the input operator, and $A_r$ is the trace-specific transition matrix. Assume that
\[
A_r
=
Q_r
\operatorname{diag}
(a^{(r)}_1,\ldots,a^{(r)}_d)
Q_r^{-1}.
\]
The three \pertide spectral maps induce
\[
a_i^{(L)}
=
\exp(\delta\lambda_{L,i}),
\qquad
a_i^{(S)}
=
\sigma(-\kappa\lambda_{S,i}),
\qquad
a_i^{(E)}
=
1+\eta\lambda_{E,i}.
\]
For clarity, we state the results for real stable modes satisfying
\[
0\leq a_i^{(r)}<1.
\]
For complex modes, the same statements hold for the decay envelope after replacing $a_i^{(r)}$ by $|a_i^{(r)}|$.

\begin{proposition}[Mode-wise lag kernels induced by the trace recursion]
\label{prop:trace_mode_kernels}
For each trace $r\in\{L,S,E\}$ and mode $i$, the contribution of an input at lag $\ell=t-s$ to the modal state is weighted by
\[
k_i^{(r)}(\ell)
=
\left(a_i^{(r)}\right)^\ell,
\qquad
\ell\geq 0.
\]
Thus each trace implements an explicit geometric lag kernel in its eigenbasis.
\end{proposition}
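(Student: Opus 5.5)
The plan is to unroll the linear recursion in Eq.~\eqref{eq:general_trace_recursion} and then change coordinates into the eigenbasis, where everything becomes scalar. Start the recursion from a zero initial state, or keep $m_0^{(r)}$ as a separate homogeneous term. Induction on $t$ then gives
\[
m_t^{(r)}=A_r^{t}m_0^{(r)}+\sum_{s=1}^{t}A_r^{\,t-s}Bb_s .
\]
The base case $t=1$ is immediate. The inductive step multiplies the expression for $m_{t-1}^{(r)}$ by $A_r$ and adds $Bb_t$, so each exponent shifts by one and the new term enters with $A_r^0=I$. This already shows that an input at lag $\ell=t-s$ is propagated by the operator $A_r^{\ell}$. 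It is the operator kernel $K_r(\ell)=A_r^\ell$ quoted in Section~\ref{sec:PerTIDE}.

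Next, diagonalize using the assumed form $A_r=Q_r\operatorname{diag}(a^{(r)}_1,\ldots,a^{(r)}_d)Q_r^{-1}$. Powers telescope, as in the proof of Corollary~\ref{cor:modal_stability}:
\[
A_r^\ell=Q_r\operatorname{diag}\bigl((a_1^{(r)})^\ell,\ldots,(a_d^{(r)})^\ell\bigr)Q_r^{-1}.
\]
Define modal coordinates $\tilde m_t^{(r)}=Q_r^{-1}m_t^{(r)}$ and modal inputs $\tilde u_s=Q_r^{-1}Bb_s$. In the implementation $Q_r=Q$ is orthogonal, so $Q_r^{-1}=Q^\top$, but only invertibility is needed here. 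Left-multiplying the unrolled formula by $Q_r^{-1}$ gives, componentwise,
\[
\tilde m_{t,i}^{(r)}=(a_i^{(r)})^{t}\tilde m_{0,i}^{(r)}+\sum_{s=1}^{t}(a_i^{(r)})^{t-s}\tilde u_{s,i}.
\]
The input at lag $\ell$ therefore enters mode $i$ with weight exactly $k_i^{(r)}(\ell)=(a_i^{(r)})^\ell$, for $\ell\ge 0$, with convention $0^0=1$ when $a_i^{(r)}=0$. This is the claimed geometric kernel. Substituting the three spectral maps $\exp(\delta\lambda_{L,i})$, $\sigma(-\kappa\lambda_{S,i})$ and $1+\eta\lambda_{E,i}$ gives the trace-specific kernels.

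I do not expect a real obstacle here. The only points needing care are bookkeeping conventions:
\begin{itemize}
\item \emph{Lag-zero term.} Handle the zero base ($0^0=1$ at lag zero, and a vanishing contribution afterwards, consistent with the remark after Corollary~\ref{cor:modal_stability}).
\item \emph{Complex modes.} The same identity holds verbatim with complex $a_i^{(r)}$. The decay-envelope statement then follows from $|(a_i^{(r)})^\ell|=|a_i^{(r)}|^\ell$.
\item \emph{Scope of the claim.} Be explicit that the kernel weights are stated in modal coordinates. In the original coordinates the contribution mixes through $Q_r$, so the claim is not a coordinatewise one.
\end{itemize}
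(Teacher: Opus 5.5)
Your proof is correct and is essentially the paper's argument. The only difference is the order of steps: the paper changes to the eigenbasis first and unrolls the decoupled scalar recursions $\tilde m_{t,i}^{(r)}=a_i^{(r)}\tilde m_{t-1,i}^{(r)}+u_{t,i}^{(r)}$, whereas you unroll the matrix recursion and then diagonalize $A_r^{\ell}$, which yields the same modal formula.
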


\begin{proof}
Move Eq.~\eqref{eq:general_trace_recursion} to the eigenbasis of trace $r$:
\[
\tilde m_t^{(r)}
=
Q_r^{-1}m_t^{(r)},
\qquad
u_t^{(r)}
=
Q_r^{-1}Bb_t.
\]
Then
\[
\tilde m_t^{(r)}
=
\operatorname{diag}
(a^{(r)}_1,\ldots,a^{(r)}_d)
\tilde m_{t-1}^{(r)}
+
u_t^{(r)}.
\]
Therefore, each mode evolves independently:
\[
\tilde m_{t,i}^{(r)}
=
a_i^{(r)}
\tilde m_{t-1,i}^{(r)}
+
u_{t,i}^{(r)}.
\]
Unrolling gives
\[
\tilde m_{t,i}^{(r)}
=
\left(a_i^{(r)}\right)^t
\tilde m_{0,i}^{(r)}
+
\sum_{s=1}^{t}
\left(a_i^{(r)}\right)^{t-s}
u_{s,i}^{(r)}.
\]
Hence an input at lag $\ell=t-s$ is weighted by
\[
\left(a_i^{(r)}\right)^\ell.
\]
This proves the claim.
\end{proof}

\begin{proposition}[Effective memory length of a stable modal kernel]
\label{prop:effective_memory_length}
Let $a\in(0,1)$ and define the unnormalized lag kernel
\[
k_a(\ell)=a^\ell,
\qquad
\ell=0,1,2,\ldots.
\]
Its normalized version is
\[
\bar k_a(\ell)
=
(1-a)a^\ell.
\]
Then the expected lag is
\[
\mathbb{E}_{\bar k_a}[\ell]
=
\frac{a}{1-a}.
\]
The half-life is
\[
h(a)
=
\frac{\log(1/2)}{\log a}.
\]
Both quantities increase as $a$ approaches one.
\end{proposition}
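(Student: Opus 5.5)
The plan is to verify the normalization and then compute both quantities directly from geometric-series identities. First, for $a\in(0,1)$ we have $\sum_{\ell\ge0}a^\ell=1/(1-a)$, so $\bar k_a(\ell)=(1-a)a^\ell$ is nonnegative and sums to one. It is therefore a probability mass function on $\{0,1,2,\ldots\}$; in fact it is a geometric distribution counting failures before the first success, with success probability $1-a$.

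For the expected lag, I will use the differentiated series $\sum_{\ell\ge0}\ell a^{\ell}=a\,\frac{d}{da}\sum_{\ell\ge0}a^\ell=a/(1-a)^2$. Termwise differentiation is justified because the power series has radius of convergence $1$ and $a$ lies strictly inside it. Multiplying by the normalizer gives
\[
\mathbb E_{\bar k_a}[\ell]=(1-a)\cdot\frac{a}{(1-a)^2}=\frac{a}{1-a}.
\]
As an alternative that avoids differentiation, I can use the tail-sum formula $\mathbb E[\ell]=\sum_{j\ge1}\Pr(\ell\ge j)=\sum_{j\ge1}a^j=a/(1-a)$.

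For the half-life, I define it as the (real-valued) lag $h$ at which the unnormalized kernel reaches half its lag-zero value, that is, $a^h=\tfrac12$. Since $a\in(0,1)$, $\log a<0$, so taking logarithms gives $h\log a=\log(1/2)$. This yields $h(a)=\log(1/2)/\log a$, which is positive because numerator and denominator are both negative. I will note that this is the continuous interpolation of the discrete lag, matching the time-constant convention in Corollary~\ref{cor:modal_stability}, where $h=\tau\log 2$ with $\tau=-1/\log a$.

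For monotonicity, $a/(1-a)=-1+1/(1-a)$ is strictly increasing on $(0,1)$ and diverges as $a\to1^-$. For the half-life, $\log a$ is negative and increasing toward $0$, so $-\log a$ decreases toward $0^+$. Hence $h(a)=\log 2/(-\log a)$ is strictly increasing and also diverges as $a\to1^-$. The only step needing care is the convention for the half-life: the kernel lives on integer lags, so I will state explicitly that $h$ is the continuous solution of $a^h=1/2$ rather than an integer. Everything else is routine.
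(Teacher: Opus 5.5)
Your proposal is correct and follows essentially the same route as the paper: verify that $(1-a)a^\ell$ is a geometric pmf, compute $(1-a)\sum_{\ell}\ell a^\ell=a/(1-a)$, and solve $a^h=\tfrac12$ for the half-life. Your monotonicity argument, via $a/(1-a)=-1+1/(1-a)$ and $h(a)=\log 2/(-\log a)$, is more explicit than the paper's one-line remark that a larger $a$ slows the decay.
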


\begin{proof}
Since $\bar k_a$ is the probability mass function of a geometric distribution over nonnegative lags,
\[
\sum_{\ell=0}^{\infty}(1-a)a^\ell=1.
\]
Its expected lag is
\[
\mathbb{E}_{\bar k_a}[\ell]
=
(1-a)\sum_{\ell=0}^{\infty}\ell a^\ell
=
(1-a)\frac{a}{(1-a)^2}
=
\frac{a}{1-a}.
\]
The half-life is the lag $h$ satisfying
\[
a^h=\frac{1}{2}.
\]
Taking logarithms gives
\[
h(a)=\frac{\log(1/2)}{\log a}.
\]
For $a\in(0,1)$, increasing $a$ slows the geometric decay, and therefore increases both the expected lag and the half-life.
\end{proof}

\begin{corollary}[Trace-specific retention profiles]
\label{cor:trace_retention_profiles}
The three \pertide spectral maps induce the following modal kernels:
\[
k_i^{(L)}(\ell)
=
\left(\exp(\delta\lambda_{L,i})\right)^\ell,
\]
\[
k_i^{(S)}(\ell)
=
\left(\sigma(-\kappa\lambda_{S,i})\right)^\ell,
\]
\[
k_i^{(E)}(\ell)
=
\left(1+\eta\lambda_{E,i}\right)^\ell.
\]
Their effective retention differs when these bases differ. For example, $0<a_S<a_E<a_L<1$ gives correspondingly ordered half-lives; this ordering is an additional condition, not a consequence of the three formulas.
\end{corollary}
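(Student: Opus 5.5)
The corollary follows by substituting the three spectral maps into the generic modal kernel of Proposition~\ref{prop:trace_mode_kernels} and then reading off retention from Proposition~\ref{prop:effective_memory_length}.

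First, for each $r\in\{L,S,E\}$, write $A_r=Q\operatorname{diag}(a^{(r)}_1,\ldots,a^{(r)}_d)Q^\top$ as in Eq.~\eqref{eq:tristate}. The modal bases are
\[
a_i^{(L)}=\exp(\delta\lambda_{L,i}),\qquad a_i^{(S)}=\sigma(-\kappa\lambda_{S,i}),\qquad a_i^{(E)}=1+\eta\lambda_{E,i}.
\]
These hold because the exponential, sigmoid and affine maps act entrywise on the diagonal $\Lambda_r$, so $Q f(\Lambda_r)Q^{-1}$ has eigenvalues $f(\lambda_{r,i})$. Proposition~\ref{prop:trace_mode_kernels} then gives $k_i^{(r)}(\ell)=(a_i^{(r)})^\ell$, which yields the three displayed kernels directly.

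For the retention statement, I will work under the standing assumption $0\le a_i^{(r)}<1$. For the episodic trace this is a genuine assumption: it is guaranteed by Lemma~\ref{lem:choose_eta} or by the Euler constraint, but not by the formula itself. Proposition~\ref{prop:effective_memory_length} gives the expected lag $a/(1-a)$ and the half-life $\log(1/2)/\log a$, and both are strictly increasing on $(0,1)$. Indeed, $a\mapsto a/(1-a)$ has derivative $1/(1-a)^2>0$. Also $\log a<0$ is increasing in $a$, so $-\log 2/\log a$ increases. It follows that distinct bases give distinct retention, and an ordering $0<a_S<a_E<a_L<1$ transfers to the same ordering of half-lives and expected lags. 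The boundary case $a=0$ needs a separate remark: its kernel vanishes for $\ell\ge1$, so its retention is minimal.

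The remaining point is the claim that the ordering is \emph{not} implied by the formulas. I would show this with explicit parameters giving an opposite order. Take $\delta\lambda_L=\log 0.1$, so $a_L=0.1$, and choose $\kappa\lambda_S=-\log 9$, so $a_S=0.9$. Both are valid stable bases, and here $a_S>a_L$.

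I expect no serious obstacle; the proof is substitution plus monotonicity. The only care needed is in the hypotheses: $\lambda_S$ may have either sign, so $\sigma(-\kappa\lambda_S)$ can approach $1$. Complex modes, as remarked earlier in the paper, are handled by replacing each base with its modulus.
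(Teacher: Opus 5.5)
Your proposal is correct and follows the paper's route. The paper's proof is exactly the substitution of the three spectral maps into Proposition~\ref{prop:trace_mode_kernels}; your monotonicity step via Proposition~\ref{prop:effective_memory_length} and your explicit counterexample spell out the ordering and non-implication remarks that the one-line proof leaves implicit, though your $a_S=0.9$ needs $\kappa\lambda_S<0$, whereas the paper's convention $\lambda_S>0$, $\kappa>0$ (Lemma~\ref{lem:choose_k}) forces $a_S<1/2$, so $a_L=0.1<a_S=0.4$ (i.e.\ $\kappa\lambda_S=\log 1.5$) is the safer choice and gives the same conclusion.
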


\begin{proof}
The result follows directly by substituting the three trace-specific spectral maps into Proposition~\ref{prop:trace_mode_kernels}.
\end{proof}

\subsubsection*{Long-Term Trace: Persistent Evidence Accumulation}

\begin{proposition}[Slow modes accumulate repeated evidence]
\label{prop:long_trace_accumulation}
Consider a long-term modal direction $i$ with
\[
a_i^{(L)}
=
\exp(\delta\lambda_{L,i})
\in(0,1).
\]
Suppose repeated evidence enters this mode as
\[
u_{s,i}^{(L)}=\mu_i>0,
\qquad
s=1,\ldots,t,
\]
and let $\tilde m_{0,i}^{(L)}=0$. Then
\[
\tilde m_{t,i}^{(L)}
=
\mu_i
\frac{1-\left(a_i^{(L)}\right)^t}{1-a_i^{(L)}},
\]
and
\[
\lim_{t\to\infty}
\tilde m_{t,i}^{(L)}
=
\frac{\mu_i}{1-a_i^{(L)}}.
\]
Hence, when $a_i^{(L)}$ is close to one, repeated evidence accumulates into a large persistent state.
\end{proposition}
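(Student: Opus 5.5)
The plan is to reduce everything to the scalar modal recursion already isolated in the proof of Proposition~\ref{prop:trace_mode_kernels}. In the eigenbasis of the long-term trace, mode $i$ evolves independently as $\tilde m_{t,i}^{(L)}=a_i^{(L)}\tilde m_{t-1,i}^{(L)}+u_{t,i}^{(L)}$. Write $a=a_i^{(L)}\in(0,1)$ and use the unrolled form from that proof with $\tilde m_{0,i}^{(L)}=0$ and $u_{s,i}^{(L)}=\mu_i$ for every $s$. This gives
\[
\tilde m_{t,i}^{(L)}=\sum_{s=1}^{t}a^{t-s}\mu_i=\mu_i\sum_{\ell=0}^{t-1}a^{\ell}.
\]

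The first step is to evaluate this finite geometric sum. Because $a\neq 1$, it equals $(1-a^t)/(1-a)$, which is the closed form in the statement. As an independent check, one can verify by induction that the formula satisfies the recursion. At $t=0$ it gives $0$. If it holds at $t-1$, then
\[
a\,\mu_i\frac{1-a^{t-1}}{1-a}+\mu_i=\mu_i\frac{a-a^t+1-a}{1-a},
\]
which is the formula at $t$.

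The second step is the limit. Since $0<a<1$, we have $a^t\to 0$; Proposition~\ref{prop:discrete_ssm_stability} gives the same conclusion mode-wise. Hence $\tilde m_{t,i}^{(L)}\to\mu_i/(1-a)$.

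For the qualitative claim, the plan is to note two monotonicity facts. The map $a\mapsto \mu_i/(1-a)$ is increasing on $(0,1)$ and diverges as $a\to 1^-$. For each fixed $t$, the partial sum $\mu_i\sum_{\ell<t}a^\ell$ is also increasing in $a$. Both follow because $\mu_i>0$ makes every term positive. Together they say that slow modes accumulate repeated evidence into a larger persistent state.

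No step is a real obstacle. The only points needing care are that the hypothesis $a\in(0,1)$, rather than $0\le a<1$, excludes the degenerate case $a=0$, and that the case $a=1$ is excluded so the division by $1-a$ is valid. The result describes the modal coordinate $\tilde m^{(L)}$, not $m^{(L)}$ itself. To transfer it back, one applies $Q_L$; because $Q_L$ is orthogonal, norms are preserved.
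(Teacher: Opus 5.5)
Your proposal is correct and follows the paper's proof: unroll the modal recursion from Proposition~\ref{prop:trace_mode_kernels}, reindex by lag, apply the finite geometric sum, and use $\left(a_i^{(L)}\right)^t\to 0$ for the limit. Your induction check and the monotonicity argument in $a$ are sound additions. The monotonicity argument makes the informal ``close to one'' conclusion precise, which the paper only asserts.
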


\begin{proof}
By Proposition~\ref{prop:trace_mode_kernels},
\[
\tilde m_{t,i}^{(L)}
=
\sum_{s=1}^{t}
\left(a_i^{(L)}\right)^{t-s}
\mu_i.
\]
Let $\ell=t-s$. Then
\[
\tilde m_{t,i}^{(L)}
=
\mu_i
\sum_{\ell=0}^{t-1}
\left(a_i^{(L)}\right)^\ell.
\]
Using the finite geometric sum,
\[
\sum_{\ell=0}^{t-1}
\left(a_i^{(L)}\right)^\ell
=
\frac{1-\left(a_i^{(L)}\right)^t}{1-a_i^{(L)}}.
\]
Since $a_i^{(L)}\in(0,1)$,
\[
\left(a_i^{(L)}\right)^t\to 0,
\]
which yields the stated limit.
\end{proof}

This formalizes the role of the long-term trace. If a user repeatedly produces evidence in a stable preference direction, such as repeated climate-policy interactions, the long-term kernel retains and accumulates this evidence rather than rapidly discarding it.

\subsubsection*{Short-Term Trace: Recent Burst Tracking}

\begin{proposition}[Small short-trace bases emphasize recent evidence]
\label{prop:short_trace_recency}
Consider a short-term modal direction $j$ with
\[
a_j^{(S)}
=
\sigma(-\kappa\lambda_{S,j})
\in(0,1).
\]
With zero initial state and the sum truncated at lag $t-1$, the modal state can be written as
\[
\tilde m_{t,j}^{(S)}
=
u_{t,j}^{(S)}
+
a_j^{(S)}u_{t-1,j}^{(S)}
+
\left(a_j^{(S)}\right)^2u_{t-2,j}^{(S)}
+
\cdots.
\]
If $a_j^{(S)}$ is small, then the newest inputs dominate and older inputs are geometrically suppressed.
\end{proposition}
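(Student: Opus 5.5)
The plan is to specialize Proposition~\ref{prop:trace_mode_kernels} to the short trace. With $\tilde m_{0,j}^{(S)}=0$, the unrolled modal recursion gives
\[
\tilde m_{t,j}^{(S)}=\sum_{s=1}^{t}\bigl(a_j^{(S)}\bigr)^{t-s}u_{s,j}^{(S)}.
\]
Reindexing by the lag $\ell=t-s$, which runs from $0$ to $t-1$, yields
\[
\tilde m_{t,j}^{(S)}=\sum_{\ell=0}^{t-1}\bigl(a_j^{(S)}\bigr)^{\ell}u_{t-\ell,j}^{(S)}.
\]
Written out, this is exactly the displayed expansion $u_{t,j}^{(S)}+a_j^{(S)}u_{t-1,j}^{(S)}+\cdots$, truncated at lag $t-1$. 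This step is purely algebraic and settles the first claim.

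For the qualitative claim that the newest inputs dominate, I would make ``small base'' precise through a tail bound. Assume the modal inputs are bounded, $|u_{s,j}^{(S)}|\leq R$. Split the sum at a window $W\geq 1$ into recent lags $0\le\ell<W$ and older lags $\ell\ge W$. For the older part I would invoke Corollary~\ref{cor:short_tail_suppression}, with the truncated sum dominated by the infinite one. This gives
\[
\Bigl|\sum_{\ell=W}^{t-1}\bigl(a_j^{(S)}\bigr)^{\ell}u_{t-\ell,j}^{(S)}\Bigr|\le R\,\frac{(a_j^{(S)})^{W}}{1-a_j^{(S)}},
\]
which tends to zero geometrically as $a_j^{(S)}\to0$ or as $W$ grows.

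Lemma~\ref{lem:choose_k} ties ``small'' to the parameter $\kappa$. If $\kappa\ge\lambda_{S,j}^{-1}\log\frac{1-\varepsilon}{\varepsilon}$, then $a_j^{(S)}\le\varepsilon$. The tail is then at most $R\varepsilon^{W}/(1-\varepsilon)\le 2R\varepsilon^W$ because $\varepsilon<1/2$. The $W=1$ case makes the conclusion concrete: $\tilde m_{t,j}^{(S)}-u_{t,j}^{(S)}$ is bounded by $2R\varepsilon$. So the state is essentially the newest input, and each further lag is down-weighted by an extra factor of at most $\varepsilon$.

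The main obstacle is interpretive rather than technical. ``Dominate'' has no precise meaning without a size assumption on the inputs. I would state the bounded-input hypothesis explicitly and phrase dominance as the relative weight ratio $(a_j^{(S)})^{\ell}/(a_j^{(S)})^{0}\le\varepsilon^{\ell}$ together with the tail bound above. I would also remark that a single huge old input could still matter absolutely, so the claim concerns kernel weights, consistent with the paper's caveat that the kernels describe biases only.
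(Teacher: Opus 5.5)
Your proposal is correct and follows the paper's argument: the paper also obtains the expansion by specializing Proposition~\ref{prop:trace_mode_kernels}, then observes that the lag-$\ell$ weight $(a_j^{(S)})^\ell$ decays geometrically, and leaves the recency claim qualitative. Your bounded-input tail estimate via Corollary~\ref{cor:short_tail_suppression} is a valid sharpening that makes ``dominate'' precise, though the paper does not need it. The link to $\kappa$ through Lemma~\ref{lem:choose_k} is also valid, but it silently uses that lemma's hypothesis $\lambda_{S,j}>0$, which the proposition itself does not state.
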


\begin{proof}
The expansion follows from Proposition~\ref{prop:trace_mode_kernels}. The relative weight assigned to an input $\ell$ steps in the past is
\[
\left(a_j^{(S)}\right)^\ell.
\]
For $a_j^{(S)}\in(0,1)$, this quantity decays geometrically in $\ell$. If $a_j^{(S)}$ is small, the decay is rapid, so recent inputs dominate the modal state.
\end{proof}

This formalizes the role of the short-term trace. A recent burst of sports-highlight interactions, for example, enters the short trace with high weight at the current timestep, while older evidence is rapidly suppressed.

\subsubsection*{Episodic Trace: Decaying Event Imprint}

\begin{proposition}[Finite-window events leave controlled decaying imprints]
\label{prop:event_trace_imprint}
Consider an episodic modal direction $k$ with
\[
a_k^{(E)}
=
1+\eta\lambda_{E,k}
\in(0,1).
\]
Suppose an event is active on the finite window $t_0\leq s\leq t_1$:
\[
u_{s,k}^{(E)}
=
\begin{cases}
\beta_k, & t_0\leq s\leq t_1,\\
0, & \text{otherwise},
\end{cases}
\qquad
\beta_k>0.
\]
Consider the state component due solely to these event inputs, with zero state before the event. Let $W=t_1-t_0+1$. For $t>t_1$, its remaining post-event imprint is
\[
\tilde m_{t,k}^{(E)}
=
\beta_k
\left(a_k^{(E)}\right)^{t-t_1}
\frac{1-\left(a_k^{(E)}\right)^W}{1-a_k^{(E)}}.
\]
After the active window closes, the event leaves an imprint that decays at rate $a_k^{(E)}$.
\end{proposition}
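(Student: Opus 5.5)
The computation reduces to the modal unrolling of Proposition~\ref{prop:trace_mode_kernels}. Because the state is taken to be zero before the event and the only inputs in this component are the event inputs, the unrolled form gives, for any $t>t_1$,
\[
\tilde m_{t,k}^{(E)}=\sum_{s}\left(a_k^{(E)}\right)^{t-s}u_{s,k}^{(E)}=\beta_k\sum_{s=t_0}^{t_1}\left(a_k^{(E)}\right)^{t-s},
\]
since $u_{s,k}^{(E)}$ vanishes outside the window $[t_0,t_1]$.

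Next, reindex with $\ell=t_1-s$, so $\ell$ runs from $0$ to $W-1$. Writing $t-s=(t-t_1)+\ell$ and factoring out $\left(a_k^{(E)}\right)^{t-t_1}$ leaves a finite geometric sum $\sum_{\ell=0}^{W-1}\left(a_k^{(E)}\right)^\ell$. Because $a_k^{(E)}\in(0,1)$, this sum equals $\bigl(1-(a_k^{(E)})^W\bigr)/\bigl(1-a_k^{(E)}\bigr)$, which is the stated closed form.

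The decay claim follows from the same expression. The window factor $\beta_k\bigl(1-(a_k^{(E)})^W\bigr)/\bigl(1-a_k^{(E)}\bigr)$ is a positive constant that does not depend on $t$. The imprint therefore equals this constant times $\left(a_k^{(E)}\right)^{t-t_1}$, so each additional step after $t_1$ multiplies it by $a_k^{(E)}<1$, and it tends to zero. Equivalently, for $t>t_1$ with no further input, the modal recursion gives $\tilde m_{t,k}^{(E)}=a_k^{(E)}\tilde m_{t-1,k}^{(E)}$.

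No step is a real obstacle. The only care needed is the indexing: the lag offset must be $t-t_1$ measured from the last active step, and the window length must be $W=t_1-t_0+1$, so that the exponents match the statement exactly.
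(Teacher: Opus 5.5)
Your proposal is correct and follows essentially the same route as the paper's proof: restrict the unrolled modal sum to the window $t_0\le s\le t_1$, factor out $\left(a_k^{(E)}\right)^{t-t_1}$, and evaluate the remaining geometric sum of length $W$. Your extra remark on the decay claim makes explicit what the paper leaves implicit. The window factor does not depend on $t$, so the imprint is a constant multiple of $\left(a_k^{(E)}\right)^{t-t_1}$; equivalently, $\tilde m_{t,k}^{(E)}=a_k^{(E)}\tilde m_{t-1,k}^{(E)}$ once the window has closed.
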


\begin{proof}
For $t>t_1$, only the event-window inputs contribute:
\[
\tilde m_{t,k}^{(E)}
=
\sum_{s=t_0}^{t_1}
\left(a_k^{(E)}\right)^{t-s}
\beta_k.
\]
Factor out the decay from the end of the event window:
\[
\tilde m_{t,k}^{(E)}
=
\beta_k
\left(a_k^{(E)}\right)^{t-t_1}
\sum_{q=0}^{W-1}
\left(a_k^{(E)}\right)^q.
\]
Using the finite geometric sum gives
\[
\sum_{q=0}^{W-1}
\left(a_k^{(E)}\right)^q
=
\frac{1-\left(a_k^{(E)}\right)^W}{1-a_k^{(E)}}.
\]
This proves the expression.
\end{proof}

\begin{corollary}[Event time constant under exact kernel matching]
\label{cor:event_time_constant}
If the event-step size is chosen as in Lemma~\ref{lem:choose_eta}, then
\[
a_k^{(E)}
=
1+\eta\lambda_{E,k}
=
e^{-1/\tau_k}.
\]
Therefore,
\[
k_k^{(E)}(\ell)
=
e^{-\ell/\tau_k},
\]
and the post-event imprint becomes
\[
\tilde m_{t,k}^{(E)}
=
\beta_k
e^{-(t-t_1)/\tau_k}
\frac{1-e^{-W/\tau_k}}{1-e^{-1/\tau_k}}.
\]
Thus $\tau_k$ controls the decay of this mode after the finite input window closes. A shared scalar $\eta$ cannot independently match arbitrary target time constants for every fixed $\lambda_{E,k}$.
\end{corollary}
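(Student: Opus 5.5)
The corollary has three claims: the modal base equals $e^{-1/\tau_k}$, the lag kernel is $e^{-\ell/\tau_k}$, and the post-event imprint takes the stated form. I will obtain all three by substituting the exact-matching identity of Lemma~\ref{lem:choose_eta} into results already established.

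\textbf{Step 1: the modal base.} Apply Lemma~\ref{lem:choose_eta} modewise. For mode $k$, take the decay rate $\lambda_{E,k}<0$ and the target constant $\tau_k>0$, and set $\eta=(e^{-1/\tau_k}-1)/\lambda_{E,k}$. The lemma gives $a_k^{(E)}=1+\eta\lambda_{E,k}=e^{-1/\tau_k}\in(0,1)$. In particular the hypothesis $a_k^{(E)}\in(0,1)$ of Proposition~\ref{prop:event_trace_imprint} holds automatically.

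\textbf{Step 2: the kernel.} Proposition~\ref{prop:trace_mode_kernels} states $k_k^{(E)}(\ell)=(a_k^{(E)})^\ell$. Substituting Step 1 gives $k_k^{(E)}(\ell)=e^{-\ell/\tau_k}$.

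\textbf{Step 3: the imprint.} Substitute $a_k^{(E)}=e^{-1/\tau_k}$ into the closed form of Proposition~\ref{prop:event_trace_imprint}. This gives $(a_k^{(E)})^{t-t_1}=e^{-(t-t_1)/\tau_k}$ and $(a_k^{(E)})^W=e^{-W/\tau_k}$, which is exactly the displayed expression. Since $e^{-(t-t_1)/\tau_k}$ is the only factor that depends on $t$, and the remaining factor is a positive constant fixed by $\beta_k$, $W$ and $\tau_k$, the imprint decays after the window closes with time constant $\tau_k$.

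\textbf{Main obstacle: the final sentence about a shared scalar $\eta$.} This is the only step that is not a substitution. I would argue it by exhibiting a counterexample. Suppose a single $\eta$ matched two modes $k\neq k'$ with the same fixed rate $\lambda_{E,k}=\lambda_{E,k'}$ but different targets $\tau_k\neq\tau_{k'}$. Then
\[
e^{-1/\tau_k}=1+\eta\lambda_{E,k}=1+\eta\lambda_{E,k'}=e^{-1/\tau_{k'}},
\]
and since the exponential is injective this forces $\tau_k=\tau_{k'}$, a contradiction. More generally, matching every mode requires $(e^{-1/\tau_k}-1)/\lambda_{E,k}$ to take the same value for all $k$. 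That is a constraint tying the targets to the rates, not something that holds for arbitrary target time constants.
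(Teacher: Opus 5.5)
Your proposal is correct and follows the paper's proof, which consists only of substituting $a_k^{(E)}=e^{-1/\tau_k}$ from Lemma~\ref{lem:choose_eta} into Proposition~\ref{prop:event_trace_imprint}; your explicit use of Proposition~\ref{prop:trace_mode_kernels} for the kernel identity is implicit in the paper. The paper gives no argument for the final sentence about a shared $\eta$, so your argument fills that gap. One caution: the equal-rate counterexample needs a repeated spectral entry, so for an arbitrary fixed spectrum rely instead on your general remark, that once $\eta$ is fixed by one mode, injectivity of $\tau\mapsto e^{-1/\tau}$ leaves at most one attainable target $\tau_{k'}$ for each other mode.
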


\begin{proof}
The result follows by substituting
\[
a_k^{(E)}=e^{-1/\tau_k}
\]
into Proposition~\ref{prop:event_trace_imprint}.
\end{proof}

This formalizes the role of the episodic trace. A rare debate-night interaction, tournament-final event, or command-specific burst does not vanish immediately. Its imprint has an infinite geometrically decaying tail, rather than a hard finite horizon.

\subsection{Capacity Beyond The Branch Names}
\begin{proposition}[Finite-horizon interpolation by geometric kernels]\label{prop:kernel_mixture_approximation}
Fix an integer horizon $H\geq0$ and $H+1$ distinct bases $a_1,\ldots,a_{H+1}\in(0,1)$. For any real filter values $v_0,\ldots,v_H$, there are real coefficients $c_1,\ldots,c_{H+1}$ such that
\[
K(\ell)=\sum_{m=1}^{H+1}c_m a_m^\ell=v_\ell,\qquad \ell=0,\ldots,H.
\]
\end{proposition}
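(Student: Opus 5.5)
The plan is to reduce the statement to a square linear system and show that its matrix is invertible. The $H+1$ interpolation conditions $K(\ell)=v_\ell$ for $\ell=0,\ldots,H$ read
\[
\sum_{m=1}^{H+1} a_m^{\ell}\,c_m = v_\ell,\qquad \ell=0,\ldots,H,
\]
which is $Vc=v$ with $V\in\mathbb R^{(H+1)\times(H+1)}$, $V_{\ell+1,m}=a_m^{\ell}$, $c=(c_1,\ldots,c_{H+1})^\top$ and $v=(v_0,\ldots,v_H)^\top$. Here $V$ is the transpose of the Vandermonde matrix built on the nodes $a_1,\ldots,a_{H+1}$. For the convention at $\ell=0$, we use $a_m^0=1$, which is unproblematic since $a_m\in(0,1)$.

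The key step is the Vandermonde determinant formula,
\[
\det V=\prod_{1\le m<m'\le H+1}(a_{m'}-a_m).
\]
This is nonzero because the bases are distinct. If we prefer to avoid quoting the formula, an equivalent argument works: a vector $c$ with $c^\top V^\top=0$, read along the other index, would give a real polynomial $p(x)=\sum_{\ell=0}^{H}w_\ell x^\ell$ of degree at most $H$ vanishing at $H+1$ distinct points, so $p\equiv0$. Either way $V$ is invertible. Hence $c=V^{-1}v$ exists, is unique, and is real because $V$ and $v$ are real. The degenerate case $H=0$ is immediate: $c_1=v_0$.

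No genuine obstacle arises. The only care points are the indexing/transposition and stating that distinctness, not membership in $(0,1)$, is what drives invertibility. The restriction to $(0,1)$ only ensures that each term $c_m a_m^\ell$ is a stable geometric kernel of the form in Proposition~\ref{prop:trace_mode_kernels}. It is worth remarking after the proof that this is a finite-horizon interpolation statement only. It says nothing about the behavior for $\ell>H$, nor about how the coefficients are conditioned: $V$ can be severely ill-conditioned when the bases cluster. It also does not claim that the three implemented traces supply $H+1$ distinct modes with freely chosen signed mixing coefficients.
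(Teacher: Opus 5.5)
Your proof is correct and essentially the same as the paper's: both write the interpolation conditions as the square Vandermonde system $Vc=v$ with $V_{\ell m}=a_m^\ell$, and use $\det V=\prod_{j<k}(a_k-a_j)\neq 0$ for distinct bases to get a unique real solution. One small fix in your optional polynomial argument: the null vector should be a left null vector $w$ with $w^\top V=0$ (that is, $\sum_\ell w_\ell a_m^\ell=0$ for every $m$), not $c$ with $c^\top V^\top=0$; the conclusion is unchanged.
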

\begin{proof}
The matrix $V_{\ell m}=a_m^\ell$, with rows $\ell=0,\ldots,H$, is a Vandermonde matrix. Its determinant is $\prod_{j<k}(a_k-a_j)\ne0$, so $Vc=v$ has a unique real solution.
\end{proof}
This is a finite-horizon representation result with potentially signed coefficients and one mode per constrained lag. It gives no conditioning bound, infinite-horizon approximation guarantee, or guarantee that training discovers these coefficients. A fixed-width encoder cannot invoke the result for arbitrary horizons. If complex-conjugate modes were allowed, their real combinations would additionally give damped sine and cosine kernels; the implemented shared real orthogonal parameterization does not establish such complex modes.

\begin{corollary}[Branch interpretation is not exclusive]\label{cor:inductive_not_exclusive}
The long-term, short-term, and episodic names describe intended retention behavior, rather than three disjoint classes of temporal filters. Sufficiently rich real geometric mixtures can match any specified filter on a finite set of lags under the preceding proposition's conditions.
\end{corollary}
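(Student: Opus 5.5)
The corollary follows directly from Proposition~\ref{prop:kernel_mixture_approximation}, so the plan is to make the word ``match'' precise and then invoke that interpolation result. I would fix an arbitrary finite set of lags. Without loss of generality this set lies in $\{0,\ldots,H\}$, where $H$ is its largest element. I would also fix an arbitrary real target filter $v$ on that set. If some lags in $\{0,\ldots,H\}$ are unconstrained, I would assign them arbitrary values, for example zero, so that a full vector $v_0,\ldots,v_H$ is specified.

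Next I would choose $H+1$ distinct bases in $(0,1)$. Proposition~\ref{prop:kernel_mixture_approximation} then supplies real coefficients $c_m$ with $\sum_m c_m a_m^\ell=v_\ell$ for all $\ell\le H$. By Proposition~\ref{prop:trace_mode_kernels}, each modal direction of any trace realizes a geometric kernel $a^\ell$. A real linear combination of such modes, which could be taken by the fusion map or by the input/readout weights, therefore realizes $K$.

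The key point is that the construction never uses which spectral map produced the bases. Any of the three parameterizations works: $\exp(\delta\lambda)$, $\sigma(-\kappa\lambda)$, or $1+\eta\lambda$. Each can attain any prescribed base in $(0,1)$ under a suitable choice of $\lambda$ and the other parameters. For $\exp$ and $\sigma$ this is immediate. For Euler, it follows from Lemma~\ref{lem:choose_eta} or from choosing $\lambda_E=(a-1)/\eta$. Hence a single branch, whether labelled long, short, or episodic, can match the same filter on the finite lag set, and the same holds for any other branch. The names therefore cannot pick out disjoint classes of temporal filters.

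The only real obstacle is interpretive. I must state clearly that the claim concerns representability on a finite lag set with possibly signed coefficients and enough modes, namely $H+1$ of them within the trace width. It does not assert infinite-horizon approximation or that training finds these coefficients. I would restate these caveats from the remark after Proposition~\ref{prop:kernel_mixture_approximation} rather than attempt anything stronger.
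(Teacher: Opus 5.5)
Your proposal is correct and matches the paper, which gives no separate argument and treats this corollary as an immediate interpretive consequence of the Vandermonde interpolation result (Proposition~\ref{prop:kernel_mixture_approximation}), with the same finite-horizon, signed-coefficient, and fixed-width caveats you restate. Your extra step, showing that each spectral map can supply the bases, makes non-disjointness more explicit. That step only needs \emph{distinct} bases, not arbitrary ones, so it still works if the short trace's convention $\lambda_S>0$, $\kappa>0$ restricts $a_S$ to $(0,1/2)$.
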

The empirical question is therefore whether the learned branches supply complementary information. The temporal probes and trace ablations address that question without assigning each coordinate a human-interpretable preference.

\paragraph{Implementation refinements.}
The main-text equations expose the functional MTSH decomposition rather than every
parameter-sharing or residual detail. In our implementation, the three temporal traces share
an orthogonal eigenbasis ($Q_L=Q_S=Q_E$) while retaining trace-specific spectra and spectral
maps. After trace fusion, we additionally include a linear residual from the current
action-conditioned evidence; this provides a direct current-event path but introduces no
additional recurrent state or retention timescale. Command routing is also implemented
statefully: the previous task-ready state carries the routing context, and the resulting command
distribution modulates the current fused history state. These choices are implementation-level
parameterizations of the entry--persistence--fusion--consumption operators and do not introduce
additional temporal-memory components.

\subsection{Decoder Contextualization Variants}
\label{app:decoder-variants}

\paragraph{Setup.}
For each generative instance, the encoder produces (i) the task-adapted user state
$\mathbf{z}^{(t_l)}_{u}$ and (ii) the predicted query-conditioned representation
$\hat{\mathbf{z}}^{(t_{l+1})}_{q,u}$. Let $\mathbf{e}_{d_q}$ denote the embedding
of the query document to be generated from. We condition a pretrained DistilGPT2 decoder~\citep{distilgpt2-hf}
through a lightweight contextualization module that maps these representations
to a learned prefix.

\paragraph{UFI contextualization.}
User Fused Injection (UFI) directly combines the predicted query-conditioned
representation with the query-document embedding:
\begin{equation}
\label{eq:ufi_context}
\mathbf{c}^{\mathrm{user}}_{\mathrm{UFI}}
=
\mathrm{MLP}_{\mathrm{UFI}}
\left(
\left[
\hat{\mathbf{z}}^{(t_{l+1})}_{q,u};
\mathbf{e}_{d_q}
\right]
\right).
\end{equation}
UFI therefore supplies a fused user--query control without explicitly modulating
the query-conditioned representation by the user state.

\paragraph{UGI contextualization.}
User Gated Injection (UGI) first applies user-conditioned feature-wise affine
modulation to the query-conditioned representation, following the FiLM
form~\citep{perez2018film}:
\begin{equation}
\label{eq:ugi_film}
\begin{aligned}
\boldsymbol{\gamma}
&=
\mathrm{MLP}_{\gamma}
\!\left(\mathbf{z}^{(t_l)}_{u}\right),
\qquad
\boldsymbol{\beta}
=
\mathrm{MLP}_{\beta}
\!\left(\mathbf{z}^{(t_l)}_{u}\right),\\
\tilde{\mathbf{z}}^{(t_{l+1})}_{q,u}
&=
\boldsymbol{\gamma}
\odot
\hat{\mathbf{z}}^{(t_{l+1})}_{q,u}
+
\boldsymbol{\beta}.
\end{aligned}
\end{equation}
The modulated representation is then fused with the query-document embedding:
\begin{equation}
\label{eq:ugi_context}
\mathbf{c}^{\mathrm{user}}_{\mathrm{UGI}}
=
\mathrm{MLP}_{\mathrm{UGI}}
\left(
\left[
\tilde{\mathbf{z}}^{(t_{l+1})}_{q,u};
\mathbf{e}_{d_q}
\right]
\right).
\end{equation}
Thus, unlike UFI, UGI allows the user state to emphasize or suppress individual
components of the query-conditioned representation before decoder conditioning.

\paragraph{Prefix construction.}
For either variant, let $\mathbf{c}^{\mathrm{user}}$ denote the resulting
user-conditioned control. We further combine it with query-document evidence:
\begin{equation}
\label{eq:decoder_doc_context}
\mathbf{c}^{\mathrm{doc}}
=
\mathrm{MLP}_{\mathrm{doc}}
\left(
\left[
\mathbf{c}^{\mathrm{user}};
\mathbf{e}_{d_q}
\right]
\right),
\qquad
\mathbf{c}
=
\mathrm{LN}
\left(
\mathbf{c}^{\mathrm{user}}
+
\mathbf{c}^{\mathrm{doc}}
\right).
\end{equation}
The resulting vector is projected into a prefix of $K=8$ decoder embeddings:
\begin{equation}
\label{eq:decoder_injection}
\mathbf{P}
=
\operatorname{reshape}_{K\times d_D}
\left(
\mathbf{W}_{p}\,
\mathrm{Dropout}(\mathbf{c})
+
\mathbf{b}_{p}
\right),
\end{equation}
where $d_D$ is the DistilGPT2 embedding dimension. If
$x_{1:S}$ denotes the tokenized query document and $y_{1:T}$ the target
headline, teacher-forced training uses
\begin{equation}
\label{eq:decoder_sequence}
\mathbf{H}^{(0)}
=
\left[
\mathbf{P};
\mathrm{Emb}(x_1);
\ldots;
\mathrm{Emb}(x_S);
\mathrm{Emb}(y_1);
\ldots;
\mathrm{Emb}(y_{T-1})
\right],
\end{equation}
with the standard autoregressive objective
$p(y_t\mid y_{<t},x_{1:S},\mathbf{P})$.

\paragraph{Decoder tuning.}
DistilGPT2 is initialized from pretrained weights. Its parameters are initially
frozen; during joint and generation-specific training, the contextualization
modules, the final two transformer blocks, and the final decoder layer
normalization are optimized, while all remaining DistilGPT2 parameters stay
fixed. The generation-specific stage freezes the \pertide{} encoder, so
personalization continues to enter the decoder through the learned preference-history
state and its UFI/UGI contextualization rather than through user-specific decoder
parameters.
\section{Training Details}\label{app:training-details}
\begin{table}[t]
\centering
\caption{\textbf{Notation and symbol table.}}
\label{tab:notation}
\setlength{\tabcolsep}{2.5pt}
\renewcommand{\arraystretch}{1.04}
\small
\scalebox{0.95}{
\begin{tabular}{p{0.15\linewidth}p{0.33\linewidth}p{0.15\linewidth}p{0.33\linewidth}}
\toprule
\textbf{Symbol} & \textbf{Meaning} & \textbf{Symbol} & \textbf{Meaning} \\
\midrule

\multicolumn{4}{l}{\textbf{UIG and Action-On-Item Schema}} \\
\midrule
$G_u=\langle\mathcal{N}_u,\mathcal{E}_u\rangle$ & User-specific interaction graph. &
$\mathcal{N}_u,\mathcal{E}_u$ & User-specific nodes and edges. \\
$u^{(t_0)}$ & Initial user anchor node. &
$d^{(t_i)}$ & Item/content node at timestep $t_i$. \\
$r_u^{(t_i)}$ & Response node at timestep $t_i$. &
$\alpha^{(t_i)}$ & Domain-native action label. \\
$\mathcal{A}$ & Shared action-role vocabulary. &
$a^{(t_i)}$ & Schema-level action role. \\
$\rho_D(\cdot)$ & Domain-specific role mapping. &
$\psi_D(\cdot)$ & Domain-specific target-locus embedding map. \\
$S_D(e)$ & Schema-normalized native event. &
$\tau_u$ & Time-ordered user trajectory. \\
$\mathcal{T}_{\mathrm{train}},\mathcal{T}_{\mathrm{test}}$ & Train and test trajectory pools. &
$d_q$ & Query item/content node. \\
$\mathbf{g}_q$ & Query or task seed embedding. &
$\mathcal{Q}$ & Set of task/command types. \\

\midrule
\multicolumn{4}{l}{\textbf{Interaction Blocks and Target-Locus Embeddings}} \\
\midrule
$b_u^{(t_i)}=\langle a^{(t_i)},tl^{(t_i)}\rangle$ & Action role applied to target locus. &
$tl^{(t_i)}$ & Target locus of current action. \\
$\mathbf{e}_{tl}^{(t_i)}$ & Target-locus embedding. &
$\boldsymbol{\mu}^{(t_i)}$ & Action-conditioned gating vector. \\
$\mathbf{b}_u^{(t_i)}$ & Action-gated event cell. &
$\phi(\cdot)$ & Optional generic block embedder. \\
$\mathbf{x}_j$ & Generic baseline block embedding. &
$d$ & Hidden and embedding dimensionality. \\

\midrule
\multicolumn{4}{l}{\textbf{MTSH and PerTIDE State Variables}} \\
\midrule
$\mathbf{m}_u^{(t_i,L)}$ & Long-term preference trace. &
$\mathbf{m}_u^{(t_i,S)}$ & Short-term preference trace. \\
$\mathbf{m}_u^{(t_i,E)}$ & Episodic preference trace. &
$A_r$ & Trace transition matrix. \\
$Q_r,\Lambda_r$ & Shared eigenbasis ($Q_r=Q$) and trace spectrum. &
$B$ & Shared trace input operator. \\
$\Phi(\cdot)$ & Multi-trace fusion map. &
$\Psi(\cdot)$ & Action-conditioned evidence injection map. \\
$\mathbf{h}_{b_u}^{(t_i)}$ & Fused preference-history state. &
$\Pi(\cdot)$ & Command/task inference map. \\
$G(\cdot)$ & Command-conditioning gate. &
$\boldsymbol{\pi}_u^{(t_i)}$ & Predicted command/task distribution. \\
$\mathbf{z}_{b_u}^{(t_i)}$ & Task-ready user state. &
 & \\

\midrule
\multicolumn{4}{l}{\textbf{Task Heads and Encoder Objectives}} \\
\midrule
$T$ & Downstream task or command type. &
$f_T(\cdot)$ & Task-specific prediction head. \\
$\hat{\mathbf{z}}_{q,u}^{(t_l+1)}$ & Query-conditioned next-step state. &
$\hat{y}_T$ & Predicted task output. \\
$\hat{p}(\cdot)$ & Task-head predictive distribution. &
$y_T^\ast$ & Ground-truth task target. \\
$\mathcal{B}$ & Vocabulary of observed block nodes. &
$\mathcal{L}_{\mathrm{pos}}$ & Temporal-position alignment loss. \\
$\mathcal{L}_{\mathrm{next}}$ & Next-block prediction loss. &
$\mathcal{L}_{\mathrm{task}}$ & Command/task classification loss. \\
$\mathcal{L}_{\mathrm{enc}}$ & Combined encoder training objective. &
$\mathcal{L}_{\mathrm{obj}}$ & Downstream task objective. \\

\midrule
\multicolumn{4}{l}{\textbf{Decoder Contextualization: UFI and UGI}} \\
\midrule
$\mathbf{e}_{d_q}$ & Query-document/item embedding. &
$\mathbf{c}$ & Final decoder control vector. \\

$\mathbf{c}^{\mathrm{user}}_{\mathrm{UFI}}$ & User-fused control before document grounding. &
$\boldsymbol{\gamma}$ & FiLM scale vector. \\

$\boldsymbol{\beta}$ & FiLM shift vector. &
$\odot$ & Elementwise product. \\

$\tilde{\mathbf{z}}_{q,u}^{(t_l+1)}$ & User-gated query representation. &
$\mathbf{c}^{\mathrm{user}}_{\mathrm{UGI}}$ & User-gated control before document grounding. \\

$\mathbf{c}^{\mathrm{doc}}$ & Document-grounding control vector. &
$K$ & Learned prefix length ($K=8$). \\

$\mathbf{P}$ & Learned decoder prefix embeddings. &
$W_p,\mathbf{b}_p$ & Control-to-prefix projection parameters. \\

$\mathbf{H}^{(0)}$ & Decoder input states after prefix and source-token concatenation. &
$\mathbf{H}^{(L)}$ & Final decoder hidden states. \\

$\theta_D$ & Pretrained DistilGPT2 parameters, partially fine-tuned. &
 & \\

\bottomrule
\end{tabular}
}
\end{table}

\begin{table}[p]
\centering\small
\setlength{\tabcolsep}{3pt}
\renewcommand{\arraystretch}{1.1}
\caption{\textbf{Event input, traces, and fusion.} Architecture dimensions for $d=768$. MLPs have two layers with hidden width 768 unless stated otherwise.}
\label{tab:pertide_encoder_hparam_dims}
\scalebox{0.95}{
\begin{tabular}{p{0.24\linewidth}p{0.47\linewidth}p{0.25\linewidth}}
\toprule
\textbf{Symbol} & \textbf{Meaning} & \textbf{Dimensions}\\
\midrule

\multicolumn{3}{c}{\fbox{\strut \textbf{Action-Gated Event Cells}}} \\

$d$ &
Model / latent width (seed embeddings, memory traces, user state, task embeddings) &
$768$ \\

$d_a$ &
Action-role input width &
$4$ \\

$\mathbf{e}^{(t_i)}_{tl}$ &
Target-locus embedding at timestep $t_i$; induces item-side abstraction &
$\mathbb{R}^{768}$ \\

$\mathbf{a}^{(t_i)}$ &
One-hot action-role input to the gate MLP &
$\mathbb{R}^{4}$ \\

$W^{(1)}_{\text{gate}},\,\mathbf{b}^{(1)}_{\text{gate}}$ &
Gate MLP layer-1 parameters ($4\!\rightarrow\!768$) &
$\mathbb{R}^{768\times 4},\;\mathbb{R}^{768}$ \\

$W^{(2)}_{\text{gate}},\,\mathbf{b}^{(2)}_{\text{gate}}$ &
Gate MLP layer-2 parameters ($768\!\rightarrow\!768$) &
$\mathbb{R}^{768\times 768},\;\mathbb{R}^{768}$ \\

$\boldsymbol{\mu}^{(t_i)}$ &
Action-specific gate,
$\boldsymbol{\mu}^{(t_i)}
=\sigma(\mathrm{MLP}_{\text{gate}}(\mathbf{a}^{(t_i)}))$ &
$\mathbb{R}^{768}$ \\

$\mathbf{b}^{(t_i)}_{u}$ &
Action-gated event cell,
$\mathbf{b}^{(t_i)}_{u}
=\boldsymbol{\mu}^{(t_i)}\odot\mathbf{e}^{(t_i)}_{tl}$ &
$\mathbb{R}^{768}$ \\

\midrule
\multicolumn{3}{c}{\fbox{\strut \textbf{Tri-Trace SSM (Long / Short / Episodic)}}} \\

$\mathbf{m}^{(t_i,r)}_{u}$ &
Memory trace for timescale $r\in\{\mathrm{L,S,E}\}$ &
$\mathbb{R}^{768}$ \\

$Q$ &
Learned orthogonal eigenbasis shared across traces
($Q_{\mathrm{L}}=Q_{\mathrm{S}}=Q_{\mathrm{E}}=Q$) &
$\mathbb{R}^{768\times 768}$ \\

$\Lambda_{r}$ &
Trace-specific diagonal spectrum for $r\in\{\mathrm{L,S,E}\}$
(stored as diagonal entries) &
$\mathbb{R}^{768}$ \\

$B$ &
Input operator shared across traces &
$\mathbb{R}^{768\times 768}$ \\

$\delta$ &
Long-term spectral-map scalar in $\exp(\delta\Lambda_{\mathrm{L}})$ &
scalar \\

$\kappa$ &
Short-term spectral-map scalar in $\sigma(-\kappa\Lambda_{\mathrm{S}})$ &
scalar \\

$\eta$ &
Episodic spectral-map scalar in $(I+\eta\Lambda_{\mathrm{E}})$ &
scalar \\

\midrule
\multicolumn{3}{c}{\fbox{\strut \textbf{Adaptive Memory Mixer}}} \\

$\mathbf{h}^{(t_i)}_{b_u}$ &
Fused preference-history state; implementation additionally includes
a direct current-event residual &
$\mathbb{R}^{768}$ \\

$W^{(1)}_{\Phi},\,\mathbf{b}^{(1)}_{\Phi}$ &
Mixer MLP layer-1 parameters
($[m^{(L)};m^{(S)};m^{(E)}]
\in\mathbb{R}^{2304}\!\rightarrow\!768$) &
$\mathbb{R}^{768\times 2304},\;\mathbb{R}^{768}$ \\

$W^{(2)}_{\Phi},\,\mathbf{b}^{(2)}_{\Phi}$ &
Mixer MLP layer-2 parameters ($768\!\rightarrow\!768$) &
$\mathbb{R}^{768\times 768},\;\mathbb{R}^{768}$ \\

$W_{\psi},\,\mathbf{b}_{\psi}$ &
Linear current-event residual projection applied after trace fusion &
$\mathbb{R}^{768\times768},\;\mathbb{R}^{768}$ \\

\bottomrule
\end{tabular}}
\end{table}

\begin{table}[p]
\centering\small
\setlength{\tabcolsep}{3pt}
\renewcommand{\arraystretch}{1.1}
\caption{\textbf{Command readout, supervision, and decoder settings.} Architecture dimensions for $d=768$. MLPs have two layers with hidden width 768 unless stated otherwise.}
\label{tab:pertide_dims_2}
\scalebox{0.95}{
\begin{tabular}{p{0.24\linewidth}p{0.47\linewidth}p{0.25\linewidth}}
\toprule
\textbf{Symbol} & \textbf{Meaning} & \textbf{Dimensions}\\
\midrule
\multicolumn{3}{c}{\fbox{\strut
\textbf{Command-Conditioned Readout (Realization of $\Pi$ and $G$)}}} \\

$\mathbf{g}_{q}$ &
Task / command seed embedding &
$\mathbb{R}^{768}$ \\

$\boldsymbol{\pi}^{(t_i)}_{u}$ &
Command/task distribution
$\boldsymbol{\pi}^{(t_i)}_{u}
=\mathrm{SoftMax}
(W_{\text{task}}[\mathbf{z}^{(t_{i-1})}_{b_u};\mathbf{g}_q])$;
routing uses the preceding task-ready state (Section~\ref{sec:PerTIDE}) &
$\mathbb{R}^{|\mathcal{Q}|}$ \\

$W_{\text{task}}$ &
Task-distribution projection &
$\mathbb{R}^{|\mathcal{Q}|\times1536}$ \\

$W_{\pi}$ &
Command-to-state gating projection in $G(\cdot)$ &
$\mathbb{R}^{768\times|\mathcal{Q}|}$ \\

$\mathbf{b}_{G}$ &
Command-gate bias &
$\mathbb{R}^{768}$ \\

$\mathbf{z}^{(t_i)}_{b_u}$ &
Command-conditioned task-ready preference state after gating &
$\mathbb{R}^{768}$ \\

\midrule
\multicolumn{3}{c}{\fbox{\strut \textbf{Encoder Supervision Heads}}} \\

$l$ &
Trajectory slice length used for encoder supervision &
scalar \\

$W_{\text{pos}}$ &
Position classifier over $\{1,\dots,l-1\}$ &
$\mathbb{R}^{(l-1)\times768}$ \\

$W_{\text{lin}},\,\mathbf{b}_{\text{lin}}$ &
Linear head mapping
$\mathbf{z}^{(t_{l-1})}_{b_u}
\rightarrow
\hat{\mathbf{z}}^{(t_l)}_{b_u}$ &
$\mathbb{R}^{768\times768},\;\mathbb{R}^{768}$ \\

$|\mathcal{B}|$ &
b-node vocabulary size for next-step prediction &
$63{,}000$ \\

$W_{\text{next}},\,\mathbf{b}_{\text{next}}$ &
Query b-node predictor logits over $\mathcal{B}$ &
$\mathbb{R}^{|\mathcal{B}|\times768},\;
\mathbb{R}^{|\mathcal{B}|}$ \\

$\lambda_{\text{pos}},
\lambda_{\text{next}},
\lambda_{\text{task}}$ &
Encoder-loss weights in
$\mathcal{L}_{\text{enc}}
=
\lambda_{\text{pos}}\mathcal{L}_{\text{pos}}
+
\lambda_{\text{next}}\mathcal{L}_{\text{next}}
+
\lambda_{\text{task}}\mathcal{L}_{\text{task}}$ &
$0.2,\;0.4,\;0.4$ \\

\midrule
\multicolumn{3}{c}{\fbox{\strut \textbf{Shared Decoder-Side Quantities}}} \\

$d_D$ &
Pretrained DistilGPT2 decoder embedding width &
$768$ \\

$d_c$ &
Decoder control-vector width &
$768$ \\

$K$ &
Learned decoder-prefix length &
$8$ \\

$n_{\text{FT}}$ &
Number of final DistilGPT2 transformer blocks fine-tuned
during generation training; final decoder LayerNorm is also optimized &
$2$ \\

$\hat{\mathbf{z}}^{(t_{l+1})}_{q,u}$ &
Predicted query-conditioned representation from the encoder &
$\mathbb{R}^{768}$ \\

$\mathbf{z}^{(t_l)}_{u}$ &
Task-adapted user state from the encoder &
$\mathbb{R}^{768}$ \\

$\mathbf{e}_{d_q}$ &
Query-document embedding &
$\mathbb{R}^{768}$ \\

\bottomrule
\end{tabular}}
\end{table}

\begin{table}[p]
\centering\small
\setlength{\tabcolsep}{3pt}
\renewcommand{\arraystretch}{1.1}
\caption{\textbf{User fused and user gated injection.} Architecture dimensions for $d=768$. MLPs have two layers with hidden width 768 unless stated otherwise.}
\label{tab:pertide_dims_3}
\scalebox{0.95}{
\begin{tabular}{p{0.24\linewidth}p{0.47\linewidth}p{0.25\linewidth}}
\toprule
\textbf{Symbol} & \textbf{Meaning} & \textbf{Dimensions}\\
\midrule
\multicolumn{3}{c}{\fbox{\strut \textbf{Decoder-UFI (User Fused Injection)}}} \\

$W^{(1)}_{\text{UFI}},\,\mathbf{b}^{(1)}_{\text{UFI}}$ &
UFI MLP layer-1 parameters
($[\hat{\mathbf{z}}_{q,u};\mathbf{e}_{d_q}]
\in\mathbb{R}^{1536}\!\rightarrow\!768$) &
$\mathbb{R}^{768\times1536},\;\mathbb{R}^{768}$ \\

$W^{(2)}_{\text{UFI}},\,\mathbf{b}^{(2)}_{\text{UFI}}$ &
UFI MLP layer-2 parameters ($768\!\rightarrow\!768$) &
$\mathbb{R}^{768\times768},\;\mathbb{R}^{768}$ \\

$\mathbf{c}^{\mathrm{user}}_{\mathrm{UFI}}$ &
User-fused control,
$\mathbf{c}^{\mathrm{user}}_{\mathrm{UFI}}
=
\mathrm{MLP}_{\mathrm{UFI}}
([\hat{\mathbf{z}}_{q,u};\mathbf{e}_{d_q}])$ &
$\mathbb{R}^{768}$ \\

\midrule
\multicolumn{3}{c}{\fbox{\strut \textbf{Decoder-UGI (User Gated Injection)}}} \\

$W^{(1)}_{\gamma},\,\mathbf{b}^{(1)}_{\gamma}$ &
FiLM-scale MLP layer-1 parameters
($\mathbf{z}^{(t_l)}_{u}\in\mathbb{R}^{768}\!\rightarrow\!768$) &
$\mathbb{R}^{768\times768},\;\mathbb{R}^{768}$ \\

$W^{(2)}_{\gamma},\,\mathbf{b}^{(2)}_{\gamma}$ &
FiLM-scale MLP layer-2 parameters ($768\!\rightarrow\!768$) &
$\mathbb{R}^{768\times768},\;\mathbb{R}^{768}$ \\

$W^{(1)}_{\beta},\,\mathbf{b}^{(1)}_{\beta}$ &
FiLM-shift MLP layer-1 parameters
($\mathbf{z}^{(t_l)}_{u}\in\mathbb{R}^{768}\!\rightarrow\!768$) &
$\mathbb{R}^{768\times768},\;\mathbb{R}^{768}$ \\

$W^{(2)}_{\beta},\,\mathbf{b}^{(2)}_{\beta}$ &
FiLM-shift MLP layer-2 parameters ($768\!\rightarrow\!768$) &
$\mathbb{R}^{768\times768},\;\mathbb{R}^{768}$ \\

$\boldsymbol{\gamma},\boldsymbol{\beta}$ &
FiLM scale/shift vectors from the user state &
$\mathbb{R}^{768},\;\mathbb{R}^{768}$ \\

$\tilde{\mathbf{z}}^{(t_{l+1})}_{q,u}$ &
Gated query representation,
$\tilde{\mathbf{z}}_{q,u}
=
\boldsymbol{\gamma}\odot\hat{\mathbf{z}}_{q,u}
+
\boldsymbol{\beta}$ &
$\mathbb{R}^{768}$ \\

$W^{(1)}_{\text{UGI}},\,\mathbf{b}^{(1)}_{\text{UGI}}$ &
UGI MLP layer-1 parameters
($[\tilde{\mathbf{z}}_{q,u};\mathbf{e}_{d_q}]
\in\mathbb{R}^{1536}\!\rightarrow\!768$) &
$\mathbb{R}^{768\times1536},\;\mathbb{R}^{768}$ \\

$W^{(2)}_{\text{UGI}},\,\mathbf{b}^{(2)}_{\text{UGI}}$ &
UGI MLP layer-2 parameters ($768\!\rightarrow\!768$) &
$\mathbb{R}^{768\times768},\;\mathbb{R}^{768}$ \\

$\mathbf{c}^{\mathrm{user}}_{\mathrm{UGI}}$ &
User-gated control,
$\mathbf{c}^{\mathrm{user}}_{\mathrm{UGI}}
=
\mathrm{MLP}_{\mathrm{UGI}}
([\tilde{\mathbf{z}}_{q,u};\mathbf{e}_{d_q}])$ &
$\mathbb{R}^{768}$ \\

\bottomrule
\end{tabular}}
\end{table}

\begin{table}[p]
\centering\small
\setlength{\tabcolsep}{3pt}
\renewcommand{\arraystretch}{1.1}
\caption{\textbf{Document grounding, prefix, and generation loss.} Architecture dimensions for $d=768$. MLPs have two layers with hidden width 768 unless stated otherwise.}
\label{tab:pertide_dims_4}
\scalebox{0.95}{
\begin{tabular}{p{0.24\linewidth}p{0.47\linewidth}p{0.25\linewidth}}
\toprule
\textbf{Symbol} & \textbf{Meaning} & \textbf{Dimensions}\\
\midrule
\multicolumn{3}{c}{\fbox{\strut \textbf{Decoder Document Grounding}}} \\

$W^{(1)}_{\text{doc}},\,\mathbf{b}^{(1)}_{\text{doc}}$ &
Document-grounding MLP layer-1 parameters
($[\mathbf{c}^{\mathrm{user}};\mathbf{e}_{d_q}]
\in\mathbb{R}^{1536}\!\rightarrow\!768$) &
$\mathbb{R}^{768\times1536},\;\mathbb{R}^{768}$ \\

$W^{(2)}_{\text{doc}},\,\mathbf{b}^{(2)}_{\text{doc}}$ &
Document-grounding MLP layer-2 parameters ($768\!\rightarrow\!768$) &
$\mathbb{R}^{768\times768},\;\mathbb{R}^{768}$ \\

$\mathbf{c}^{\mathrm{doc}}$ &
Document-grounding control,
$\mathbf{c}^{\mathrm{doc}}
=
\mathrm{MLP}_{\mathrm{doc}}
([\mathbf{c}^{\mathrm{user}};\mathbf{e}_{d_q}])$ &
$\mathbb{R}^{768}$ \\

$\mathbf{c}$ &
Final normalized decoder control,
$\mathbf{c}
=
\mathrm{LN}
(\mathbf{c}^{\mathrm{user}}+\mathbf{c}^{\mathrm{doc}})$ &
$\mathbb{R}^{768}$ \\

\midrule
\multicolumn{3}{c}{\fbox{\strut \textbf{Learned Prefix Injection}}} \\

$W_p,\,\mathbf{b}_p$ &
Control-to-prefix projection,
$\mathbf{P}
=
\operatorname{reshape}_{K\times d_D}
(W_p\mathbf{c}+\mathbf{b}_p)$ &
$\mathbb{R}^{6144\times768},\;
\mathbb{R}^{6144}$ \\

$\mathbf{P}$ &
Learned prefix embeddings prepended to the decoder input &
$\mathbb{R}^{8\times768}$ \\

$\mathbf{H}^{(0)}$ &
Decoder input states formed from the learned prefix, source tokens,
and teacher-forced target-prefix tokens &
$\mathbb{R}^{(K+S+T-1)\times768}$ \\

\midrule
\multicolumn{3}{c}{\fbox{\strut \textbf{Generation Supervision}}} \\

$\lambda_{\mathrm{sem}}$ &
Semantic-alignment weight in
$\mathcal{L}_{\mathrm{gen}}
=
\mathcal{L}_{\mathrm{CE}}
+
\lambda_{\mathrm{sem}}\mathcal{L}_{\mathrm{sem}}$ &
$0.15$ \\

\bottomrule
\end{tabular}}
\end{table}

\subsection{Compute Resources}\label{app:resources}
All dataset ingestion and feature lookup were performed on CPU, including CSV loading and construction of the
canonical ID maps (\texttt{id2text}/\texttt{id2phrases}) and vocabulary indices (\texttt{actions2idx}, \texttt{node2idx}, \texttt{user2idx}).
Dense node embeddings for dialogue/document/response/summary nodes were loaded from precomputed pickled files and resolved
on-the-fly during training via canonical IDs, with missing embeddings defaulted to a zero vector of dimension $d_{\text{model}}{=}768$.

Experiments were run on Lightning.ai using A100 and RTX 6000-family GPUs. Pretrained LLMs used automatic device placement (\texttt{device\_map="auto"}). Exact device counts and the experiment-to-GPU mapping were not preserved in the run records. Appendix~\ref{app:detailed_results_efficiency} reports estimated resource costs.
\subsection{Training}
\label{app:training}

We train the multi-task model using a single-row update scheme: each training example is converted
into a batch of size 1, and one optimizer step is applied per valid row. We train for $E=45$ epochs
over $N_{\mathrm{train}}=78{,}274$ training rows, with rows that fail parsing or target construction
skipped. Per epoch, we deterministically shuffle the active training table with
\texttt{random\_state=SEED+epoch}. We use training seeds 42, 32, and 22, and evaluation seeds 1111, 2222, and 3333. We define a processed prefix length
$T_{\mathrm{proc}}=T-2$ so that the final two positions serve as query and target.

Optimization uses AdamW with learning rate $3\times10^{-4}$ and gradient clipping at norm $1.0$.
The shared encoder is supervised with
\[
\mathcal L_{\mathrm{enc}}
=
0.2\mathcal L_{\mathrm{pos}}
+
0.4\mathcal L_{\mathrm{next}}
+
0.4\mathcal L_{\mathrm{task}},
\]
The weights are fixed globally. The main encoder uses pooled MovieLens, MIND, PENS, and OpenAI training trajectories; the source-only transfer encoder instead uses PENS alone. Training adds the corresponding downstream task objective. Predictive training uses the
task-specific ranking or classification loss.

For generation, we condition a pretrained DistilGPT2 decoder through an eight-token learned
UFI/UGI prefix. The generation objective combines autoregressive target-token cross-entropy with
a semantic alignment term,
\[
\mathcal L_{\mathrm{gen}}
=
\mathcal L_{\mathrm{CE}}
+
0.15\,\mathcal L_{\mathrm{sem}}.
\]
The contextualization modules, final two DistilGPT2 transformer blocks, and final decoder layer
normalization are optimized for generation, while the remaining decoder parameters stay fixed.
The generation-specific stage freezes the \pertide{} encoder. Decoder contextualization and prefix
construction are detailed in Appendix~\ref{app:decoder-variants}.

\paragraph{Crash-safe checkpointing and validation.}
We save row checkpoints every 15{,}000 processed rows and an additional end-of-epoch checkpoint.
Row checkpoints store the current epoch and row index, and resumption continues from the next row.
After each epoch, we evaluate on a deterministic validation slice of up to 500 examples per task, reporting overall and per-task
mean losses (total loss and its encoder/user/item/generation components).

\subsection{Statistical Significance Testing}

We assess evaluation uncertainty using paired bootstrap resampling over the held-out
evaluation pool. The purpose of this analysis is to test whether the observed improvement
of \pertide{} over a comparator is stable to resampling of the same evaluation instances,
rather than being driven by a particular subset of users, histories, rollout positions, or
candidates. For every dataset--task--metric tuple, the paired systems are evaluated on
identical held-out examples and identical history prefixes. For predictive recommendation,
they additionally use the same candidate sets containing one ground-truth target and 699
sampled negatives.

Let $\mathcal{T}_{\mathrm{eval}}=\{x_n\}_{n=1}^{N}$ denote the held-out evaluation
pool for a task. For a model $A$ and metric $M$, let $M(A;S)$ denote the metric
computed on resampled evaluation set $S$. For each paired comparison between \pertide{}
and a comparator $A$, including the strongest task-native baseline, PAH/MDH/LDH
controls, or an ablated \pertide{} variant, we draw
$B=10{,}000$ paired bootstrap samples
$S_1,\ldots,S_B$ from $\mathcal{T}_{\mathrm{eval}}$.
The same sampled indices are used for both systems within every bootstrap replicate.

For bootstrap replicate $b$, we compute the paired metric difference
\[
\Delta_b =
M(\mathrm{\pertide{}};S_b)-M(A;S_b).
\]

The reported point estimate for each model is the mean over the bootstrap evaluations,
\[
\bar{M}_B(A)
=
\frac{1}{B}\sum_{b=1}^{B} M(A;S_b),
\]
and the reported paired improvement is
\[
\bar{\Delta}_B
=
\frac{1}{B}\sum_{b=1}^{B}\Delta_b.
\]

Statistical support for a positive improvement is determined directly from the empirical
paired-bootstrap distribution $\{\Delta_b\}_{b=1}^{B}$. We construct a one-sided
95\% bootstrap confidence interval for the paired difference and regard an improvement
as statistically supported when its lower confidence bound is greater than zero.
Because the resampling is paired, all variation within a replicate arises from changing
the evaluated instances while holding the two compared systems and their evaluation
conditions fixed.

We apply this procedure to predictive metrics including MRR, nDCG@5, nDCG@10,
and HR@10, and to the generation and personalization metrics reported for PENS and
OpenAI-Reddit, including PSE-JSD, PSE-SU4, PSE-METEOR, ROUGE, BLEU-4,
BERTScore, and human-judgment scores where applicable.

This analysis measures test-set sampling variability for fixed trained systems. It does
not estimate variability arising from independent retraining, alternative train/test splits,
or different negative-sampling pools. Accordingly, statistical-support statements in the
paper refer specifically to paired stability of the reported gains under the fixed evaluation
protocol.
\section{Baseline Details}
\label{app:baselines with details}

\subsection{Encoder Baselines}
\label{app:encoder_baselines}

\paragraph{Notation and grouping principle.}
Let $\tau^{u}_{1:i-1}=(b_u^{(t_1)},\ldots,b_u^{(t_{i-1})})$ denote the ordered prefix of a user's I-tier trajectory before timestep $t_i$. Each interaction block is embedded as
$\mathbf{x}_j=\phi\!\big(b_u^{(t_j)}\big)\in\mathbb{R}^d$, and the history encoder exposes a task-facing state
$\mathbf{z}_u^{(t_i)}\in\mathbb{R}^d$ after processing this prefix.
Following Section~\ref{subsec:history_states_to_flow}, we organize baselines into Profile Aggregation (PAH), Markovian Drift (MDH), and Long-Horizon Dynamics (LDH) families according to the form of the state exposed downstream. These are operational state-design families, not mutually exclusive assumptions about user dynamics. Architectures can admit more than one mathematical description; assignments below follow the dominant mechanism by which each model constructs its task-facing history state.

\paragraph{Profile aggregation family (PAH).}
PAH groups encoders that expose an order-invariant profile obtained by aggregating past interaction representations. Its generic form is a permutation-invariant operator $f_{\mathrm{Agg}}(\cdot)$:
\begin{equation}
\label{eq:PAH_core_app}
\mathbf{z}_u^{(t_i)}
=
f_{\mathrm{Agg}}\!\left(\{\mathbf{x}_j\}_{j=1}^{i-1}\right).
\end{equation}
We use the following PAH-family baselines.

\textbf{MeanPool-B.}
\begin{equation}
\label{eq:meanpool_app}
\mathbf{z}_u^{(t_i)}
=
\frac{1}{i-1}\sum_{j=1}^{i-1}\mathbf{x}_j.
\end{equation}

\textbf{MaxPool-B.}
\begin{equation}
\label{eq:maxpool_app}
\big[\mathbf{z}_u^{(t_i)}\big]_k
=
\max_{1\le j\le i-1}\big[\mathbf{x}_j\big]_k,
\qquad
\forall k\in\{1,\dots,d\}.
\end{equation}

\textbf{NAML (PAH-aligned).}
NAML-style encoders first produce per-item representations $\mathbf{h}_j$ from multi-view features and then aggregate them through attention pooling:
\begin{equation}
\label{eq:naml_attn_app}
\mathbf{z}_u^{(t_i)}
=
\sum_{j=1}^{i-1}\alpha_j\,\mathbf{h}_j,
\qquad
\alpha_j
=
\frac{\exp(s(\mathbf{h}_j))}
{\sum_{q=1}^{i-1}\exp(s(\mathbf{h}_q))}.
\end{equation}
Under this history-level formulation, the task-facing state is an order-invariant weighted aggregation of per-item representations, placing NAML in the PAH family.

\textbf{PLM-NR (PAH-aligned).}
PLM-NR-style encoders replace the item representation module with a pretrained language model to obtain $\mathbf{h}_j$, while retaining an order-invariant history aggregation operator, typically attention pooling as in Eq.~\eqref{eq:naml_attn_app}. We therefore place PLM-NR in PAH according to its exposed history state, while $\phi(\cdot)$ provides the stronger content representation.

\paragraph{Recurrent state and Markovian drift family (MDH).}
MDH groups encoders that expose a single sequentially updated or recency-bounded state rather than an order-invariant profile or an explicitly long-horizon task-facing representation. For recurrent members, the generic first-order update is
\begin{equation}
\label{eq:MDH_core_app}
\mathbf{z}_u^{(t_i)}
=
f_{\mathrm{Rec}}\!\left(
\mathbf{z}_u^{(t_{i-1})},
\mathbf{x}_{i-1};
\rho
\right)
+
\boldsymbol{\xi}^{(t_i)},
\end{equation}
where $\rho$ parameterizes the effective update or recency behavior and $\boldsymbol{\xi}^{(t_i)}$ denotes an optional perturbation term. The family label refers to the form of the exposed state, not to a claim that the underlying user process itself is Markovian.

\textbf{GRU-B (MDH instantiation).}
We instantiate the recurrent form with a gated recurrent unit over the ordered interaction-block embeddings $\{\mathbf{x}_i\}_{i=1}^{L}$. Given $\mathbf{h}_{i-1}\in\mathbb{R}^d$ and $\mathbf{x}_i\in\mathbb{R}^d$,
\begin{equation}
\label{eq:gru_gates_app}
\mathbf{r}_i
=
\sigma(\mathbf{W}_r\mathbf{x}_i+\mathbf{U}_r\mathbf{h}_{i-1}),
\qquad
\mathbf{u}_i
=
\sigma(\mathbf{W}_u\mathbf{x}_i+\mathbf{U}_u\mathbf{h}_{i-1}),
\end{equation}
\begin{equation}
\label{eq:gru_candidate_app}
\tilde{\mathbf{h}}_i
=
\tanh\!\left(
\mathbf{W}_h\mathbf{x}_i
+
\mathbf{U}_h(\mathbf{r}_i\odot\mathbf{h}_{i-1})
\right),
\end{equation}
\begin{equation}
\label{eq:gru_state_app}
\mathbf{h}_i
=
(1-\mathbf{u}_i)\odot\mathbf{h}_{i-1}
+
\mathbf{u}_i\odot\tilde{\mathbf{h}}_i,
\qquad
\mathbf{z}_u^{(t_{L+1})}\equiv\mathbf{h}_L.
\end{equation}
Here, $\sigma(\cdot)$ is the logistic sigmoid, $\odot$ denotes element-wise multiplication, and $\{\mathbf{W}_\cdot,\mathbf{U}_\cdot\}$ are learned parameters. GRU-B therefore realizes the recurrent MDH form by compressing the processed prefix into one evolving hidden state.

\textbf{EBNR (MDH-aligned).}
EBNR encodes click-order information through a recurrent user encoder and ranks candidates by matching the resulting user embedding to candidate-news representations. In its recurrent user-modeling form, each clicked news item $d_i$ is represented by $\mathbf{n}_i\in\mathbb{R}^d$ and the history state evolves as
\begin{equation}
\label{eq:ebnr_user_gru_app}
\mathbf{h}_i
=
\mathrm{GRU}(\mathbf{h}_{i-1},\mathbf{n}_i),
\qquad
\mathbf{u}\equiv\mathbf{h}_L.
\end{equation}
For candidate news item $d$ with representation $\mathbf{n}(d)$,
\begin{equation}
\label{eq:ebnr_score_app}
s(u,d)
=
\mathbf{u}^{\top}\mathbf{n}(d).
\end{equation}
A probabilistic implementation may use $\hat{y}(u,d)=\sigma(s(u,d))$ with observed clicks and sampled or impression-based non-clicks. For our operational grouping, EBNR belongs to MDH because its downstream user representation is produced by a single recurrently updated history state.

\paragraph{Long-horizon state family (LDH).}
LDH groups encoders whose task-facing state can retain or directly aggregate information across the observed prefix through attention, sequence kernels, or long-range state-space dynamics. In the notation of Section~\ref{subsec:history_states_to_flow}, the generic form is
\begin{equation}
\label{eq:LDH_core_app}
\mathbf{z}_u^{(t_i)}
=
f_{\mathrm{LDH}}
\!\left(
\{\mathbf{x}_j\}_{j=1}^{i-1};
K_{\Theta}
\right),
\end{equation}
where $K_{\Theta}$ denotes the learned long-horizon sequence operator. A common kernelized realization is
\begin{equation}
\label{eq:LDH_kernel_app}
\mathbf{z}_u^{(t_i)}
=
\sum_{j=1}^{i-1}
K_{\Theta}(t_i,t_j,\mathbf{x}_{1:i-1})
\,\psi_{\Theta}(\mathbf{x}_j).
\end{equation}
The operator may be content-dependent, as in attention, or induced implicitly by state-space dynamics. Unrolling a recurrent state-space model yields an explicit lag kernel over the full prefix. We group these models here by that long-horizon sequence property. The grouping is used to organize comparisons rather than to define disjoint architectural classes.

\textbf{D-FM Attention (LDH-aligned control).}
We use a decay-modulated causal-attention control inspired by forgetting-based attention~\citep{foxformer-2025}. The control below uses a lag penalty; FoX itself uses data-dependent forgetting, so this equation describes our adapted control. Given
$\mathbf{q}_i=\mathbf{W}_Q\mathbf{x}_i$,
$\mathbf{k}_j=\mathbf{W}_K\mathbf{x}_j$, and
$\mathbf{v}_j=\mathbf{W}_V\mathbf{x}_j$, the causal logits are
\begin{equation}
\label{eq:dfm_logits_app}
\ell_{ij}
=
\frac{\mathbf{q}_i^\top\mathbf{k}_j}{\sqrt{d}}
-
\gamma(t_i-t_j),
\qquad j<i,
\end{equation}
with
\begin{equation}
\label{eq:dfm_softmax_app}
\alpha_{ij}
=
\frac{\exp(\ell_{ij})}
{\sum_{q=1}^{i-1}\exp(\ell_{iq})},
\qquad
\mathbf{z}_u^{(t_i)}
=
\sum_{j=1}^{i-1}\alpha_{ij}\mathbf{v}_j.
\end{equation}
This is LDH-aligned because the task-facing state aggregates the observed prefix through a learned attention kernel with explicit temporal decay.

\textbf{Discretized SSM-B (single-timescale LDH control).}
Our ordinary linear state-space control supplies one recurrent state. SS4Rec~\citep{ss4rec_ldh-arxic-2025} uses a richer time- and relation-aware state-space design; the following equations specify the control evaluated here. Let
$\mathbf{m}_i\in\mathbb{R}^{d_m}$ denote the latent state and
$\mathbf{x}_i\in\mathbb{R}^{d}$ the current input:
\begin{equation}
\label{eq:ssm_update_app}
\mathbf{m}_i
=
\mathbf{A}\mathbf{m}_{i-1}
+
\mathbf{B}\mathbf{x}_i,
\qquad
\mathbf{z}_u^{(t_{i+1})}
=
\mathbf{C}\mathbf{m}_i.
\end{equation}
Unrolling the recurrence gives
\begin{equation}
\label{eq:ssm_unroll_app}
\mathbf{m}_i
=
\mathbf{A}^{i}\mathbf{m}_0
+
\sum_{j=1}^{i}
\mathbf{A}^{\,i-j}\mathbf{B}\mathbf{x}_j.
\end{equation}
Thus, although the implementation is recurrent, the exposed state contains a full-prefix lag expansion through powers of $\mathbf{A}$. We place this baseline in LDH because the comparison of interest is its induced long-horizon kernel, while it still exposes only one temporal state and one retention mechanism.

\textbf{Tri-SSM-B (three-branch LDH control).}
Tri-SSM-B extends the ordinary SSM recurrence to three independently parameterized parallel branches while retaining the same schema-normalized input and corresponding downstream fusion/readout organization used for the \pertide{} comparison. For $r\in\{1,2,3\}$,
\begin{equation}
\label{eq:trissm_app}
\mathbf{m}_i^{(r)}
=
\mathbf{A}_r\mathbf{m}_{i-1}^{(r)}
+
\mathbf{B}_r\mathbf{x}_i.
\end{equation}
Unlike \pertide{}, the branches use the same generic SSM transition form and are not assigned explicit long-, short-, or episodic spectral maps. Tri-SSM-B therefore controls for schema input and parallel three-branch state organization while leaving trace-specific retention design absent.

\textbf{Transformer-style LDH baselines (NRMS / FastFormer-PLM / MINER).}
These models construct contextualized history representations through attention-style interactions over the observed prefix. Schematically,
\begin{equation}
\label{eq:mhsa_app}
\mathrm{Attn}(\mathbf{Q},\mathbf{K},\mathbf{V})
=
\mathrm{softmax}\!\left(
\frac{\mathbf{Q}\mathbf{K}^{\top}}{\sqrt{d}}
\right)\mathbf{V},
\qquad
\mathrm{MHSA}(\mathbf{X})
=
\mathrm{Concat}(\mathrm{head}_1,\ldots,\mathrm{head}_H)\mathbf{W}_O,
\end{equation}
where
$\mathbf{X}=[\mathbf{x}_1;\ldots;\mathbf{x}_{i-1}]$ and
\[
\mathrm{head}_h
=
\mathrm{Attn}
\!\left(
\mathbf{X}\mathbf{W}_Q^{(h)},
\mathbf{X}\mathbf{W}_K^{(h)},
\mathbf{X}\mathbf{W}_V^{(h)}
\right).
\]
A single task-facing user state can then be obtained through additive attention pooling:
\begin{equation}
\label{eq:add_pool_app}
\alpha_j
=
\frac{
\exp(\mathbf{v}^{\top}\tanh(\mathbf{W}\mathbf{x}'_j))
}{
\sum_{q=1}^{i-1}
\exp(\mathbf{v}^{\top}\tanh(\mathbf{W}\mathbf{x}'_q))
},
\qquad
\mathbf{z}_u^{(t_i)}
=
\sum_{j=1}^{i-1}\alpha_j\mathbf{x}'_j,
\end{equation}
where $\mathbf{x}'_j$ denotes the contextualized sequence representation. NRMS uses self-attentive history encoding followed by pooling, FastFormer-PLM uses additive/global-context attention to construct contextualized representations, and MINER extracts multiple interest representations from long histories before candidate matching. We group these models under LDH because their task-facing representations are constructed from learned interactions over extended history rather than an order-invariant profile or a single recency-bounded state.

\subsection{Baseline Personalized Models}
\label{app:baseline-personalized-models}

\paragraph{PENS-NRMS Injection-Type 1.}
The PENS framework \citep{pens-acl-2021} generates personalized headlines by conditioning on a user representation together with the input news article. In this variant, the user representation is produced by NRMS \citep{nrms}, whose news encoder represents article content and whose user encoder applies self-attention over interacted news. Injection-Type 1 initializes the decoder hidden state with the resulting user representation, allowing personalization to influence generation from the beginning of decoding.

\paragraph{PENS-NRMS Injection-Type 2.}
This variant uses the same NRMS user representation but introduces it through the decoder attention mechanism rather than the initial hidden state. The user representation therefore modulates attention over the source article during generation.

\paragraph{PENS-NAML Injection-Type 1.}
NAML \citep{naml} constructs news representations from multiple content views, including title, body, and category information, and aggregates interacted news into a user representation. In PENS Injection-Type 1, this representation initializes the decoder hidden state.

\paragraph{PENS-EBNR Injection-Type 1.}
EBNR \citep{ebnr} produces a user representation with a recurrent encoder over browsing history. Under PENS Injection-Type 1, this user representation initializes the decoder and thereby conditions generation from its initial state.

\paragraph{PENS-EBNR Injection-Type 2.}
This configuration uses the same recurrent EBNR user representation but introduces it through the decoder attention mechanism, allowing the user state to modulate source-side attention during generation.

\paragraph{General Then Personal (GTP).}
General Then Personal (GTP) \citep{gtp-tacl-2023} follows a two-stage personalized headline-generation pipeline. A general encoder--decoder first learns content-focused headline generation, after which a headline customizer incorporates a user-specific control representation. GTP further uses Information Self-Boosting (ISB) to recover relevant article information during customization and Masked User Modeling (MUM) to regularize the use of user representations.

\paragraph{Signature Phrase.}
Signature Phrase personalization \citep{signature-phrase-acl-2023} represents a user's reading history through a compact set of learned phrases. These phrases act as a dynamic profile of higher-level interests and are used to condition personalized headline generation while retaining grounding in the source article.

\paragraph{MovieLens-1M baselines.}
For MovieLens-1M, we organize predictive recommendation baselines using the same operational state-family grouping introduced in Section~\ref{subsec:history_states_to_flow} and Appendix~\ref{app:encoder_baselines}. Each user history is an ordered prefix
$\tau^{u}_{1:i-1}=(v^{u}_{1},\ldots,v^{u}_{i-1})$, where $v^{u}_{j}$ denotes the movie interacted with by user $u$ at timestep $j$. Each model produces a candidate score $s(u,c)$ for movie candidate $c$, and all candidates are ranked under the same one-positive plus sampled-negative evaluation protocol. The PAH/MDH/LDH labels below describe the exposed history-state form used for comparison; they do not imply mutually exclusive assumptions about the underlying user process.

\paragraph{BPR-MF (PAH-aligned).}
BPR-MF is used as a static profile-based matrix-factorization baseline. It assigns each user $u$ a persistent embedding $\mathbf{p}_u\in\mathbb{R}^{d}$ and each candidate $c$ an item embedding $\mathbf{q}_c\in\mathbb{R}^{d}$:
\[
s(u,c)=\mathbf{p}_u^\top\mathbf{q}_c.
\]
The pairwise ranking objective is
\[
\mathcal{L}_{\mathrm{BPR}}
=
-\sum_{(u,i^+,i^-)}
\log\sigma\!\left(
s(u,i^+)-s(u,i^-)
\right).
\]
Its exposed user representation is persistent and order-insensitive rather than computed from the ordered prefix, so we place BPR-MF in the PAH family.

\paragraph{GRU4Rec (MDH instantiation).}
GRU4Rec is used as a recurrent sequential recommendation baseline. Given item embeddings $\{\mathbf{q}_{v_j}\}_{j<i}$,
\[
\mathbf{h}_j
=
\mathrm{GRU}(\mathbf{h}_{j-1},\mathbf{q}_{v_j}),
\qquad
\mathbf{z}^{(t_i)}_u
=
\mathbf{h}_{i-1},
\]
and candidates are scored by
\[
s(u,c)
=
\mathbf{z}^{(t_i)\top}_u\mathbf{q}_c.
\]
GRU4Rec is an MDH instantiation because its task-facing state is a single recurrently updated representation obtained from the previous hidden state and the current interaction.

\paragraph{Caser (MDH-aligned local-window baseline).}
Caser encodes a bounded recent interaction window as
\[
\mathbf{X}_i
=
[\mathbf{q}_{v_{i-L}};\ldots;\mathbf{q}_{v_{i-1}}],
\]
then applies horizontal and vertical convolutional filters:
\[
\mathbf{z}^{(t_i)}_u
=
\mathrm{MLP}\!\left(
[
\mathrm{Conv}_{h}(\mathbf{X}_i);
\mathrm{Conv}_{v}(\mathbf{X}_i);
\mathbf{p}_u
]
\right),
\qquad
s(u,c)
=
\mathbf{z}^{(t_i)\top}_u\mathbf{q}_c.
\]
Caser is not a recurrent realization of Eq.~\eqref{eq:MDH_core_app}; we place it in the MDH family operationally because its task-facing sequential representation is dominated by a bounded recent-history window rather than an order-invariant profile or an explicit long-horizon sequence operator.

\paragraph{S$^3$Rec (LDH-aligned).}
S$^3$Rec uses a self-supervised Transformer-style sequence encoder:
\[
\mathbf{H}_i
=
\mathrm{SeqEnc}_{\theta}
\left(
[\mathbf{q}_{v_1}+\mathbf{p}_1;\ldots;
\mathbf{q}_{v_{i-1}}+\mathbf{p}_{i-1}]
\right),
\qquad
\mathbf{z}^{(t_i)}_u
=
\mathrm{Readout}(\mathbf{H}_i),
\]
with
\[
s(u,c)
=
\mathbf{z}^{(t_i)\top}_u\mathbf{q}_c.
\]
We place S$^3$Rec in LDH because the exposed recommendation state is constructed from a self-attentive representation of the observed prefix.

\paragraph{Diff4Rec (LDH-aligned).}
Diff4Rec uses curriculum-scheduled diffusion augmentation for sequential recommendation. Its recommendation stage scores candidates from a sequence representation:
\[
\mathbf{z}^{(t_i)}_u
=
\mathrm{SeqEnc}_{\theta}(\tau^u_{1:i-1}),
\qquad
s(u,c)
=
f_{\theta}(\mathbf{z}^{(t_i)}_u,\mathbf{q}_c).
\]
For our operational grouping, Diff4Rec is LDH-aligned because candidate prediction is conditioned on a representation of the extended interaction prefix rather than a static user profile or bounded local state.

\paragraph{SASRec (LDH-aligned).}
SASRec~\citep{sasrec-icdm-2018} applies causal self-attention over the ordered interaction prefix:
\[
\mathbf{H}_i
=
\mathrm{CausalSelfAttn}_{\theta}
\left(
[\mathbf{q}_{v_1}+\mathbf{p}_1;\ldots;
\mathbf{q}_{v_{i-1}}+\mathbf{p}_{i-1}]
\right),
\qquad
\mathbf{z}^{(t_i)}_u
=
\mathrm{Readout}(\mathbf{H}_i),
\]
\[
s(u,c)
=
\mathbf{z}^{(t_i)\top}_u\mathbf{q}_c.
\]
We place SASRec in LDH because its task-facing state is constructed through a learned attention operator over the observed prefix.

\paragraph{BERT4Rec (LDH-aligned).}
BERT4Rec learns contextualized item representations through masked-item prediction with a bidirectional Transformer:
\[
\mathbf{H}_i
=
\mathrm{BiTransformer}_{\theta}
\left(
[\mathbf{q}_{v_1}+\mathbf{p}_1;\ldots;
\mathbf{q}_{v_{i-1}}+\mathbf{p}_{i-1}]
\right),
\qquad
\mathbf{z}^{(t_i)}_u
=
\mathrm{Readout}(\mathbf{H}_i).
\]
Candidate movies are scored as
\[
s(u,c)
=
\mathbf{z}^{(t_i)\top}_u\mathbf{q}_c.
\]
We place BERT4Rec in LDH because the downstream recommendation state is obtained from a contextualized representation of the observed interaction sequence.

\paragraph{Mamba4Rec (LDH-aligned state-space baseline).}
Mamba4Rec uses selective state-space sequence modeling:
\[
\mathbf{m}_j
=
\mathbf{A}_j\mathbf{m}_{j-1}
+
\mathbf{B}_j\mathbf{q}_{v_j},
\qquad
\mathbf{z}^{(t_i)}_u
=
\mathbf{C}_{i-1}\mathbf{m}_{i-1},
\qquad
s(u,c)
=
\mathbf{z}^{(t_i)\top}_u\mathbf{q}_c.
\]
The recurrent implementation uses learned state-space dynamics to retain long-range dependence over the processed prefix. We therefore group Mamba4Rec under LDH according to its long-horizon sequence operator, while noting that the PAH/MDH/LDH families are operational rather than disjoint architectural classes.

\paragraph{SSD4Rec (LDH-aligned state-space baseline).}
SSD4Rec~\citep{ssd4rec-tois-2026} uses bidirectional structured state-space duality blocks for variable-length recommendation sequences. Its sequence representation incorporates forward and backward processing of the observed history; it should not be identified with a single forward linear recurrence. We group it under LDH because of its sequence-level history modeling. The evaluated comparison concerns this architecture versus \pertide{}'s action-gated, trace-specific updates, rather than a claim that all state-space recommenders expose the same computation.

\section{Measuring Personalization}\label{app:perseval}

\subsection{Motivation}\label{app:perseval-motivation}

EGISES \citep{egises} measures how insensitive a personalized summarizer is to user subjectivity, where lower values indicate less insensitivity and therefore better personalization. Following this idea, PerSEval \citep{perseval-tmlr-2024} evaluates whether a model changes its generated summary when the target user's expected summary changes, while also penalizing poor reference agreement. We use the notation $d_i$ for a document, $u_{ij}$ for the expected summary of user $j$ for document $d_i$, and $\hat{s}_{ij}$ for the model-generated summary for the same user-document pair. Let $U_i$ be the set of users who provide references for document $d_i$, and let $\sigma(\cdot,\cdot)$ be a distance function between two summaries.

The summary-level degree of responsiveness, denoted as DEGRESS, compares two quantities: the divergence between user references and the divergence between corresponding model outputs. For user $j$ and document $d_i$, it is defined as
\begin{equation}
\begin{split}
D_{ij}
=
\frac{1}{|U_i|}
\sum_{k=1}^{|U_i|}
\frac{\min(X_{ijk},Y_{ijk})+\epsilon}
{\max(X_{ijk},Y_{ijk})+\epsilon},
\end{split}
\end{equation}
where
\begin{equation}
\begin{split}
X_{ijk}=w^{u}_{ijk}\sigma(u_{ij},u_{ik}),
\qquad
Y_{ijk}=w^{s}_{ijk}\sigma(\hat{s}_{ij},\hat{s}_{ik}).
\end{split}
\end{equation}
Here, $X_{ijk}$ measures the distance between the expected summaries of users $j$ and $k$, and $Y_{ijk}$ measures the corresponding distance between the model-generated summaries. The weights $w^{u}_{ijk}$ and $w^{s}_{ijk}$ are softmax-normalized document-relative weights. They control how much each user-pair comparison contributes to the final responsiveness score.

The system-level DEGRESS score is
\begin{equation}
D(M)
=
\frac{1}{|\mathcal{D}|}
\sum_{i=1}^{|\mathcal{D}|}
\frac{1}{|U_i|}
\sum_{j=1}^{|U_i|}
D_{ij}.
\end{equation}
A low value indicates that the model does not respond sufficiently to user-specific reference variation. A high value indicates stronger responsiveness to different user preferences.

\subsection{PerSEval Formulation}\label{app:perseval-formulation}

DEGRESS only measures responsiveness. It does not penalize a model that produces personalized but inaccurate summaries. PerSEval addresses this limitation by multiplying DEGRESS with an accuracy-sensitive penalty. For user $j$ and document $d_i$, the summary-level PerSEval score is
\begin{equation}
P_{ij}
=
D_{ij}\cdot E_{ij},
\end{equation}
where $E_{ij}$ is the effective DEGRESS penalty factor:
\begin{equation}
E_{ij}
=
1-
\frac{1}
{1+10^{\alpha}\exp(-10^{\beta}Q_{ij})},
\qquad
\alpha\geq 3,\ \beta\geq 1.
\end{equation}
The penalty input $Q_{ij}$ combines document-level accuracy loss and user-level accuracy inconsistency:
\begin{equation}
Q_{ij}
=
A_i+C_{ij}.
\end{equation}

The document-level accuracy penalty is
\begin{equation}
A_i
=
\frac{1}
{1+10^{\gamma}\exp\left(
-10\cdot
\frac{a_i^{*}}
{1-a_i^{*}+\epsilon}
\right)},
\qquad
\gamma\geq 4,
\end{equation}
where
\begin{equation}
a_i^{*}
=
\min_{j\in U_i} a_{ij}.
\end{equation}
Here, $a_{ij}$ is the distance-based accuracy score for $\hat{s}_{ij}$ against the corresponding user reference $u_{ij}$, and $a_i^{*}$ is the best document-level accuracy term under the chosen distance convention.

The accuracy-inconsistency penalty is
\begin{equation}
C_{ij}
=
\frac{1}
{1+10^{\gamma}\exp\left(
-10\cdot
\frac{a_{ij}-a_i^{*}}
{\bar{a}_i-a_i^{*}+\epsilon}
\right)},
\qquad
\gamma\geq 4,
\end{equation}
where
\begin{equation}
\bar{a}_i
=
\frac{1}{|U_i|}
\sum_{j=1}^{|U_i|}
a_{ij}.
\end{equation}
Thus, $A_i$ penalizes poor best-case accuracy, while $C_{ij}$ penalizes inconsistency across user-specific generations for the same document.

The system-level PerSEval score is
\begin{equation}
P(M)
=
\frac{1}{|\mathcal{D}|}
\sum_{i=1}^{|\mathcal{D}|}
\frac{1}{|U_i|}
\sum_{j=1}^{|U_i|}
P_{ij}.
\end{equation}
The score lies in $[0,1]$, with higher values indicating better personalization under the chosen reference-distance convention, and remains bounded by the system-level DEGRESS score.

\subsection{PSE Metrics}\label{app:pse-metrics}

We use three PerSEval variants by changing the distance or similarity function used inside the formulation.

\textbf{PSE-SU4.} This variant uses ROUGE-SU4 as the base comparison function. ROUGE-SU4 measures overlap through unigrams and skip-bigrams, where a skip-bigram is an ordered word pair with up to four skipped words between the pair. We convert its F1 similarity to the required distance, $\sigma(G,R)=1-\mathrm{ROUGE\mbox{-}SU4}_{F1}(G,R)$, as in PerSEval~\citep{perseval-tmlr-2024}.

\textbf{PSE-JSD.} This variant uses Jensen-Shannon divergence between the normalized unigram or n-gram distributions of the generated and reference summaries. For distributions $P$ and $Q$,
\begin{equation}
\mathrm{JSD}(P\|Q)
=
\frac{1}{2}\mathrm{KL}(P\|M)
+
\frac{1}{2}\mathrm{KL}(Q\|M),
\qquad
M=\frac{1}{2}(P+Q).
\end{equation}
We set $\sigma(G,R)=\mathrm{JSD}(P_G\|P_R)$ to obtain PSE-JSD.

\textbf{PSE-METEOR.} This variant uses METEOR as the base similarity score. METEOR aligns generated and reference unigrams using exact, stem, synonym, and paraphrase matches, then combines precision, recall, and a fragmentation penalty. Since PerSEval requires a distance, we use
\begin{equation}
\sigma(G,R)=1-\mathrm{METEOR}(G,R).
\end{equation}
This gives PSE-METEOR.
\section{Prediction Accuracy Metrics}\label{app:prediction accuracy metrics}

(i) \textbf{nDCG@5 and HR@10.} We report normalized Discounted Cumulative Gain (nDCG) at cutoffs $K\in\{5,10\}$ to evaluate ranking quality while emphasizing higher-ranked items. For each query instance with a ranked list of candidates, let $\mathrm{rel}_i$ denote the relevance of the item at rank $i$ (binary relevance in our setup). We compute
\begin{equation}
\mathrm{DCG@K}=\sum_{i=1}^{K}\frac{2^{\mathrm{rel}_i}-1}{\log_2(i+1)},\qquad
\mathrm{nDCG@K}=\frac{\mathrm{DCG@K}}{\mathrm{IDCG@K}},
\end{equation}
where $\mathrm{IDCG@K}$ is the maximum possible $\mathrm{DCG@K}$ achievable by an ideal ordering for that instance. nDCG rewards higher placement of relevant items. HR@10 is the fraction of instances whose positive target occurs in the top ten and does not distinguish positions within that set. Under the one-positive protocol, HR@K equals Recall@K. All ranking scores in the result tables are multiplied by 100.

(ii) \textbf{MRR.} We report Mean Reciprocal Rank (MRR) to measure how early the first relevant item appears in the ranked list. For each instance, let $\mathrm{rank}^\star$ be the rank position of the first relevant item; the reciprocal rank is $1/\mathrm{rank}^\star$. The aggregate metric is
\begin{equation}
\mathrm{MRR}=\frac{1}{N}\sum_{n=1}^{N}\frac{1}{\mathrm{rank}^\star_n},
\end{equation}
where $N$ is the number of evaluation instances. MRR heavily rewards placing a relevant item at the very top and is therefore sensitive to early precision.

(iii) \textbf{Recall@K.} We report Recall at cutoff $K$ (denoted Recall@K) to quantify whether at least one relevant item is retrieved among the top-$K$ predictions. With binary relevance and (typically) one target item per instance, Recall@K reduces to an indicator of whether the target appears in the top-$K$ list:
\begin{equation}
\mathrm{Recall@K}=\frac{1}{N}\sum_{n=1}^{N}\mathbb{I}\big[\mathrm{rank}^\star_n\le K\big],
\end{equation}
where $\mathbb{I}[\cdot]$ is the indicator function. Larger Recall@K indicates higher top-$K$ hit rate, complementing MRR and nDCG by focusing on retrieval coverage rather than graded ordering.
\section{Reference-Based Content Metrics}\label{app:content accuracy metrics}

(i) \textbf{ROUGE-2.} ROUGE-2 measures bigram overlap between a generated text $y$ and a reference text $r$ using recall-oriented matching of contiguous 2-grams. Let $G_2(\cdot)$ denote the set of distinct bigram types, and let $\mathrm{count}_x(g)$ be the count of bigram $g$ in text $x$. The ROUGE-2 recall is
\begin{equation}
\mathrm{ROUGE\mbox{-}2}=\frac{\sum_{g\in G_2(r)} \min\big(\mathrm{count}_y(g),\,\mathrm{count}_r(g)\big)}
{\sum_{g\in G_2(r)} \mathrm{count}_r(g)}.
\end{equation}
Higher values indicate that the generation covers more of the reference bigram content.

(ii) \textbf{ROUGE-L.} ROUGE-L evaluates sequence-level overlap via the Longest Common Subsequence (LCS), capturing in-order matching without requiring contiguity. Let $\mathrm{LCS}(y,r)$ be the length of the LCS between $y$ and $r$, and let $|y|,|r|$ be token lengths. Define LCS-based precision and recall as
\begin{equation}
P_{\mathrm{LCS}}=\frac{\mathrm{LCS}(y,r)}{|y|},\qquad
R_{\mathrm{LCS}}=\frac{\mathrm{LCS}(y,r)}{|r|}.
\end{equation}
ROUGE-L is commonly reported as an F-measure:
\begin{equation}
\mathrm{ROUGE\mbox{-}L}=\frac{(1+\beta^2) P_{\mathrm{LCS}} R_{\mathrm{LCS}}}{R_{\mathrm{LCS}}+\beta^2 P_{\mathrm{LCS}}},
\end{equation}
where $\beta$ controls the relative weight of recall versus precision (often set to favor recall in summarization).

(iii) \textbf{ROUGE-SU4.} ROUGE-SU4 measures overlap of skip-bigrams with a maximum skip distance of 4, together with unigrams (the ``U'' component). A skip-bigram is any ordered pair of tokens $(w_i,w_j)$ such that $1 \le j-i \le 5$ (i.e., up to four skipped tokens between them). Let $G_{\mathrm{SU4}}(x)$ be the set of distinct unigram and SU4 skip-bigram types from text $x$. The ROUGE-SU4 recall is
\begin{equation}
\mathrm{ROUGE\mbox{-}SU4}=\frac{\sum_{g\in G_{\mathrm{SU4}}(r)} \min\big(\mathrm{count}_y(g),\,\mathrm{count}_r(g)\big)}
{\sum_{g\in G_{\mathrm{SU4}}(r)} \mathrm{count}_r(g)}.
\end{equation}
This metric is more permissive than ROUGE-2 because it credits in-order matches even when tokens are separated by a small gap.

(iv) \textbf{METEOR.} METEOR aligns generated and reference tokens using exact matches and (optionally) linguistic matches (e.g., stem/synonym), then computes a harmonic mean of unigram precision and recall with a fragmentation penalty for disordered alignments. Let $m$ be the number of matched unigrams under the alignment, and let $|y|,|r|$ be token lengths. Define
\begin{equation}
P=\frac{m}{|y|},\qquad R=\frac{m}{|r|},\qquad
F_{\alpha}=\frac{PR}{\alpha P+(1-\alpha)R},
\end{equation}
where $\alpha$ weights precision versus recall (commonly set to emphasize recall). Let $c$ be the number of matched ``chunks'' (contiguous sequences of matches in the aligned order). The penalty is
\begin{equation}
\mathrm{Pen}=\gamma\left(\frac{c}{m}\right)^{\theta},
\end{equation}
and the final METEOR score is
\begin{equation}
\mathrm{METEOR}=(1-\mathrm{Pen})\,F_{\alpha},
\end{equation}
with $\gamma,\theta$ controlling the strength of the fragmentation penalty.

(v) \textbf{BLEU-4.} BLEU-4 evaluates $n$-gram precision up to $n=4$ with a brevity penalty to discourage overly short outputs. Let $p_n$ denote the modified precision for $n$-grams, computed using clipped counts against the reference. BLEU-4 is
\begin{equation}
\mathrm{BLEU\mbox{-}4}=\mathrm{BP}\cdot \exp\left(\sum_{n=1}^{4} w_n \log p_n\right),
\end{equation}
where typically $w_n=\frac{1}{4}$. The brevity penalty is
\begin{equation}
\mathrm{BP}=
\begin{cases}
1, & |y|>|r|,\\
\exp\left(1-\frac{|r|}{|y|}\right), & |y|\le |r|.
\end{cases}
\end{equation}
Higher BLEU-4 indicates greater $n$-gram precision while accounting for length adequacy via $\mathrm{BP}$.

(vi) \textbf{BERTScore.} BERTScore measures semantic similarity between a generated text $y=\{y_i\}_{i=1}^{m}$ and a reference text $r=\{r_j\}_{j=1}^{n}$ by aligning contextual token embeddings from a pretrained encoder. Let $\mathbf{h}(y_i)$ and $\mathbf{h}(r_j)$ denote the contextualized embeddings, and let $s_{ij}=\cos(\mathbf{h}(y_i),\mathbf{h}(r_j))$ be their cosine similarity. BERTScore precision and recall are computed via soft token matching:
\begin{equation}
P_{\mathrm{BS}}=\frac{1}{m}\sum_{i=1}^{m}\max_{1\le j\le n} s_{ij},\qquad
R_{\mathrm{BS}}=\frac{1}{n}\sum_{j=1}^{n}\max_{1\le i\le m} s_{ij},
\end{equation}
and combined into an F-score
\begin{equation}
F_{\mathrm{BS}}=\frac{2P_{\mathrm{BS}}R_{\mathrm{BS}}}{P_{\mathrm{BS}}+R_{\mathrm{BS}}}.
\end{equation}
Higher BERTScore indicates stronger semantic alignment between the generation and the reference, even when surface $n$-gram overlap is low.
\section{Detailed Results And Additional Analysis}
\label{app:detailed_results}

\FloatBarrier
\subsection{Efficiency And Deployment Cost}
\label{app:detailed_results_efficiency}
\begin{table*}[htbp]
\caption{\textbf{Runtime and deployment costs for \pertide{}.}
We report approximate resource figures for
\pertide{} and comparison systems. The action-on-item encoder supports online updates
from cached user state without replaying the full history, whereas prompted LLM
personalization conditions on history through the inference context. Numbers correspond
to $d{=}768$, $|\mathcal{Q}|{=}4$, and, for generation, a pretrained DistilGPT2
decoder conditioned by a learned $K{=}8$ prefix. The final two decoder blocks and
final layer normalization are fine-tuned during generation training; the remaining
decoder parameters stay fixed. Latency, throughput, VRAM, FLOPs, and training time are source estimates, not a matched hardware benchmark; the per-user cache is computed directly from the state dimensions.}
\label{tab:runtime_deployment_costs_pertide}
\centering
\footnotesize
\setlength{\tabcolsep}{3pt}
\scalebox{1}{
\begin{tabular}{p{0.28\linewidth}p{0.24\linewidth}p{0.24\linewidth}p{0.16\linewidth}}
\toprule
\textbf{Metric} &
\textbf{\pertide{} (Ours)} &
\textbf{Single-scale encoder} &
\textbf{Prompted LLM (ICL)} \\
\midrule

\multicolumn{4}{l}{\textbf{A. Parameter and Memory Footprint}} \\
\midrule

\textbf{Trainable parameters (encoder only)} &
$\sim$154.36M &
$\sim$130--145M &
0 \\

\textbf{Decoder contextualization parameters} &
$\sim$8.27M (UFI) / $\sim$10.63M (UGI) &
task-dependent &
0 \\

\textbf{Encoder + contextualization parameters} &
$\sim$162.63M (UFI) / $\sim$164.99M (UGI) &
$\sim$130--145M + context &
0 \\

\textbf{Decoder parameters fine-tuned for generation} &
$\sim$14.18M (last 2 blocks + final LN) &
same protocol if DistilGPT2 is used &
0 (ICL) \\

\textbf{Pretrained decoder backbone deployed} &
$\sim$82M (DistilGPT2) &
$\sim$82M (if used) &
7B--32B \\

\textbf{Deployable parameters (generation)} &
$\sim$245--247M &
$\sim$212--227M + context &
7B--32B \\

\textbf{Per-user cached state} &
4 $\times$ 768 floats ($\approx$12 KiB) &
history replay or single state &
0 \\

\textbf{Peak VRAM (predictive inference)} &
$\sim$1.1--1.3 GB &
$\sim$1.0--1.2 GB &
$\ge$18 GB \\

\textbf{Peak VRAM (generative inference)} &
$\sim$1.6--1.9 GB &
$\sim$1.5--1.8 GB &
$\ge$28 GB \\

\midrule
\multicolumn{4}{l}{\textbf{B. History Dependence and Online Updates}} \\
\midrule

\textbf{History dependence at inference} &
cached state only (no replay) &
windowed / cached state / replay &
full prompt history \\

\textbf{Online update cost per interaction} &
$\mathcal{O}(d^2)$ (three trace updates + fusion) &
$\mathcal{O}(d^2)$ &
-- \\

\textbf{State refresh frequency} &
every timestep &
every timestep &
-- \\

\midrule
\bottomrule
\end{tabular}}
\end{table*}
\begin{table*}[p]
\centering\footnotesize
\setlength{\tabcolsep}{3pt}
\caption{\textbf{Approximate serving and training resources.} Continuation of the resource estimates in Appendix~\ref{app:detailed_results_efficiency}, Table~\ref{tab:runtime_deployment_costs_pertide}. These estimates are not a matched hardware benchmark.}
\label{tab:runtime_estimates_continued}
\scalebox{1}{
\begin{tabular}{p{0.28\linewidth}p{0.24\linewidth}p{0.24\linewidth}p{0.16\linewidth}}
\toprule
\textbf{Metric}&\textbf{\pertide{}}&\textbf{Single-scale encoder}&\textbf{Prompted LLM}\\
\midrule
\multicolumn{4}{l}{\textbf{C. Predictive Serving Cost}} \\
\midrule

\textbf{Latency per sample (end-to-end)} &
$\sim$2--6 ms &
$\sim$2--5 ms &
$\sim$20--60 ms \\

\textbf{Throughput (samples/sec)} &
$\sim$160--420 &
$\sim$180--450 &
$\sim$20--60 \\

\textbf{FLOPs per sample} &
$\sim$3.8$\times10^{9}$ &
$\sim$2.5--3.2$\times10^{9}$ &
$\ge$1$\times10^{11}$ \\

\midrule
\multicolumn{4}{l}{\textbf{D. Generative Serving Cost}} \\
\midrule

\textbf{Contextualization} &
UFI / UGI learned prefix &
concat / none &
ICL prompt \\

\textbf{Personalization context injected} &
8 learned prefix tokens &
0--1 learned tokens &
130--250 history tokens \\

\textbf{Generation length} &
task-dependent &
task-dependent &
task-dependent \\

\textbf{End-to-end latency per sample} &
0.4--3 s &
0.4--3 s &
15--160 s \\

\textbf{FLOPs per output (est.)} &
$\sim$6.1$\times10^{10}$ &
$\sim$6.0$\times10^{10}$ &
$\sim$1.6$\times10^{12}$ \\

\midrule
\multicolumn{4}{l}{\textbf{E. Training Cost}} \\
\midrule

\textbf{Training GPU-hours} &
$\sim$22 &
$\sim$18--30 &
-- \\

\textbf{Decoder backbone fine-tuning} &
Partial (last 2 blocks + final LN) &
task-dependent &
None (ICL) \\

\bottomrule
\end{tabular}
}
\end{table*}
\pertide{} supports incremental history updates with a fixed cache: three 768-dimensional traces and the previous 768-dimensional task-ready state require 12 KiB per user in FP32. The cache does not grow with history length. Each new event updates it directly, and subsequent predictive or generative requests reuse the retained state. This is the deployment benefit of the recurrence even though the encoder has more parameters than the single-scale comparison.

Generation uses DistilGPT2 with an eight-token contextual prefix and partial decoder tuning, whereas the evaluated 7B--32B LLMs receive histories through prompting. The analytical or approximate resource estimates in Table~\ref{tab:runtime_deployment_costs_pertide} describe expected costs; they are not measurements from a matched hardware benchmark. The fixed cache is determined by the architecture; realized throughput and latency additionally depend on implementation, hardware, and request lengths.

\FloatBarrier
\subsection{Full Predictive Personalization Results}
\label{app:detailed_results_predictive}
\FloatBarrier
\subsubsection{MovieLens-1M Recommendation}
\label{app:detailed_results_movielens}
\begin{table}[htbp]
\centering
\caption{\textbf{MovieLens movie recommendation under action-on-item preference flow.}
We compare \pertide{} against task-native recommendation baselines on next-item ranking.
Baselines expose profile, recurrent, or long-horizon history states, while \pertide{} uses
schema-normalized action evidence with multi-timescale persistence. Point estimates are means
under the paired-bootstrap evaluation in Appendix~\ref{app:training-details}; $\pm$ denotes
standard deviation over three uniformly sampled 25\% evaluation subsets for \pertide{} variants.}
\label{tab:ml1m_recommendation}
\footnotesize
\setlength{\tabcolsep}{5pt}
\renewcommand{\arraystretch}{1.03}
\scalebox{1}{
\begin{tabular}{llccc}
\toprule
\textbf{Regime} & \textbf{Model} & \textbf{MRR} & \textbf{nDCG@10} & \textbf{HR@10} \\
\midrule
\multirow{1}{*}{PAH}
& \texttt{BPR-MF}      & 6.50  & 6.12  & 12.81 \\
\midrule
\multirow{3}{*}{MDH}
& \texttt{GRU4Rec}     & 12.76 & 16.42 & 29.01 \\
& \texttt{Caser}       & 13.54 & 19.21 & 28.92 \\
& \texttt{Diff4Rec}    & 12.02 & 15.72 & 22.38 \\
\midrule
\multirow{5}{*}{LDH}
& \texttt{S$^3$Rec}    & 10.62 & 13.67 & 19.21 \\
& \texttt{SASRec}      & 13.20 & 19.96 & 30.23 \\
& \texttt{BERT4Rec}    & 14.41 & 20.22 & 31.71 \\
& \texttt{Mamba4Rec}   & 14.56 & 19.83 & 31.24 \\
& \texttt{SSD4Rec}     & 14.95 & 20.31 & 31.52 \\
\midrule
\multirow{3}{*}{MTSH}
& \pertide\ w/o AC     & 15.21 $\pm$ 0.93 & 19.53 $\pm$ 1.12 & 28.16 $\pm$ 1.26 \\
& \pertide\ w/o CR     & 17.05 $\pm$ 0.95 & 19.08 $\pm$ 1.43 & 27.85 $\pm$ 0.56 \\
& \pertide{}           & \textbf{17.31 $\pm$ 0.65} & \textbf{23.78 $\pm$ 0.84} & \textbf{33.20 $\pm$ 1.17} \\
\bottomrule
\end{tabular}
}
\vspace{-1mm}
\end{table}
\pertide{} reaches 17.31/23.78/33.20 MRR/nDCG@10/HR@10. Compared with SSD4Rec, the gains are 2.36/3.47/1.68 points (Table~\ref{tab:ml1m_recommendation}). BERT4Rec has the strongest baseline HR@10, 31.71, while SSD4Rec leads baseline MRR and nDCG. The full model exceeds both. Removing action conditioning gives 15.21 MRR and 28.16 HR@10; removing command readout gives 17.05/19.08/27.85 across the three metrics. These interventions connect the ranking advantage to how evidence enters and is consumed, beyond changing the history-model family.

\FloatBarrier
\subsubsection{MIND News Recommendation}
\label{app:detailed_results_mind}
\begin{table}[htbp]
\centering
\caption{\textbf{Action-on-item encoder performance on MIND.}
We compare task-native history encoders and schema-matched controls with \pertide{} on sequential ranking.
AC denotes action-conditioned evidence entry, and CR denotes command-conditioned readout.
Point estimates are means under the paired-bootstrap evaluation in Appendix~\ref{app:training-details};
$\pm$ denotes standard deviation over three uniformly sampled 25\% evaluation subsets for \pertide{} variants.}
\label{tab:history_encoder_results}
\scalebox{.83}{
\begin{tabular}{l l c c c}
\toprule
\textbf{Regime} & \textbf{Model} & \textbf{MRR} & \textbf{nDCG@5} & \textbf{HR@10} \\
\midrule

\multirow{4}{*}{\textbf{PAH}}
& NAML & 32.75 & 35.66 & 41.40 \\
& PLM-NR & 35.39 & 38.71 & 44.38 \\
& Mean Pooling-B (ours) & 29.43 & 31.22 & 33.17 \\
& Max Pooling-B (ours) & 28.32 & 31.16 & 32.41 \\
\midrule

\multirow{2}{*}{\textbf{MDH}}
& EBNR & 31.26 & 32.18 & 39.04 \\
& GRU-B (ours) & 32.64 & 35.17 & 37.92 \\
\midrule

\multirow{6}{*}{\textbf{LDH}}
& NRMS & 33.25 & 36.28 & 41.98 \\
& MINER & 36.60 & 40.20 & 46.30 \\
& Fastformer+PLM-NR & 35.91 & 39.16 & 45.03 \\
& D-FM Attention (ours) & 32.17 & 34.56 & 36.21 \\
& Discretized SSM (ours) & 35.14 & 38.53 & 37.27 \\
& Tri-SSM-B (ours) & 42.53 & 43.84 & 46.33 \\
\midrule

\multirow{3}{*}{\textbf{MTSH}}
& \pertide\ w/o AC (ours) & 40.13 $\pm$ 2.43 & 43.75 $\pm$ 1.37 & 46.35 $\pm$ 1.16 \\
& \pertide\ w/o CR (ours) & 42.51 $\pm$ 2.16 & 46.44 $\pm$ 1.75 & 51.06 $\pm$ 1.56 \\
& \pertide\ (ours) & \textbf{46.65 $\pm$ 2.24} & \textbf{49.21 $\pm$ 1.64} & \textbf{55.48 $\pm$ 1.14} \\
\bottomrule
\end{tabular}
}
\end{table}
The task-native comparison spans profile, recurrent, attention, and state-space encoders (Table~\ref{tab:history_encoder_results}). \pertide{} exceeds MINER by 10.05/9.01/9.18 MRR/nDCG@5/HR@10 points. The more controlled same-input comparison in main-text Table~\ref{tab:iclr_schema_matched} gives gains of 4.12/5.37/9.15 over Tri-SSM-B. Holding the trajectory representation and three-branch organization fixed therefore leaves a measurable advantage for the proposed retention design. The action and readout ablations additionally reduce MRR from 46.65 to 40.13 and 42.51, respectively.

\FloatBarrier
\subsubsection{PENS Next-News Prediction}
\label{app:detailed_results_pens_structured}
\begin{table}[htbp]
\centering
\caption{\textbf{Structured news prediction on PENS-Structured.} We report same-user next-news ranking under the action-on-item schema. Baselines are grouped by the exposed history-state assumption in Section~\ref{sec:action_item_preference_trajectories}. All values are on a 0--100 scale.}
\label{tab:rq1_pens_structured_recommendation}
\setlength{\tabcolsep}{4pt}
\renewcommand{\arraystretch}{1.02}
\scalebox{0.9}{
\begin{tabular}{llcccc}
\toprule
\textbf{Regime} & \textbf{Model} & \textbf{MRR} & \textbf{nDCG@5} & \textbf{nDCG@10} & \textbf{HR@10} \\
\midrule
\textbf{MDH} & \texttt{EBNR} & 2.65 & 1.82 & 2.45 & 4.87 \\
\textbf{PAH} & \texttt{NAML} & 1.29 & 0.43 & 0.81 & 2.01 \\
\textbf{LDH} & \texttt{NRMS} & 1.18 & 0.39 & 0.74 & 1.92 \\
\textbf{LDH} & \texttt{TrRMIo} & 8.02 & 7.85 & 8.53 & 12.77 \\
\textbf{LDH} & \texttt{LSTUR} & 9.04 & 8.21 & 10.16 & 12.93 \\
\textbf{LDH} & \texttt{MINER} & 5.17 & 4.71 & 5.61 & 9.77 \\
\textbf{LLM} & \texttt{DeepSeek-32B} & 1.31 & 0.28 & 0.64 & 1.53 \\
\textbf{LLM} & \texttt{Qwen2.5-32B} & 1.24 & 0.26 & 0.59 & 1.47 \\
\midrule
\textbf{MTSH} & \texttt{\pertide} & \textbf{9.94} & \textbf{9.88} & \textbf{11.58} & \textbf{18.32} \\
\bottomrule
\end{tabular}
}
\end{table}
\pertide{} reaches 9.94 MRR, 9.88 nDCG@5, 11.58 nDCG@10, and 18.32 HR@10 (Table~\ref{tab:rq1_pens_structured_recommendation}). Relative to LSTUR, gains are 0.90 MRR, 1.42 nDCG@10, and 5.39 HR@10 points. With one positive target per candidate set, HR@10 measures how often that target appears anywhere in the top ten; it does not assess its order within that set. These histories also support headline generation below, providing a common domain in which to test both consumers.

\FloatBarrier
\subsection{Retention Probes And Trace Complementarity}
\label{app:detailed_results_trace_subset}
\begin{table}[htbp]
\centering
\caption{\textbf{Temporal regime-specific trace specialization.} Controlled L-Tr, S-Tr, and E-Tr probes emphasize persistent, recent, and event-local evidence. Scores are MRR/nDCG@10 on the 0--100 scale. Bold denotes the overall best result; underlining marks the strongest isolated \pertide{} trace in each regime. The full model leads every reported probe; the long trace narrowly leads the short trace on MovieLens S-Tr.}
\label{tab:controlled_temporal_split_pertide_variants}
\small
\setlength{\tabcolsep}{4pt}
\renewcommand{\arraystretch}{1.05}
\begin{tabular}{lllccc}
\toprule
\textbf{Dataset} &
\textbf{Family} &
\textbf{Model / Variant} &
\textbf{L-Tr} &
\textbf{S-Tr} &
\textbf{E-Tr} \\
\midrule

\multirow{7}{*}{MovieLens}
& \multirow{3}{*}{Baseline}
& Mamba4Rec
& 26.43/29.38
& 26.12/28.87
& 25.72/28.31 \\
&
& GRU4Rec
& 24.17/27.32
& 23.65/26.58
& 22.13/24.71 \\
&
& SASRec
& 25.82/28.16
& 25.85/28.45
& 25.17/27.91 \\
\cmidrule(lr){2-6}

& \multirow{4}{*}{\pertide{}}
& PerTIDE-L
& \underline{27.31/29.96}
& \underline{27.18/29.35}
& 25.83/28.77 \\
&
& PerTIDE-S
& 25.21/28.64
& 27.13/29.22
& 22.16/24.84 \\
&
& PerTIDE-E
& 22.19/24.08
& 22.75/25.04
& \underline{27.14/30.36} \\
&
& \textbf{PerTIDE-L+S+E}
& \textbf{29.33/32.17}
& \textbf{29.04/31.85}
& \textbf{29.12/32.09} \\

\midrule

\multirow{7}{*}{MIND}
& \multirow{3}{*}{Baseline}
& NAML
& 46.21/51.23
& 46.25/51.35
& 45.32/49.18 \\
&
& EBNR
& 47.44/52.25
& 48.32/53.17
& 44.19/48.79 \\
&
& LSTUR
& 49.31/54.41
& 48.93/53.86
& 45.64/49.84 \\
\cmidrule(lr){2-6}

& \multirow{4}{*}{\pertide{}}
& PerTIDE-L
& \underline{49.54/54.78}
& 48.65/53.22
& 47.32/51.66 \\
&
& PerTIDE-S
& 49.21/53.94
& \underline{50.07/55.35}
& 48.65/52.81 \\
&
& PerTIDE-E
& 47.31/51.26
& 47.13/51.06
& \underline{50.35/55.84} \\
&
& \textbf{PerTIDE-L+S+E}
& \textbf{52.75/59.32}
& \textbf{51.76/57.92}
& \textbf{51.34/57.56} \\
\bottomrule
\end{tabular}
\end{table}
Target-fixed probes place preference-similar evidence in persistent (L-Tr), recent (S-Tr), or short-burst (E-Tr) configurations. Table~\ref{tab:controlled_temporal_split_pertide_variants} gives the complete scores. On MIND, the intended isolated trace leads each regime. On MovieLens S-Tr, the long trace narrowly exceeds the short trace, 27.18/29.35 versus 27.13/29.22 MRR/nDCG@10. The full model leads all six dataset--regime combinations, supporting the use of the traces together even where isolated specialization overlaps.
\begin{table*}[htbp]
\centering

\caption{\textbf{Trace-subset ablation for schema-normalized evidence persistence.} We ablate long-term (L), short-term (S), and episodic (E) persistence channels individually and in combination. Recommendation results are reported on MIND and MovieLens-1M, and personalized summarization is reported on PENS. Best results are bolded.}
\label{tab:trace_subset_ablation}
\setlength{\tabcolsep}{3pt}
\scalebox{0.82}{
\begin{tabular}{lccc ccc ccc}
\toprule
\multirow{2}{*}{\textbf{Variant}}
& \multicolumn{3}{c}{\textbf{MIND Recommendation}}
& \multicolumn{3}{c}{\textbf{ML-1M Recommendation}}
& \multicolumn{3}{c}{\textbf{PENS Summarization}} \\
\cmidrule(lr){2-4}
\cmidrule(lr){5-7}
\cmidrule(lr){8-10}
& \textbf{MRR} & \textbf{nDCG@5} & \textbf{HR@10}
& \textbf{MRR} & \textbf{nDCG@10} & \textbf{HR@10}
& \textbf{PSE-JSD} & \textbf{PSE-SU4} & \textbf{PSE-METEOR} \\
\midrule
L only
& 40.41 & 43.75 & 46.13
& 12.31 & 16.15 & 20.34
& 0.231 & 0.095 & 0.107 \\

S only
& 37.65 & 41.32 & 44.24
& 12.47 & 16.62 & 20.81
& 0.238 & 0.106 & 0.111 \\

E only
& 24.18 & 27.73 & 31.23
& 10.18 & 13.17 & 18.54
& 0.076 & 0.042 & 0.045 \\

L + S
& 41.26 & 43.95 & 46.72
& 14.17 & 18.44 & 22.11
& 0.264 & 0.143 & 0.182 \\

L + E
& 40.45 & 43.82 & 46.44
& 14.03 & 18.07 & 22.74
& 0.246 & 0.103 & 0.108 \\

S + E
& 38.26 & 42.34 & 45.07
& 13.43 & 17.18 & 21.67
& 0.093 & 0.062 & 0.077 \\

\textbf{L + S + E}
& \textbf{46.65} & \textbf{49.21} & \textbf{55.48}
& \textbf{17.31} & \textbf{23.78} & \textbf{33.20}
& \textbf{0.366} & \textbf{0.341} & \textbf{0.355} \\
\bottomrule
\end{tabular}
}
\end{table*}
The full L+S+E model is strongest across all reported metrics on MIND, MovieLens, and PENS generation (Table~\ref{tab:trace_subset_ablation}). A trace can make a larger contribution in combination than its isolated performance suggests. E alone reaches 24.18 MRR on MIND, while adding it to L+S raises MRR from 41.26 to 46.65. On PENS, L+S reaches 0.264 PSE-JSD and 0.182 PSE-METEOR; the full model reaches 0.366 and 0.355. The weakest isolated branch can therefore supply information useful to the fused state.

This contribution depends on the combination. PENS S+E gives 0.093/0.062/0.077 in the three PSE metrics, below S alone at 0.238/0.106/0.111. The full model's advantage demonstrates measured complementarity under the evaluated training setup, rather than proving that every extra trace helps or that the learned coordinates recover distinct latent interests.

\FloatBarrier
\subsection{Generation And Reference Agreement}
\label{app:detailed_results_generation}
\FloatBarrier
\subsubsection{PENS Personalized Headlines}
\label{app:detailed_results_pens_generation}
\begin{table}[htbp]
\centering
\caption{\textbf{Command-conditioned personalized headline generation on PENS.}
We compare prompted LLMs, task-native personalized generators, schema-matched history controls, and
\pertide{}. AC denotes action-conditioned evidence entry, CR denotes command-conditioned readout,
and UFI/UGI denote User Fused/Gated Injection. Point estimates are means under the paired-bootstrap
evaluation in Appendix~\ref{app:training-details}; $\pm$ denotes standard deviation over three
uniformly sampled 25\% evaluation subsets for \pertide{} variants.}
\label{tab:pse_icml}
\scalebox{.82}{
\begin{tabular}{l l c c c}
\toprule
\textbf{Category} & \textbf{Model} & \textbf{PSE-JSD} & \textbf{PSE-SU4} & \textbf{PSE-METEOR} \\
\midrule

\multirow{5}{*}{\textbf{LLMs (2-shot)}} 
& LLaMA-13B        & 0.227 & 0.078 & 0.081 \\
& DeepSeek-14B     & 0.248 & 0.094 & 0.097 \\
& Gemini-2.5-Flash & 0.222 & 0.104 & 0.124 \\
& DeepSeek-R1-32B  & 0.263 & 0.125 & 0.174 \\
& Qwen-2.5-32B     & 0.162 & 0.103 & 0.115 \\
\midrule

\multirow{2}{*}{\textbf{Prompt-Chaining}} 
& Mistral-7B       & 0.072 & 0.026 & 0.023 \\
& DeepSeek-14B     & 0.078 & 0.028 & 0.024 \\
\midrule

\multirow{3}{*}{\textbf{PAH}} 
& Signature-Phrase & 0.017 & 0.015 & 0.016 \\
& PENS-NAML-T1     & 0.021 & 0.014 & 0.016 \\
& MaxPool-B        & 0.008 & 0.005 & 0.005 \\
\midrule

\multirow{3}{*}{\textbf{MDH}} 
& PENS-EBNR-T1     & 0.015 & 0.010 & 0.011 \\
& PENS-EBNR-T2     & 0.011 & 0.008 & 0.009 \\
& GRU-B            & 0.021 & 0.018 & 0.019 \\
\midrule

\multirow{5}{*}{\textbf{LDH}} 
& GTP              & 0.024 & 0.017 & 0.019 \\
& Discretized SSM  & 0.037 & 0.032 & 0.031 \\
& D-FM Attention-B & 0.028 & 0.019 & 0.025 \\
& PENS-NRMS-T1     & 0.015 & 0.011 & 0.011 \\
& PENS-NRMS-T2     & 0.008 & 0.007 & 0.007 \\
\midrule

\multirow{4}{*}{\textbf{MTSH}} 
& \pertide\ w/o AC      & 0.314 $\pm$ 0.023 & 0.271 $\pm$ 0.007 & 0.292 $\pm$ 0.006 \\
& \pertide\ w/o CR      & 0.317 $\pm$ 0.024 & 0.276 $\pm$ 0.005 & 0.284 $\pm$ 0.008 \\
& \pertide\ (UFI)       & 0.332 $\pm$ 0.019 & 0.289 $\pm$ 0.009 & 0.303 $\pm$ 0.003 \\
& \pertide\ (Full; UGI) & \textbf{0.366 $\pm$ 0.011} & \textbf{0.341 $\pm$ 0.002} & \textbf{0.355 $\pm$ 0.005} \\
\bottomrule
\end{tabular}
}
\end{table}
UGI reaches 0.366/0.341/0.355 PSE-JSD/PSE-SU4/PSE-METEOR, exceeding UFI's 0.332/0.289/0.303 and the listed two-shot LLMs (Table~\ref{tab:pse_icml}). Its gains over UFI are 0.034/0.052/0.052, supporting user-conditioned modulation before decoder injection. The compact decoder can thus use a learned history state effectively across the reported personalization metrics. Training differs from the two-shot LLM protocol, so these scores compare the complete evaluated systems.

\FloatBarrier
\subsubsection{Reference Overlap And Human Similarity Ratings}
\label{app:detailed_results_content_fidelity}
\begin{table}[htbp]
\centering

\caption{\textbf{Reference agreement for command-driven headline generation.} Automatic metrics measure reference overlap or semantic similarity; HJ is the six-point reference-similarity rating described in Appendix~\ref{app:hj}.}
\label{tab:accuracy_results}
\scalebox{.85}{
\begin{tabular}{llcccc}
\hline
\textbf{Category} & \textbf{Model} & \textbf{RG-SU4} & \textbf{BLEU-4} & \textbf{BScore} & \textbf{HJ} \\
\hline
\multirow{3}{*}{Specialized (Personalized)} 
  & PENS-NRMS-T2   & 13.64 & 4.48 & 86.13 & 2.95 \\
  & GTP-TrRMIo     & 21.91 & 10.31 & 88.53 & 2.44 \\
  & SP-Individual  & 19.54 & 8.90 & 86.61 & 2.86 \\
\hline
\multirow{4}{*}{LLMs (2-shot history)} 
  & LLaMA-13B      & 18.31 & 11.85 & 88.76 & 3.05 \\
  & DeepSeek-14B   & 19.57 & 12.68 & 89.43 & 3.03 \\
  & DeepSeek-32B   & \textit{29.42} & \textit{19.31} & \textbf{90.87} & \textit{3.12} \\
  & Qwen-2.5-32B   & 19.22 & 14.31 & 89.17 & 3.04 \\
\hline
\multirow{1}{*}{\textbf{MTSH}}
  & \pertide-Full  & \textbf{41.32} & \textbf{27.64} & \textit{90.84} & \textbf{3.21} \\
\hline
\end{tabular}
}
\end{table}
The improvement extends beyond PerSEval to other forms of reference agreement (Table~\ref{tab:accuracy_results}). \pertide{} reaches 41.32 RG-SU4 and 27.64 BLEU-4, compared with DeepSeek-32B's 29.42 and 19.31. BERTScore is slightly lower, 90.84 versus 90.87. Human reference-similarity ratings average 3.21 versus 3.12 on the six-point scale. These measures provide complementary evidence of agreement with personalized reference headlines; they do not directly establish factual consistency with the article or a user's experienced utility. Appendix~\ref{app:hj} gives the actual judgment task.

\FloatBarrier
\subsection{Transfer And Sparse-Evidence Stress Tests}
\label{app:detailed_results_transfer_sparse}
\FloatBarrier
\subsubsection{Frozen-Core Reuse And Schema Distance}
\label{app:reuse_controls}
\begin{table}[htbp]
\centering
\caption{\textbf{Schema-distance diagnostic for conditional reuse} on 500 MovieLens trajectories from
99 users. Larger distance is associated with higher risk and smaller reciprocal-rank gain.}
\label{tab:iclr_schema_distance}
\setlength{\tabcolsep}{4pt}
\small
\scalebox{1}{
\begin{tabular}{lcc}
\toprule
\textbf{Relation} & \textbf{Spearman $\rho$} & \textbf{User-clustered 95\% CI} \\
\midrule
$d_{\mathrm{sch}}$ vs.\ frozen risk & +0.109 & [+0.002,+0.207] \\
$d_{\mathrm{sch}}$ vs.\ adapted risk & +0.160 & [+0.060,+0.258] \\
$d_{\mathrm{sch}}$ vs.\ frozen RR gain & -0.089 & [-0.169,-0.004] \\
$d_{\mathrm{sch}}$ vs.\ adapted RR gain & -0.114 & [-0.195,-0.029] \\
\bottomrule
\end{tabular}}
\end{table}
The frozen-core protocol holds target rows, candidates, head architecture, and training fixed. The PENS-trained core with a trained MovieLens head achieves 16.344 MRR, versus 1.115 for the random core with a trained head and 1.371 for the trained core with an untrained head (Table~\ref{tab:iclr_frozen_transfer}). The 15.229-point difference, rounded to 15.23, attributes useful structure to source learning while revealing the consumer's role in accessing it. Table~\ref{tab:iclr_transfer} separately compares UniSRec and RecGURU at 500 and 3,000 target trajectories. At 3,000, \pertide{} improves on the strongest baseline by 1.43/2.98/5.26 MRR/nDCG@10/HR@10 points. Target alignment improves all three systems; the two protocols use different slices.

Across 99 users and 500 trajectories, schema-distance correlations with target risk are 0.109 and 0.160, and correlations with reciprocal-rank gain are $-0.089$ and $-0.114$ (Table~\ref{tab:iclr_schema_distance}). Their reported confidence intervals support small associations in the expected directions. The geometric quantities and conditional-loss assumptions in the theoretical bound are not empirically certified by these associations.

\FloatBarrier
\keepcompacttable
\subsubsection{Role Corruption}
\label{app:role_history_controls}
\begin{wraptable}{r}{0.59\textwidth}
\centering
\caption{\textbf{Action-role corruption on fixed PENS trajectories.}
Only positive/negative roles are flipped; trajectories, targets, and candidate sets are held fixed.}
\label{tab:iclr_role_corruption}
\setlength{\tabcolsep}{3pt}
\scalebox{1}{
\begin{tabular}{rccc}
\toprule
\textbf{Role flips} & \textbf{MRR} & \textbf{nDCG@10} & \textbf{HR@10} \\
\midrule
0\% & 12.14 & 16.02 & 32.0 \\
10\% & 10.13 & 13.02 & 25.0 \\
25\% & 9.17 & 11.10 & 22.0 \\
50\% & 7.71 & 8.93 & 15.0 \\
100\% & 2.81 & 4.90 & 10.0 \\
\bottomrule
\end{tabular}}
\end{wraptable}
Correct action labels help the encoder use otherwise identical histories. With trajectories, targets, and candidates fixed, positive/negative role corruption reduces MRR from 12.14 to 2.81 (Table~\ref{tab:iclr_role_corruption}). A 10\% corruption rate already reduces MRR by 2.01 points, and 50\% corruption lowers HR@10 from 32.0 to 15.0. Both the ordering of the target and its inclusion in the top ten deteriorate as supplied roles become less reliable. The intervention therefore supports retaining the action component of each event; it measures predictive dependence on those labels.

\FloatBarrier
\keepcompacttable
\subsubsection{History Length}
\label{app:history_length_control}
\begin{wraptable}{r}{0.59\textwidth}
\centering
\caption{\textbf{History-length sensitivity on the same 500 PENS trajectories.}
The target and 1+699 candidate sets are fixed while the visible history grows from 10 to 40 events.}
\label{tab:iclr_history_length}
\setlength{\tabcolsep}{3pt}
\scalebox{1}{
\begin{tabular}{rccc}
\toprule
\textbf{Events} & \textbf{MRR} & \textbf{nDCG@10} & \textbf{HR@10} \\
\midrule
10 & 11.874 & 14.904 & 27.8 \\
20 & 11.881 & 15.011 & 28.2 \\
30 & 11.909 & 15.091 & 28.4 \\
40 & \textbf{11.926} & \textbf{15.157} & \textbf{28.6} \\
\bottomrule
\end{tabular}}
\end{wraptable}
Additional history offers a small benefit in the separate 500-trajectory PENS study. Extending the visible prefix from 10 to 40 events changes MRR from 11.874 to 11.926 and HR@10 from 27.8 to 28.6 (Table~\ref{tab:iclr_history_length}). The target and candidate set remain fixed while the encoder receives more past events. The modest changes across all three metrics quantify the return from extra history under this protocol. They complement the role-corruption study: changing which action is attached to an event and increasing the number of observed events test different aspects of the history representation.

\FloatBarrier
\subsubsection{Eventized OpenAI--Reddit Summaries}
\label{app:detailed_results_openai_reddit}
\begin{table}[htbp]
\centering

\caption{\textbf{Eventized TL;DR personalization on OpenAI-Reddit.} We treat OpenAI-Reddit as an eventization stress test under surrogate trajectory construction, not as evidence of unconstrained cross-domain user-history transfer.}
\label{tab:rq2_openai_only}
\scalebox{0.95}{
\begin{tabular}{llccc}
\toprule
\textbf{Category} & \textbf{Model} & \textbf{PSE-JSD} & \textbf{PSE-SU4} & \textbf{PSE-METEOR} \\
\midrule

\multirow{4}{*}{LLMs (2-shot)}
& LLaMA-13B     & 0.232 & 0.093 & 0.107 \\
& Zephyr-7B     & 0.214 & 0.087 & 0.104 \\
& Mistral-7B    & 0.226 & 0.088 & 0.103 \\
& DeepSeek-14B  & \textit{\underline{0.243}} & \textit{\underline{0.095}} & \textit{\underline{0.109}} \\
\midrule

\multirow{1}{*}{\textbf{MTSH}}
& \pertide-Full & \textbf{0.285} & \textbf{0.261} & \textbf{0.294} \\
\bottomrule
\end{tabular}
}

\end{table}
The action-on-item interface also accommodates feedback-derived summary trajectories. \pertide{} reaches 0.285/0.261/0.294 PSE-JSD/PSE-SU4/PSE-METEOR, versus DeepSeek-14B's 0.243/0.095/0.109 (Table~\ref{tab:rq2_openai_only}). This extends the evaluated generation setting beyond PENS. The chronology and positive/negative roles are constructed from feedback, and augmentation occurs separately within the native training and evaluation pools; the result concerns that eventized protocol.

\FloatBarrier
\subsubsection{Sparse Positive-Action Histories}
\label{app:detailed_results_sparse}
\begin{table}[htbp]
\centering

\caption{\textbf{Robustness under sparse positive-action evidence.} We evaluate each model on a click-only sparse subset and report the original score (left) and sparse-action score (right), with the relative drop in parentheses. Smaller drops indicate higher robustness when positive evidence is limited in the action-on-item trajectory.}
\label{tab:sparse-trajectory-results}
\setlength{\tabcolsep}{3pt}
\scalebox{0.85}{
\begin{tabular}{lccc}
\toprule
\textbf{Model} & \textbf{PSE-JSD} & \textbf{PSE-SU4} & \textbf{PSE-METEOR} \\
\midrule
DeepSeek-R1-32B (2-shot)
& $0.263 \rightarrow 0.152$ \, (\textminus42.21\%)
& $0.125 \rightarrow 0.087$ \, (\textminus30.4\%)
& $0.174 \rightarrow 0.092$ \, (\textminus47.2\%) \\
Gemini-2.5-Flash (2-shot)
& $0.222 \rightarrow 0.122$ \, (\textminus45.0\%)
& $0.104 \rightarrow 0.061$ \, (\textminus41.3\%)
& $0.124 \rightarrow 0.070$ \, (\textminus43.5\%) \\
Best Baseline (GTP)
& $0.024 \rightarrow 0.016$ \, (\textminus33.3\%)
& $0.017 \rightarrow 0.009$ \, (\textminus47.1\%)
& $0.019 \rightarrow 0.011$ \, (\textminus42.1\%) \\
\midrule
\pertide
& $0.366 \rightarrow 0.211$ \, (\textminus42.35\%)
& $0.341 \rightarrow 0.244$ \, (\textminus28.45\%)
& $0.355 \rightarrow 0.262$ \, (\textminus26.2\%) \\
\bottomrule
\end{tabular}
}

\end{table}
\pertide{} retains 0.211 PSE-JSD under sparse positive evidence, compared with 0.366 originally, a 42.35\% drop (Table~\ref{tab:sparse-trajectory-results}). Its relative retention is stronger on PSE-SU4 and PSE-METEOR, with reported drops of 28.45\% and 26.2\%. For DeepSeek-R1-32B, the JSD change from 0.263 to 0.152 corresponds to 42.21\%, and the reported SU4/METEOR drops are 30.4\%/47.2\%. GTP changes from 0.017 to 0.009 on PSE-SU4. The sparse scores establish robustness under the particular intervention, while causal attribution to a specific trace would require the corresponding controlled ablation.

\FloatBarrier
\subsection{Dialogue As An Additional Application And Boundary}
\label{app:mpchat boundary}
MPChat tests the encoder on dialogue histories with inferred stance roles. The generation result demonstrates a useful extension: \pertide{} exceeds Discretized SSM by 8.48/7.08/9.09/3.52 points in RG-2/RG-L/BLEU-4/BERTScore (Table~\ref{tab:dialogue_accuracy_results}). The event interface and decoder-conditioning route can therefore operate with dialogue-derived events as well as explicit news interactions.
\begin{table}[htbp]
\centering

\caption{\textbf{MPChat broader-applicability stress test: dialogue generation.}
AC denotes action-conditioned evidence injection, CR denotes command-conditioned readout, UFI denotes user fused injection, and Full denotes \pertide{} encoder with UGI decoder. These results test dialogue eventization under imperfect preference logs rather than serving as primary evidence for schema-level generalizability; $\mu\pm\sigma$ denotes mean and standard deviation over three evaluations on uniformly random test-set subsamples.}
\label{tab:dialogue_accuracy_results}
\scalebox{.8}{
\begin{tabular}{llcccc}
\hline
\textbf{Category} & \textbf{Model} & \textbf{RG-2} & \textbf{RG-L} & \textbf{BLEU-4} & \textbf{BScore}\\
\hline

\multirow{3}{*}{LLMs (2-shot)} 
  & DeepSeek-32B        & 9.71 & 8.46 & 5.61 & 71.42 \\
  & Qwen-2.5-32B        & 9.43 & 8.21 & 4.75 & 68.83 \\
  & MS/Phi-Instruct      & 9.49 & 8.05 & 3.21 & 69.52 \\
\hline

\multirow{2}{*}{PAH}
  & Meanpool-B           & $8.54\pm1.33$ & $6.31\pm1.21$ & $3.43\pm2.15$ & $62.18\pm1.45$ \\
  & Maxpool-B            & $7.21\pm1.12$ & $4.12\pm1.43$ & $2.55\pm2.11$ & $61.42\pm1.65$ \\
\hline

\multirow{1}{*}{MDH}
  & GRU-B                & $11.11\pm1.31$ & $10.82\pm1.43$ & $4.17\pm0.78$ & $63.18\pm2.43$ \\
\hline

\multirow{2}{*}{LDH}
  & D-FM Attention        & $8.25\pm0.89$ & $6.07\pm1.32$ & $3.18\pm1.44$ & $62.05\pm2.78$ \\
  & Discretized SSM       & $12.17\pm1.32$ & $11.44\pm1.56$ & $6.48\pm1.73$ & $74.83\pm1.66$ \\
\hline

\multirow{4}{*}{\textbf{MTSH}}
  & \pertide\ w/o AC      & $14.38\pm1.32$ & $12.73\pm1.42$ & $8.33\pm0.93$ & $73.89\pm1.65$ \\
  & \pertide\ w/o CR      & $13.15\pm1.77$ & $10.91\pm1.14$ & $7.34\pm1.55$ & $71.64\pm2.42$ \\
  & \pertide\ (UFI)       & $18.22\pm1.55$ & $15.73\pm1.21$ & $11.58\pm1.36$ & $75.23\pm2.51$ \\
  & \pertide-Full         & $\textbf{20.65}\pm\textbf{1.23}$ & $\textbf{18.52}\pm\textbf{2.31}$ & $\textbf{15.57}\pm\textbf{2.06}$ & $\textbf{78.35}\pm1.55$ \\
\hline
\end{tabular}
}
\end{table}

\textbf{Response prediction.}
\pertide{} reaches 45.64/54.25 R@1/MRR, while the text-only SBERT comparator reaches 51.32/64.76 (Table~\ref{tab:response_prediction_results}). A multimodal CLIP configuration with full persona reaches 72.65/81.12. Because a text-only baseline already exceeds \pertide{}, missing images or persona cannot fully explain its shortfall. Comparisons across different inputs measure performance differences, not a causal upper bound on the value of those inputs.
\begin{table}[htbp]
\centering

\caption{\textbf{MPChat broader-applicability stress test: response prediction.}
Dialogue eventization is compared with text-only and multimodal consumers. Scores use the 0--100 scale; $\mu\pm\sigma$ denotes the reported mean and standard deviation over three uniformly sampled 25\% test subsets, shared across methods.}
\label{tab:response_prediction_results}
\scalebox{0.9}{
\begin{tabular}{lcc}
\hline
\textbf{Model} & \textbf{R@1} & \textbf{MRR} \\
\hline

SBERT (zero-shot)                    & 35.67 & 45.75 \\
SBERT                                & $51.32\pm1.32$ & $64.76\pm0.92$ \\

SBERT+ViT ($c$)                        & $57.70\pm0.71$ & $69.39\pm0.40$ \\
SBERT+ViT ($c,p^i$)                    & $58.55\pm0.70$ & $70.17\pm0.45$ \\
SBERT+ViT ($c,p^t$)                    & $64.32\pm0.64$ & $74.31\pm0.45$ \\
SBERT+ViT ($c,p^{i+t}$, Full)          & ${65.29}\pm{0.66}$ & ${75.08}\pm{0.43}$ \\
\hline

SBERT+CLIP ($c$)                       & $59.68\pm0.70$ & $70.99\pm0.49$ \\
SBERT+CLIP ($c,p^i$)                   & $60.30\pm0.50$ & $71.47\pm0.27$ \\
SBERT+CLIP ($c,p^t$)                   & $64.32\pm0.75$ & $74.33\pm0.57$ \\
SBERT+CLIP ($c,p^{i+t}$, Full)         & ${65.43}\pm{0.42}$ & ${75.19}\pm{0.32}$ \\
\hline

CLIP+CLIP ($c^i$)                      & $40.85\pm0.64$ & $54.32\pm0.30$ \\
CLIP+CLIP ($c$)                        & $69.11\pm0.74$ & $78.22\pm0.49$ \\
CLIP+CLIP ($c,p^i$)                    & $69.87\pm0.40$ & $78.85\pm0.27$ \\
CLIP+CLIP ($c,p^t$)                    & $72.13\pm0.61$ & $80.72\pm0.38$ \\
CLIP+CLIP ($c,p^{i+t}$, Full)          & ${72.65}\pm{0.38}$ & ${81.12}\pm{0.26}$ \\
\hline

DeepSeek-R1-32B ($c,p^t$)                       & 41.45 & 54.42 \\
Qwen2.5-32B ($c,p^t$)                           & 41.32 & 52.17 \\
\hline

\pertide-Full ($c^t$) (ours)           & $45.64\pm0.45$ & $54.25\pm0.67$ \\
\hline
\end{tabular}
}

\end{table}

\textbf{Speaker identification.}
\pertide{} reaches 41.77/47.19 R@1/MRR (Table~\ref{tab:speaker_identification_results}). Text persona improves several comparator configurations, and CLIP with multimodal persona reaches 62.17/73.08. The results motivate richer representation of discourse and speaker information while documenting where the current encoder trails specialized alternatives.
\begin{table}[htbp]
\centering

\caption{\textbf{MPChat broader-applicability stress test: speaker identification.}
We report speaker identification under dialogue eventization and compare against systems using persona and multimodal context; $c$: dialogue context, $c^t$: text-only context, $r$: response, $\mathcal{P}_c$: candidate set with profile information}
\label{tab:speaker_identification_results}
\setlength{\tabcolsep}{3.5pt}
\scalebox{0.85}{
\begin{tabular}{llcc}
\hline
\textbf{Category} & \textbf{Model} & \textbf{R@1} & \textbf{MRR} \\
\hline

\multirow{1}{*}{Text Only}
& SBERT ($c^t,r,\mathcal{P}^t_c$)
& $56.47\pm0.58$ & $67.92\pm0.52$ \\
\hline

\multirow{3}{*}{SBERT+ViT}
& $c,r,\mathcal{P}^i_c$
& $19.56\pm0.64$ & $35.84\pm0.45$ \\
& $c,r,\mathcal{P}^t_c$
& $56.87\pm0.60$ & $68.33\pm0.37$ \\
& $c,r,\mathcal{P}^{i+t}_c$ (Full)
& ${57.28\pm0.44}$ & ${68.86\pm0.33}$ \\
\hline

\multirow{3}{*}{SBERT+CLIP}
& $c,r,\mathcal{P}^i_c$
& $25.71\pm0.49$ & $42.47\pm0.34$ \\
& $c,r,\mathcal{P}^t_c$
& $56.63\pm0.66$ & $68.15\pm0.42$ \\
& $c,r,\mathcal{P}^{i+t}_c$ (Full)
& ${57.24\pm0.63}$ & ${68.69\pm0.39}$ \\
\hline

\multirow{3}{*}{CLIP+CLIP}
& $c,r,\mathcal{P}^i_c$
& $44.27\pm0.66$ & $59.04\pm0.35$ \\
& $c,r,\mathcal{P}^t_c$
& $59.89\pm0.71$ & $70.87\pm0.53$ \\
& $c,r,\mathcal{P}^{i+t}_c$ (Full)
& ${62.17\pm0.56}$ & ${73.08\pm0.35}$ \\
\hline

\multirow{4}{*}{LLMs (2-shot)}
& DeepSeek-32B ($c,r,\mathcal{P}^{t}_c$) & 48.13 & 54.37 \\
& Qwen-2.5-32B ($c,r,\mathcal{P}^{t}_c$) & 45.31 & 51.62 \\
& DeepSeek-32B ($c,r$) & 28.56 & 31.43 \\
& Qwen-2.5-32B ($c,r$) & 25.61 & 29.67 \\
\hline
\multirow{1}{*}{\textbf{MTSH (ours)}}
& \pertide-Full ($c^t,r$)
& $41.77\pm0.82$ & $47.19\pm1.30$ \\
\hline

\end{tabular}
}
\end{table}

The dialogue results separate applicability from uniform superiority: inferred events support improved generation in the reported comparison, while response prediction and speaker identification remain more difficult. The transfer theory in Appendix~\ref{app:schema_generalizability} makes compatibility explicit; it does not measure the effect of missing persona or multimodal evidence in these experiments.

\section{Human Reference-Similarity Evaluation (PENS)}\label{app:hj}
To assess agreement with personalized reference headlines, we conduct a survey-based evaluation in which each participant\footnote{Demography: 65 male, 22 female graduate students from Computer Science, Humanities, Mathematics, \& Natural Sciences.} is shown (i) the gold-reference summary and (ii) one model-generated summary for the same PENS news article. Model identities are hidden to mitigate brand and expectation bias, and participants rate reference–generation similarity on a 1--6 scale (1: low, 6: high). We evaluate \pertide\ against representative strong baselines spanning specialized personalized models (PENS-NRMS-T2, GTP-TrRMIo, SP-Individual) and 2-shot history-conditioned LLMs (DeepSeek-14B, DeepSeek-32B, Qwen-2.5-32B).

We report average human-judgment (HJ) scores alongside automatic metrics (RG-SU4, BLEU-4, BScore) in Table~\ref{tab:accuracy_results}. \pertide\ attains the highest HJ score (3.21), exceeding the best 2-shot LLM (DeepSeek-32B: 3.12) and all specialized baselines (e.g., PENS-NRMS-T2: 2.95; GTP-TRRMIO: 2.44; SP-Individual: 2.86). 

Participation was voluntary, responses were de-identified before analysis, and the study involved only minimal-risk similarity judgments over news headlines. We collected only aggregate demographic information and did not collect identifiable participant information or private user histories. The study followed the applicable institutional approval, exemption, or equivalent review process under anonymized reporting.

\begin{figure*}[t]
\centering
\includegraphics[width=0.45\linewidth]{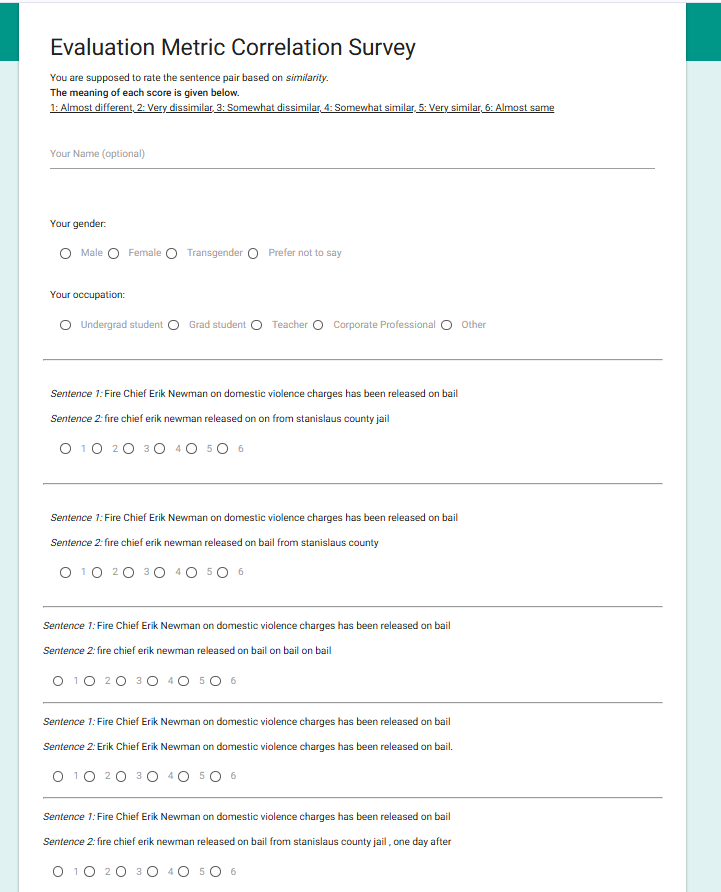}
\caption{HJ-annotation survey format}.
\label{fig:survey}
\end{figure*}

\begin{figure*}[t]
\centering
\includegraphics[width=0.75\linewidth]{Figures/prompts.pdf}
\caption{Prompt Templates for LLMs for (a)Personalized Summarization, (b)Dialogue Generation}.
\label{fig:prompts}
\end{figure*}

\section{LLM Prompting Protocol}\label{app:app_llm_prompting}

\textbf{Overview.}
We use prompted LLMs for generative tasks using temperature $0.4$ and nucleus sampling with $\texttt{top\_p}=0.9$ for all LLM runs.

\subsection{LLMs As Summarizers (Personalized Headline Generation)}\label{app:app_llm_summarizers_writeup}
Each instance is a conditional generation problem, where we prompt the LLMs using user-history as the context and a single query document body and instruct the LLM to generate one personalized headline.
The instruction ties personalization to the history (clicked topics should be emphasized; skipped topics should be deprioritized) while requiring factual consistency with the query document.
We constrain the output to a single field (\texttt{Headline: ...}) and ignore any additional text.

\subsection{LLMs As Dialogue Generators}\label{app:app_llm_dialogue}
We prompt the LLMs using the entire dialogue discourse as the context and instruct the LLM to generate one next response.
The instruction ties personalization to factual consistency with the persona information and the discourse context.
% We constrain the output to a single field (\texttt{Response: ...}) and ignore any additional text.

% Each instance is a listwise ranking problem: the prompt provides the user-history prefix, the query document, and a candidate set of summaries with stable IDs.
% The LLM outputs an ordered top-$K$ list of candidate IDs ($K=10$) ranked by user-specific relevance.
% We enforce a single-line schema (\texttt{Ranked List: [...]}) to enable deterministic rank extraction.

\section{Assets And Usage Terms}\label{app:licence-agreement}
\paragraph{Roles of pretrained models.}
The prompted LLM baselines use fixed pretrained weights. \pertide{} instead partially tunes DistilGPT2 and its contextualization modules as specified in Appendix~\ref{app:decoder-variants}; its item embeddings are precomputed. The DistilGPT2 model card lists Apache 2.0 (\url{https://huggingface.co/distilbert/distilgpt2}). DeepSeek-R1 models use the provider's MIT terms (\url{https://github.com/deepseek-ai/DeepSeek-R1/blob/main/LICENSE}); Mistral-7B-Instruct-v0.2 lists Apache 2.0 (\url{https://huggingface.co/mistralai/Mistral-7B-Instruct-v0.2}); Llama-2 uses its Community License Agreement (\url{https://huggingface.co/meta-llama/Llama-2-13b-chat-hf}); and Zephyr-7B-beta lists MIT (\url{https://huggingface.co/HuggingFaceH4/zephyr-7b-beta}). Hosted-model use is additionally governed by the relevant provider's service terms. Model citations and baseline descriptions identify their roles; a provider license does not by itself certify compliance of every downstream use.

\paragraph{Datasets.}
MovieLens usage conditions are supplied by GroupLens (\url{https://grouplens.org/datasets/movielens/1m/}). MIND and PENS are distributed for research under the Microsoft Research License Terms linked from their official pages (\url{https://msnews.github.io/}; \url{https://msnews.github.io/pens.html}). OpenAI's summarization release includes its repository license (\url{https://github.com/openai/summarize-from-feedback/blob/master/LICENSE}); MPChat provides CC BY 4.0 terms in its release (\url{https://github.com/ahnjaewoo/MPCHAT/blob/main/LICENSE}). These assets retain their original attribution and access conditions. The paper describes derived event trajectories and does not grant redistribution rights to the underlying data.

\end{document}